\newcommand*{\defeq}{\stackrel{\text{def}}{=}}
\title{Energy-Guided Continuous Entropic Barycenter Estimation for General Costs}
\theoremstyle{plain}
\newtheorem{theorem}{Theorem}[section]
\newtheorem{proposition}[theorem]{Proposition}
\theoremstyle{definition}
\theoremstyle{remark}
\DeclareMathOperator*{\argmax}{arg\,max}
\newcolumntype{C}[1]{>{\PreserveBackslash\centering}p{#1}}
\newcolumntype{R}[1]{>{\PreserveBackslash\raggedleft}p{#1}}
\newcolumntype{L}[1]{>{\PreserveBackslash\raggedright}p{#1}}
\newcolumntype{x}[1]{>{\centering\arraybackslash\hspace{0pt}}p{#1}}
\newcommand{\PreserveBackslash}[1]{\let\temp=\\#1\let\\=\temp}
\newcommand{\KL}[2]{\text{KL}\left(#1\Vert #2\right)}
\def\eps{{\epsilon}}
\newcommand{\bbE}{\mathbb{E}} 
\newcommand{\bbP}{\mathbb{P}} 
\newcommand{\bbQ}{\mathbb{Q}} 
\newcommand{\bbR}{\mathbb{R}}
\newcommand{\cL}{\mathcal{L}} 
\newcommand{\cN}{\mathcal{N}}
\newcommand{\cX}{\mathcal{X}} 
\newcommand{\cF}{\mathcal{F}} 
\newcommand{\cY}{\mathcal{Y}}
\newcommand{\cP}{\mathcal{P}}
\newcommand{\cPac}{\cP_{\text{ac}}}
\newcommand{\cC}{\mathcal{C}}
\newcommand{\cR}{\mathcal{R}}
\newcommand{\cZ}{\mathcal{Z}}
\newcommand{\cG}{\mathcal{G}}
\newcommand{\OT}{\text{OT}}
\newcommand{\EOT}{\text{EOT}}
\newcommand{\E}[1]{{\underset{#1}{\bbE}}}
\newcommand{\red}[1]{{\color{red} #1}}
\definecolor{mygreen}{RGB}{0, 160, 0}
\newcommand{\green}[1]{{\color{mygreen} #1}}
\definecolor{myyellow}{RGB}{255, 200, 0}
\newcommand{\yellow}[1]{{\color{myyellow} #1}}
\definecolor{myfiol}{RGB}{100, 41, 204}
\author{ \hspace{-8mm}Alexander Kolesov\\
	\hspace{-8mm}Skolkovo Institute of Science and Technology \\
        \hspace{-8mm} Artificial Intelligence Research Institute\\
        \hspace{-8mm}Moscow, Russia\\
	\hspace{-8mm}\texttt{a.kolesov@skoltech.ru} 
        \And
         Petr Mokrov \& Igor Udovichenko\\
         Skolkovo Institute of Science and Technology\\
         Moscow, Russia\\
\texttt{\{petr.mokrov, i.udovichenko\}@skoltech.ru}\\
	\And
    \hspace{-20mm}Milena Gazdieva\\
    \hspace{-20mm}Skolkovo Institute of Science and Technology\\
    \hspace{-20mm}Artificial Intelligence Research Institute
    \\
    \hspace{-20mm}Moscow, Russia\\
\hspace{-20mm}\texttt{m.gazdieva@skoltech.ru}\\
    \And
	Gudmund Pammer \\
    Graz University of Technology\\
	Graz, Austria\\
	\texttt{gudmund.pammer@tugraz.at} \\
	\AND
	Anastasis Kratsios \\
	Vector Institute, McMaster University \\
	Ontario, Canada \\
	\texttt{kratsioa@mcmaster.ca}
	\And
	Evgeny Burnaev \& Alexander Korotin  \\
	Skolkovo Institute of Science and Technology\\
	Artificial Intelligence Research Institute\\
	Moscow, Russia \\
	\texttt{\{e.burnaev, a.korotin\}@skoltech.ru}
}
\begin{document}

\maketitle

\vspace*{-7mm}
\begin{abstract}
\vspace*{-1.5mm}Optimal transport (OT) barycenters are a mathematically grounded way of averaging probability distributions while capturing their geometric properties. In short, the barycenter task is to take the average of a collection of probability distributions w.r.t.\ given OT discrepancies. We propose a novel algorithm for approximating the continuous Entropic OT (EOT) barycenter for arbitrary OT cost functions. Our approach is built upon the dual reformulation of the EOT problem based on weak OT, which has recently gained the attention of the ML community. Beyond its novelty, our method enjoys several advantageous properties: (i) we establish quality bounds for the recovered solution; (ii) this approach seamlessly interconnects with the Energy-Based Models (EBMs) learning procedure enabling the use of well-tuned algorithms for the problem of interest; (iii) it provides an intuitive optimization scheme avoiding min-max, reinforce and other intricate technical tricks. For validation, we consider several low-dimensional scenarios and image-space setups, including \textit{non-Euclidean} cost functions. Furthermore, we investigate the practical task of learning the barycenter on an image manifold generated by a pretrained generative model, opening up new directions for real-world applications. Our code is available at \url{https://github.com/justkolesov/EnergyGuidedBarycenters}.
\end{abstract}
 
\begin{figure*}[!h]
\vspace{-5mm}
\begin{subfigure}[b]{1\linewidth}
\centering
\hspace{10.5mm}
\includegraphics[width=0.6\linewidth]{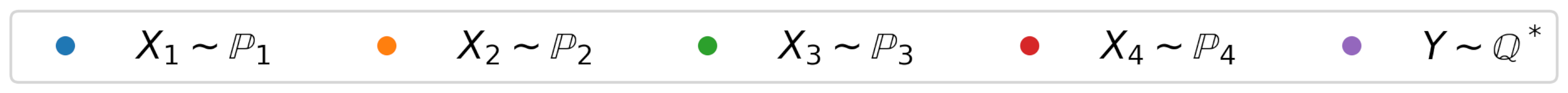}
\label{fig:map-legend}
\end{subfigure}
\begin{subfigure}[t]{0.25\linewidth}
\centering
\includegraphics[width=1.07\linewidth]{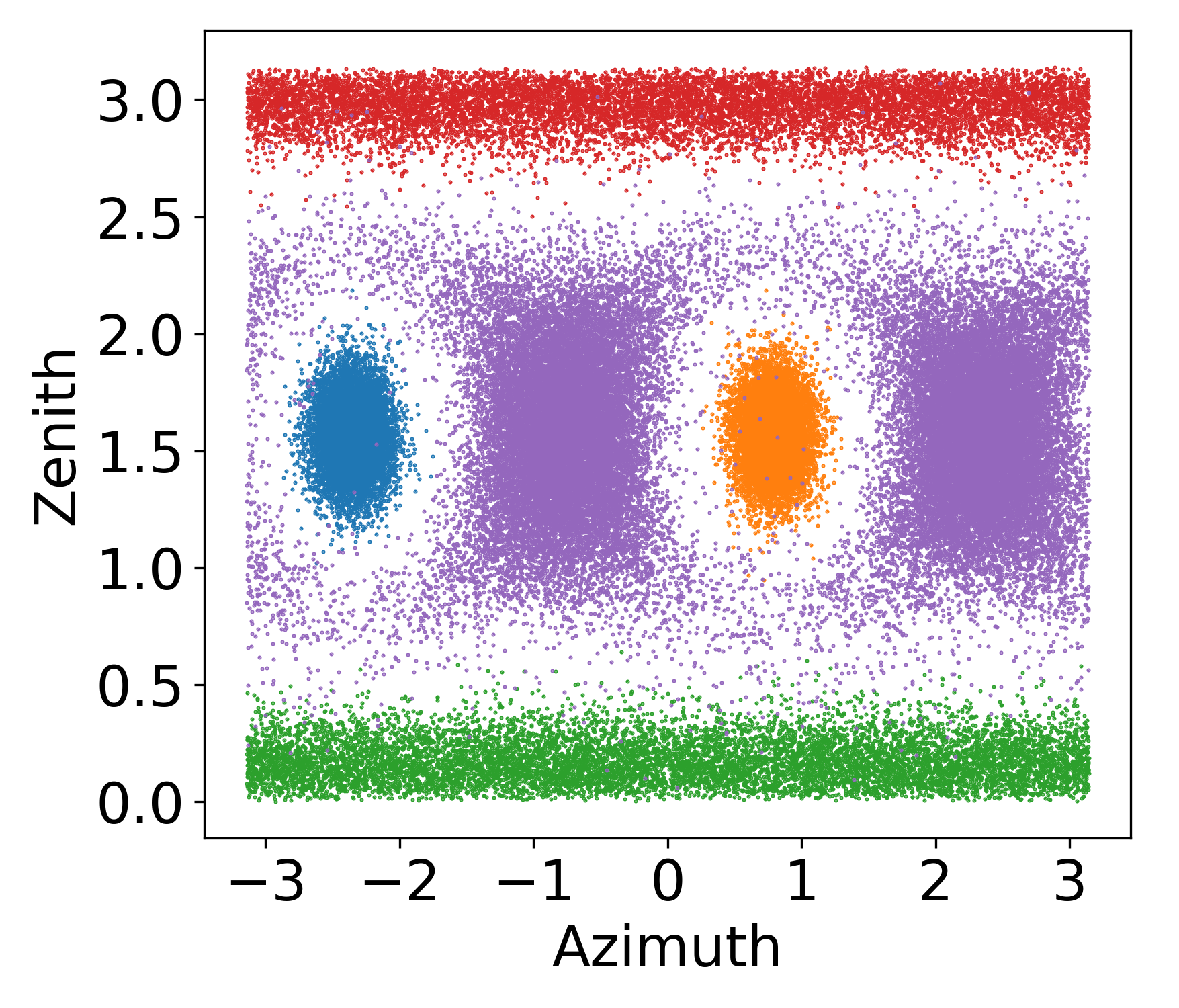}
\caption{\centering The unfolded sphere.}
\vspace{-7mm}
\label{fig:sphere-general-gt}
\end{subfigure}
\vspace{-1mm}\hspace{0.3mm}\begin{subfigure}[t]{0.7\linewidth}
\centering
\includegraphics[width=1.\linewidth]{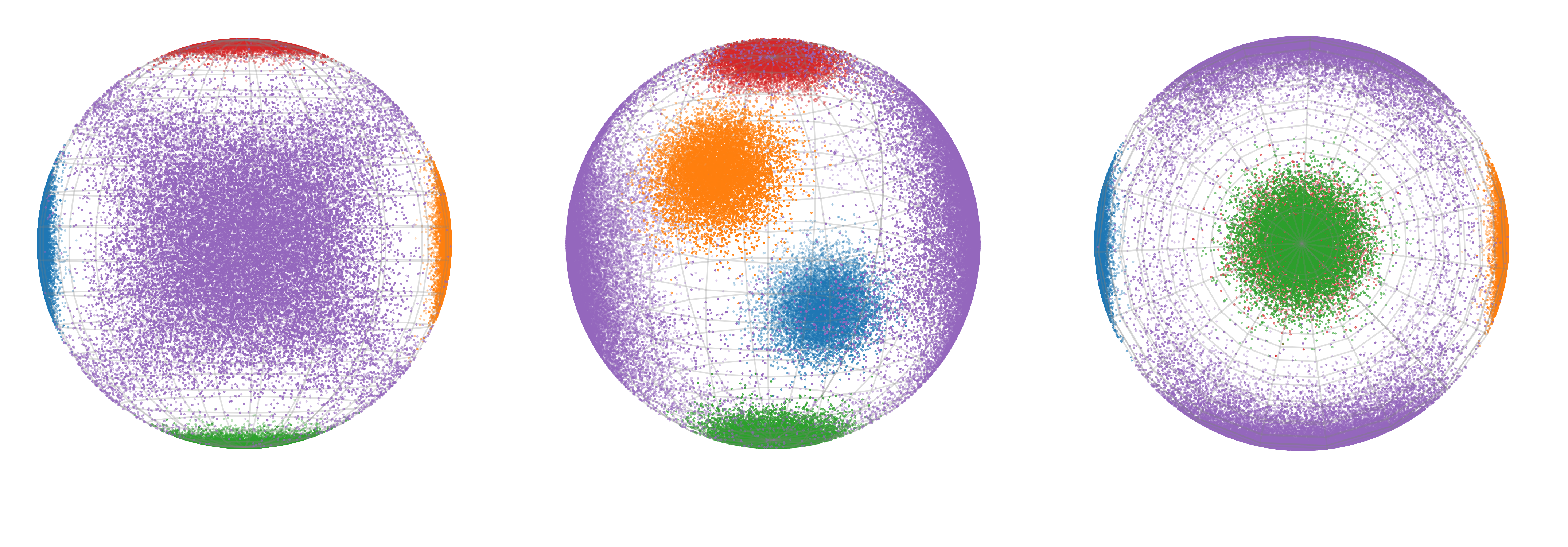}
\caption{\centering The sphere viewed from different viewpoints. }
\label{fig:sphere_angles}
\vspace{0mm}
\end{subfigure}
\vspace{1mm}
\caption{\centering \small Entropic barycenter $\mathbb{Q}^{*}$ \eqref{EOT_bary_primal} of $N=4$ von Mises distributions $\mathbb{P}_{n}$ on the sphere (see \wasyparagraph\ref{sec-exp-toy}) estimated with our barycenter solver (Algorithm \ref{algorithm-main}). The used transport costs are $c_{k}(x_k, y) = \frac12 \bigl( \arccos \left<x_k, y\right> \bigr)^2$. 
}
\label{fig:sphere}
\vspace{-3mm}
\end{figure*}

\vspace{-5mm}\section{Introduction}\vspace{-3mm}
\label{sec:intro}
Averaging is a fundamental concept in mathematics and plays a central role in numerous applications. While it is a straightforward operation when applied to scalars or vectors in a linear space, the situation complicates when working in the space of probability distributions. Here, simple convex combinations can be inadequate or even compromise essential geometric features, which necessitates a different way of taking averages. To address this issue, one may carefully select a measure of distance that properly captures similarity in the space of probabilities. Then, the task is to find a procedure which identifies a `center' that, on average,  is closest to the reference distributions.
\vspace*{-5.5mm}\newline
\par One good choice for comparing and averaging probability distributions is provided by the family of Optimal Transport (OT) discrepancies \cite{villani2009optimal}. 
They have clear geometrical meaning and practical interpretation \cite{santambrogio2015optimal, solomon2018optimal}. 
The corresponding problem of averaging probability distributions using OT discrepancies is known as the OT barycenter problem \cite{agueh2011barycenters}. 
OT-based barycenters find application in various practical domains: domain adaptation \cite{montesuma2021wasserstein}, shape interpolation \cite{solomon2015convolutional}, Bayesian inference \cite{srivastava2015wasp, srivastava2018scalable}, text scoring \cite{colombo-etal-2021-automatic}, style transfer \cite{pmlr-v108-mroueh20a}, reinforcement learning \cite{metelli2019propagating}.
\vspace*{-4.5mm}\newline
\par Over the past decade, the substantial demand from practitioners sparked the development of various methods tackling the barycenter problem.
The research community's initial efforts were focused on the discrete OT barycenter setting, see Appendix~\ref{app:broader-related-works} for more details.
The \textbf{continuous setting} turns out to be even more challenging, with only a handful of recent works devoted to this setup \cite{li2020continuous, cohen2020estimating, korotin2021continuous,korotin2022wasserstein,  fan2021scalable, noble2023treebased, chi2023variational}. 
Most of these works are devoted to specific OT cost functions, e.g., deal with $\ell_2^{2}$ barycenters \cite{korotin2021continuous,korotin2022wasserstein,  fan2021scalable, noble2023treebased}; while others require non-trivial \textit{a priori} selections \cite{li2020continuous} and have limiting expressivity and generative ability \cite{cohen2020estimating, chi2023variational}, see \S\ref{sec:related-works} for a detailed discussion.

\textbf{Contributions}. We propose a novel approach for solving Entropy-regularized OT (EOT) barycenter problems, which alleviates the aforementioned limitations of existing continuous OT solvers.
\vspace{-2mm}
\begin{enumerate}[leftmargin=5mm]
    \itemsep-0.2em
    \item We reveal an elegant reformulation of the EOT barycenter problem by combining weak dual form of EOT with the congruence condition (\S \ref{subsec:deriving-opt-objective}); we derive a simple optimization procedure which closely relates to the standard training algorithm of Energy-Based models (\S \ref{sec-algorithm}).
    \item We establish the generalization bounds as well as the universal approximation guarantees for our recovered EOT plans, which push the reference distributions to the barycenter (\S \ref{subsec:bounds&approx}).
    \item We validate the applicability of our approach on various toy and large-scale setups, including the RGB image domain (\S \ref{sec:experiments}).  In contrast to previous works, we also pay attention to non-Euclidean OT costs. Specifically, we conduct a series of experiments looking for a barycenter on an image manifold of a pretrained GAN. In principle, the image manifold support may contribute to the interpretability and plausibility of the resulting barycenter distribution in downstream tasks.
\end{enumerate}
\vspace{-2mm}
\textbf{Notations.} We write $\overline{K} = \{1, 2, \dots, K\}$.
Throughout the paper $\cX\subset \bbR^{D'}, \cY \subset \bbR^D$ and $\cX_k \subset \bbR^{D_k}$ ($k \in \overline{K}$) are compact subsets of Euclidean spaces. 
Continuous functions on $\cX$ are denoted as $\cC(\cX)$. Probability distributions on $\cX$ are $\cP(\cX)$. Absolutely continuous probability distributions on $\cX$ are denoted by $\cPac(\cX)\subset \cP(\cX)$. Given $\bbP \in \cP(\cX), \bbQ \in \cP(\cY)$, we use $\Pi(\bbP, \bbQ)$ to designate the set of \textit{transport plans}, i.e., probability distributions on $\cX \times \cY$ with the first and second marginals given by $\bbP$ and $\bbQ$, respectively. The density of $\bbP \in \cPac(\cX)$ w.r.t.\ the Lebesgue measure is denoted by $\dv{\bbP(x)}{x}$.

\vspace{-2mm}

\section{Background}
\vspace{-1mm}
\label{sec:background}
First, we recall the formulations of EOT (\S\ref{sec-background-eot}) and the barycenter problem (\S\ref{sec-barycenter-entropic}). 
Next, we clarify the computational setup of the considered EOT barycenter task (\S\ref{subsec:computational_aspects}).

\vspace{-3mm}\subsection{Entropic Optimal Transport}
\vspace*{-2mm}
\label{sec-background-eot}

Consider distributions $\bbP \in \cPac(\cX)$, $\bbQ \in \cPac(\cY)$, a continuous cost function $c: \cX \times \cY \rightarrow \bbR$ and a regularization parameter $\epsilon > 0$. The \textit{entropic optimal transportation} (EOT) problem between $\bbP$ and $\bbQ$ \cite{chizat2023doubly, mokrov2024energyguided} consists of finding a minimizer of
\vspace{-1mm}\begin{gather}
    \EOT_{c, \epsilon}(\bbP, \bbQ) \!\defeq \!\!\!\!\!\min\limits_{\pi \in \Pi(\bbP, \bbQ)}\!\!\Big\{ \E{(x, y)\sim\pi} \!\!\!\!c(x, y) - \epsilon \!\! \E{x\sim \bbP} \!H(\pi(\cdot\vert x))\!\Big\}.\label{EOT_primal}
\end{gather}\vspace{-2mm}\newline
Note that \eqref{EOT_primal} is not the only way to formulate EOT. One more popular and equivalent formulation \cite{cuturi2013sinkhorn,peyre2019computational,genevay2019entropy} substitutes the conditional entropy term $\bbE_{x\sim \bbP} H(\pi(\cdot\vert x))$ in \eqref{EOT_primal} with full entropy $H(\pi)$.

A minimizer $\pi^* \in \Pi(\bbP, \bbQ)$ of \eqref{EOT_primal} is called the EOT plan; its existence and uniqueness are guaranteed, see, e.g., \cite[Th. 3.3]{clason2021entropic}.
In practice, we usually do not require the EOT plan $\pi^*$ but its conditional distributions $\pi^*(\cdot \vert x) \in \cP(\cY)$ as they prescribe how points $x \in \cX$ are stochastically mapped to $\cY$ \cite[\S 2]{gushchin2023building}. We refer to $\pi^*(\cdot \vert x)$ as the \textit{conditional plans}. 

\vspace{-1mm}\textbf{Weak OT dual formulation of the EOT problem}. 
The EOT problem permits several dual formulations. In our paper, we use the one derived from the weak OT theory \cite[Theorem 9.5]{gozlan2017kantorovich}:
\begin{gather}
    \EOT_{c, \epsilon}(\bbP, \bbQ) =
    \sup\limits_{f \in \cC(\cY)} \Big\{ \E{x \sim \bbP} f^{C}(x) + \E{y \sim \bbQ} f(y) \Big\}, \label{weak_dual_objective}
\end{gather}\vspace{-3mm}\newline
where $f^{C} : \cX \rightarrow \bbR$ is the so-called \textbf{weak} entropic $c$-transform \cite[Eq. 1.2]{backhoff2019existence} of the function (\textit{potential}) $f$. The transform is defined by
\begin{gather}
    f^{C}(x) \defeq \min\limits_{\mu \in \cP(\cY)} \!\Big\{ \E{y \sim \mu} c(x, y) - \epsilon H(\mu)  -  \!\!\E{y \sim \mu} f(y) \Big\}. \label{weak_C_transform_def}
\end{gather}\vspace{-2mm}\newline
We use the capital $C$ in $f^{C}$ to distinguish the weak transform from the classic $c$-transform \cite[\S 1.6]{santambrogio2015optimal} or $(c,\epsilon)$-transform \cite[\S 2]{marino2020optimal}. 
In particular, formulation \eqref{weak_dual_objective} is not to be confused with the conventional EOT dual, see \cite[Appendix A]{mokrov2024energyguided}. 

For each $x \in \cX$, the minimizer $\mu_x^f\in\mathcal{P}(\mathcal{Y})$ of the weak $c$-transform \eqref{weak_C_transform_def} exists and is unique. 
Its density has particular form \cite[Theorem 1]{mokrov2024energyguided}. Let $Z_{c}(f, x)\! \defeq \!\int_{\cY} \exp\big(\frac{f(y) - c(x, y)}{\epsilon}\big) \dd y$, then
\vspace{-0.5mm}\begin{gather}
    \dv{\mu_{x}^f(y)}{y} \defeq \frac{1}{Z_{c}(f, x)}\exp \left(\frac{f(y) - c(x, y)}{\epsilon}\right). \label{cond_plan_analytic}
\end{gather}

\vspace{-2mm}By substituting \eqref{cond_plan_analytic} into \eqref{weak_C_transform_def} and carrying out straightforward 
manipulations, we arrive at an explicit formula $f^{C}(x) = - \epsilon \log Z_{c}(f, x)$, see \cite[Equation (14)]{mokrov2024energyguided}.

\vspace{-3mm}\subsection{Entropic OT Barycenter}\vspace{-2mm}
\label{sec-barycenter-entropic}

Consider distributions $\bbP_k \in \cPac(\cX_k)$ and continuous cost functions $c_k(\cdot, \cdot): \cX_k \times \cY \rightarrow \bbR$ for $k \in \overline{K}$. 
Given weights $\lambda_k > 0$ with $\sum_{k = 1}^{K} \lambda_k = 1$, the EOT Barycenter problem is:
\vspace*{-1.5mm}
\begin{gather}
    \mathcal{L}^{*}\defeq\!\!\!\! \inf\limits_{\bbQ \in \cP(\cY)} \sum\limits_{k = 1}^{K} \lambda_k \EOT_{c_k, \epsilon}(\bbP_k, \bbQ)
    ,\label{EOT_bary_primal}
\end{gather}\vspace{-2mm}\newline
The case where $\epsilon = 0$ corresponds to the classical OT barycenter \cite{agueh2011barycenters} and falls out of the scope of this paper. Note that the majority of previous research \cite{cuturi2014fast, cuturi2018semidual, dvurechenskii2018decentralize, delbarrio2020statistical,le2021robust, le2022entropic} consider a bit different but equivalent EOT barycenter formulation, i.e., which has the \textbf{same minimizers}.
The objective \eqref{EOT_bary_primal} is known as \textit{Schrödinger} barycenter problem \cite[Table 1]{chizat2023doubly}, see the \underline{extended discussion} in Appendix \ref{app:broader-doubly-reg}. It is worth noting that under mild assumptions the barycenter $\bbQ^{*}$ which delivers optimal value of \eqref{EOT_bary_primal} \underline{exists and is unique}, see Appendix \ref{proof-ex-uniq-bary-primal}.

\vspace{-3mm}\subsection{Computational aspects of the EOT barycenter task}
\vspace{-2.5mm}
\label{subsec:computational_aspects}

Barycenter problems, such as \eqref{EOT_bary_primal}, are known to be challenging in practice \cite{altschuler2022wasserstein}. 
To our knowledge, even when $\bbP_1,\dots,\bbP_K$ are Gaussian distributions, there is no direct analytical solution neither for our entropic case ($\epsilon>0$), see the \underline{additional discussion} in App. \ref{sec-exp-bary-gauss}, nor for the unregularized case \cite[$\epsilon=0$]{alvarez2016fixed}.
Moreover, in real-world scenarios, the distributions $\bbP_k$ ($k \in \overline{K}$) are typically not available explicitly but only through empirical samples (datasets). This aspect leads to the next \textbf{learning setup}. 

\vspace{-1mm}\hspace{-2.5mm}\fbox{%
    \parbox{1.015\linewidth}{
    \centering
    We assume that each $\bbP_k$ is accessible only by a limited number of i.i.d. empirical samples $X_k = \{x_k^{1}, x_k^{2}, \dots x_k^{N_k}\} \sim \bbP_k$. Our aim is to approximate the optimal conditional plans $\pi^*_k(\cdot \vert x_k)$ between the entire  source distributions $\bbP_k$ and the entire (unknown) barycenter $\bbQ^*$ solving \eqref{EOT_bary_primal}. 
    The recovered plans should provide the \textit{out-of-sample} estimation, i.e., allow generating samples from $\pi^*_k(\cdot \vert x_k^{\text{new}})$, where $x_k^{\text{new}}$ is a new sample from $\bbP_k$ which is not necessarily present in the train sample.
    }
}\vspace{-1mm}

This setup corresponds to \textbf{continuous OT} \cite{li2020continuous,korotin2021continuous}. 
It differs from the \textbf{discrete OT} setup \cite{cuturi2013sinkhorn,cuturi2014fast} which aims to solve the barycenter task for \textit{discrete} empirical distributions.
Discrete OT are not well-suited for the out-of-sample estimation required in the continuous OT setup.

\vspace{-2.5mm}\section{Related works}\vspace{-3mm}
\label{sec:related-works}
The taxonomy of OT solvers is large. Due to space constraints, we discuss here only methods within the \textit{continuous OT learning setup} that solve the (E-)OT barycenter problem. These methods approximate OT maps or plans between the distributions $\bbP_{k}$ and the barycenter $\bbQ^{*}$ rather than just their empirical counterparts that are available from the training samples. 
A \underline{broader discussion} of general-purpose discrete/continuous (E-)OT solvers is in Appendix~\ref{app:broader-related-works}.

\begin{wraptable}{r}{0.57\textwidth}
\vspace{1mm}
\scriptsize
\hspace*{0mm}\begin{tabular}{x{10.9mm}}
  \hline
  \makecell{\rule{0pt}{8.6pt}\vspace*{5pt}\hspace*{-2mm}\textbf{Method}} \\
  \hline
  \rule{0pt}{9.2pt}\vspace*{5.43pt}\cite{li2020continuous}\\
  \vspace*{0.3pt}\cite{cohen2020estimating}\\
  \vspace*{0.3pt}\cite{korotin2021continuous}\\
  \vspace*{0.3pt}\cite{fan2021scalable}\\
  \vspace*{0.3pt}\cite{korotin2022wasserstein}\\
  \vspace*{0.3pt}\cite{noble2023treebased}\\
  \vspace*{0.3pt}\cite{chi2023variational}\\
  \hline
  \vspace*{0.5pt}\textbf{Ours}\\
  \hline
\end{tabular}\hspace*{-6mm}
\begin{tabular}{x{13mm}}
  \hline
    \textbf{\makecell{Admissible\\OT costs}}\\
    \hline
  \rule{0pt}{8.7pt}\vspace*{5.2pt}\green{general} \\
  \green{general} \\
  \red{only $l_2^2$} \\
  \red{only $l_2^2$} \\
  \red{only $l_2^2$} \\
  \red{only $l_2^2$} \\
  \green{general} \\
  \hline
  \vspace*{0.5pt}\green{general} \\
  \hline
\end{tabular}\hspace*{-6mm}
\begin{tabular}{x{13mm}}
  \hline
    \makecell{\textbf{Learns}\\ \textbf{OT plans}}\\
    \hline
  \rule{0pt}{8.7pt}\vspace*{5pt}\green{yes} \\
  \vspace*{0.45pt}\red{no} \\
  \vspace*{0.45pt}\green{yes} \\
  \vspace*{0.45pt}\green{yes} \\
  \vspace*{0.45pt}\green{yes} \\
  \vspace*{0.45pt}\green{yes} \\
  \vspace*{0.45pt}\green{yes} \\
  \hline
  \vspace*{0.45pt}\green{yes} \\
  \hline
\end{tabular}\hspace*{-8mm}
\begin{tabular}{x{25mm}}
  \hline
    \textbf{\makecell{Max considered\\data dim}}\\
    \hline
  \rule{0pt}{8.7pt}\vspace*{5pt}\red{8D, no images} \\
  \vspace*{0.45pt}\yellow{1x32x32 (MNIST)} \\
  \vspace*{0.45pt}\yellow{256D, no images} \\
  \vspace*{0.45pt}\yellow{1x28x28 (MNIST)} \\
  \vspace*{0.45pt}\green{3x64x64 (CelebA, etc.)} \\
  \vspace*{0.45pt}\yellow{1x28x28 (MNIST)} \\
  \vspace*{0.45pt}\red{256D, Gaussians only} \\
  \hline
  \vspace*{0.45pt}\green{3x64x64 (CelebA)} \\
  \hline
\end{tabular}\hspace*{-5mm}
\begin{tabular}{x{20mm}}
  \hline
    \rule{0pt}{8.6pt}\vspace*{5pt}\textbf{Regularization}\\
    \hline
  \vspace*{0.02pt}\makecell{Entropic/Quadratic \\ with \red{\textit{fixed} prior}} \\
  \vspace*{0.45pt}Entropic (Sinkhorn) \\
  \vspace*{0.45pt}requires \red{\textit{fixed} prior} \\
  \vspace*{0.45pt}no \\
  \vspace*{0.45pt}no \\
  \vspace*{0.45pt}Entropic \\
  \vspace*{0.45pt}Entropic/Quadratic \\
  \hline
  \vspace*{0.45pt}Entropic \\
  \hline
\end{tabular}
\vspace{-2mm}
\caption{\small Comparison of continuous OT bary solvers}
\vspace{-4mm}
\label{table-bary-solvers-comparison}
\end{wraptable}

\vspace{-1mm}\textbf{Continuous OT barycenter solvers} are based on the unregularized or regularized OT barycenter problem within the continuous OT learning setup. 
The works \cite{korotin2021continuous, fan2021scalable, noble2023treebased, korotin2022wasserstein} are designed \textit{exclusively} for the quadratic Euclidean cost $\ell^{2}(x, y) \defeq \frac{1}{2}\Vert x - y \Vert_2^2$. The OT problem with this particular cost exhibits several advantageous theoretical properties \cite[\S 2]{ambrosio2013user} which are exploited by the aforementioned articles to build efficient procedures for barycenter estimation algorithms. 
In particular, \cite{korotin2021continuous, fan2021scalable} utilize ICNNs \cite{amos2017input} which parameterize convex functions, and \cite{noble2023treebased} relies on a specific tree-based Schr\"odinger Bridge reformulation. 
In contrast, our proposed approach is designed to handle the EOT problem with \textit{arbitrary} cost functions $c_1,\ldots,c_K$.
In \cite{li2020continuous}, they also consider regularized OT with non-Euclidean costs. 
Similar to our method, they take advantage of the dual formulation and exploit the so-called congruence condition (\S\ref{sec:method}).
However, their optimization procedure substantially differs.
It necessitates selecting a \textit{fixed prior} for the barycenter, which can be non-trivial.
The work \cite{chi2023variational} takes a step further by directly optimizing the barycenter distribution in a variational manner, eliminating the need for a \textit{fixed prior}. 
This modification increases the complexity of optimization and requires specific parametrization of the variational barycenter.
In \cite{cohen2020estimating}, the authors also parameterize the barycenter as a generative model.
Their approach does not recover the OT plans, which differs from our learning setup (\S \ref{subsec:computational_aspects}). 
A summary of the key properties is provided in Table~\ref{table-bary-solvers-comparison}, highlighting the fact that our approach overcomes many imperfections of competing methods. We are also aware of the novel continuous OT barycenter solver \cite{kolesov2024estimating}.  
This approach is more recent than ours and is \textit{significantly} based on the ideas from our article. Because of this, we exclude it from our comparisons.

\vspace{-3mm}
\section{Proposed Barycenter Solver}\label{sec:method}
\vspace{-2mm}

In the first two subsections, we work out our optimization objective (\S\ref{subsec:deriving-opt-objective}) and its practical implementation (\S\ref{sec-algorithm}). 
In \S\ref{subsec:bounds&approx}, we alleviate the gap between the theory and practice by offering finite sample approximation guarantees and universality of NNs to approximate the solution.

\vspace{-3mm}
\subsection{Deriving the optimization objective}
\vspace{-2mm}
\label{subsec:deriving-opt-objective}

In what follows, we analyze \eqref{EOT_bary_primal} from the dual perspectives. We introduce $\cL: \cC(\cY)^K \rightarrow \bbR$:
\vspace{-1mm}\begin{align*}
        \cL(f_1, \dots, f_K) 
    &\defeq 
        \sum_{k = 1}^K \lambda_k \Big\{ \E{x_k\sim\bbP_k} \!f_k^{C_{k}}(x_k)\Big\} \qquad
        \Big[&=-\epsilon\sum_{k = 1}^K \lambda_k \Big\{ \E{x_k\sim\bbP_k}  \!\!\!\!\log Z_{c_k}(f_k,x_k)\Big\}\Big]. 
\end{align*}\vspace*{-8.5mm}\newline

Here $f_{k}^{C_{k}}$ denotes the weak entropic $c_k$-transform \eqref{weak_C_transform_def} of $f_k$. 
Following \S\ref{sec-background-eot}, we see that it coincides with $-\epsilon \log Z_{c_k}(f_k,x_k)$.
Below we formulate our main theoretical result, which will allow us to solve the EOT barycenter task without optimization over all distributions on $\cY$.

\begin{theorem}[\normalfont{Dual formulation of the EOT barycenter problem [\hyperref[proof-our-dual-eot-bary-problem]{proof ref.}]}]
Problem \eqref{EOT_bary_primal} permits the following dual formulation:
\vspace{-3.5mm}
\begin{gather}
    \hspace*{5mm}\mathcal{L}^{*}= \!\!\!\!\!\!\sup\limits_{\scriptsize \makecell{f_1, \dots, f_K \in \cC(\cY) ;\\ \sum_{k = 1}^K \lambda_k f_k = 0}}\!\!\! \!\!\cL(f_1, \dots, f_K). \label{EOT_bary_dual_our}
\end{gather}
\vspace{-1mm}
\label{thm-our-dual-eot-bary-problem}
\end{theorem}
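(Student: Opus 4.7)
The plan is to derive the dual by applying the weak OT duality \eqref{weak_dual_objective} to each $\EOT_{c_k, \epsilon}(\bbP_k, \bbQ)$ appearing in the primal \eqref{EOT_bary_primal}, then exchanging the resulting $\inf_\bbQ \sup_{f_1,\dots,f_K}$ via a minimax argument, and finally converting the unconstrained max-problem into the congruence-constrained one by a function-shift. The weak-duality direction ($\cL^* \ge$) is immediate: for any $\bbQ \in \cP(\cY)$ and any $(f_1,\dots,f_K) \in \cC(\cY)^K$ with $\sum_k \lambda_k f_k \equiv 0$, the termwise bound $\EOT_{c_k,\epsilon}(\bbP_k,\bbQ) \ge \E{x_k\sim\bbP_k} f_k^{C_k}(x_k) + \E{y\sim\bbQ} f_k(y)$, multiplied by $\lambda_k$, summed over $k$, and with the congruence condition annihilating the $\bbQ$-term, gives $\sum_k \lambda_k \EOT_{c_k,\epsilon}(\bbP_k,\bbQ) \ge \cL(f_1,\dots,f_K)$; taking $\inf_\bbQ$ yields $\cL^* \ge \cL(f_1,\dots,f_K)$.

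For the reverse inequality, define $\Phi(\bbQ, f_1,\dots,f_K) \defeq \sum_k \lambda_k \{\E{x_k\sim\bbP_k} f_k^{C_k}(x_k) + \E{y\sim\bbQ} f_k(y)\}$. It is affine and weak$^{*}$-continuous in $\bbQ$, and concave and sup-norm continuous in $(f_1,\dots,f_K)$ (each $f_k \mapsto f_k^{C_k}(x_k)$ is an infimum of functionals affine in $f_k$ by \eqref{weak_C_transform_def}, hence concave). Since $\cY$ is compact, $\cP(\cY)$ is weak$^{*}$-compact, so Sion's minimax theorem applies and
\[
    \cL^* \;=\; \inf_\bbQ \sup_{f_1,\dots,f_K} \Phi \;=\; \sup_{f_1,\dots,f_K} \inf_\bbQ \Phi \;=\; \sup_{f_1,\dots,f_K} \Bigl\{ \cL(f_1,\dots,f_K) + \min_{y\in\cY} \sum_{k=1}^K \lambda_k f_k(y) \Bigr\},
\]
where the last equality uses $\inf_{\bbQ \in \cP(\cY)} \E{y\sim\bbQ} g(y) = \min_{y\in\cY} g(y)$ for continuous $g$ on a compact set.

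The final step shrinks this unconstrained supremum to the congruence slice $\{\sum_k \lambda_k f_k = 0\}$ without loss. Given an arbitrary tuple $(f_1,\dots,f_K)$, set $h \defeq \sum_k \lambda_k f_k$ and $\tilde f_k \defeq f_k - h$; a direct check yields $\sum_k \lambda_k \tilde f_k \equiv 0$. Unwinding \eqref{weak_C_transform_def}, $\tilde f_k^{C_k}(x) = \min_{\mu} \{\E{y\sim\mu}(c_k(x,y) + h(y)) - \epsilon H(\mu) - \E{y\sim\mu} f_k(y)\} \ge f_k^{C_k}(x) + \min_y h(y)$, since $h \ge \min h$ pointwise passes through the infimum. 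Multiplying by $\lambda_k$ and summing gives $\cL(\tilde f_1,\dots,\tilde f_K) \ge \cL(f_1,\dots,f_K) + \min_y \sum_k \lambda_k f_k(y)$, so the constrained supremum matches the unconstrained one, completing the proof.

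The main technical obstacle is the minimax swap: the potential side $\cC(\cY)^K$ is infinite-dimensional and non-compact, so one must carefully exploit the compactness of $\cP(\cY)$ together with the affinity and concavity/continuity of $\Phi$ in its respective arguments to invoke Sion's theorem. A secondary subtlety is that $f_k \mapsto f_k - h$ is a \emph{function} shift (not a constant), so its effect on the weak $c$-transform must be controlled via the elementary monotonicity of \eqref{weak_C_transform_def} in the cost rather than by the simple additive identity $(f + \mathrm{const})^{C} = f^{C} - \mathrm{const}$ available for scalar shifts.
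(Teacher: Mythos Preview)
Your proof is correct and follows essentially the same route as the paper: substitute the weak-OT dual \eqref{weak_dual_objective} into \eqref{EOT_bary_primal}, apply Sion's minimax theorem using compactness of $\cP(\cY)$ and concavity of the $C$-transform, evaluate the inner $\inf_\bbQ$ as $\min_y \sum_k \lambda_k f_k(y)$, and then shift the potentials to restore congruence without loss. The only cosmetic difference is in the shift step: the paper modifies a single potential via $\tilde f_K \defeq f_K - \bar f/\lambda_K$ (and uses monotonicity plus constant-additivity of the $C$-transform), whereas you subtract $h=\sum_k\lambda_k f_k$ from \emph{every} $f_k$ and bound each $\tilde f_k^{C_k}$ by monotonicity alone---both arguments yield the same inequality $\cL(\tilde f)\ge \cL(f)+\min_y h(y)$.
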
\vspace*{-1mm}

\vspace{-3mm}
We refer to the constraint $\sum_{k = 1}^K \lambda_k f_k = 0$ as the \textbf{congruence} condition w.r.t.\ weights $\{\lambda_k\}_{k = 1}^{K}$. The potentials $f_k$ appearing in \eqref{EOT_bary_dual_our} play the same role as in \eqref{weak_dual_objective}. Notably, when $\mathcal{L}(f_1,\dots,f_K)$ is close to $\mathcal{L}^{*}$, the conditional optimal transport plans $\pi_k^*(\cdot \vert x_k), x_k \in \cX_k$, between $\bbP_k$ and the barycenter distribution $\bbQ^*$ can be approximately recovered through the potentials $f_{k}$. 
This intuition is formalized in Theorem~\ref{thm-duality-gaps} below. 
First, for $f_k \in \cC(\mathcal{Y})$, we define 
\vspace{-1mm}
\begin{equation}\dd \pi^{f_k}(x_k, y) \!\defeq\! \dd \mu_{x_k}^{f_k}(y) \dd \bbP_k(x_k)
\nonumber
\end{equation}\vspace{-3mm}\newline
and set $\bbQ^{f_k} \in \cP(\cY)$ to be the second marginal of $\pi^{f_k}$.
\begin{theorem}[\normalfont{Quality bound of plans recovered from dual potentials [\hyperref[proof-duality-gaps]{proof ref.}]}] 
Let $\{ f_k \}_{k = 1}^{K}, f_k \in \cC(\mathcal{Y})$ be congruent potentials.
Then we have
\vspace*{-2mm}
\begin{equation} \label{theorem 2}
    \cL^* - \cL(f_1, \dots, f_K) = \epsilon\sum_{k = 1}^K \lambda_k \KL{\pi_k^*}{ \pi^{f_k}} 
    \geq \epsilon\sum_{k = 1}^K \lambda_k \KL{\bbQ^*}{\bbQ^{f_k}},
\end{equation}\vspace*{-9mm}\newline

where $\pi_k^* \in \Pi(\bbP_k, \bbQ^*) , k \in \overline{K}$ are the EOT plans between $\bbP_k$ and the barycenter distribution $\bbQ^*$.
\label{thm-duality-gaps}
\end{theorem}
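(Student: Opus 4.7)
My plan is to derive a pointwise identity relating $\KL{\pi}{\pi^{f_k}}$ --- for an arbitrary coupling $\pi \in \Pi(\bbP_k, \bbQ)$ --- to the quantities appearing in the dual objective, and then specialize it to the optimal plan. The equality part of \eqref{theorem 2} will drop out after summing and invoking the congruence condition, while the inequality part will follow from data processing for KL divergence.

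First, I would observe that $\pi$ and $\pi^{f_k}$ share the first marginal $\bbP_k$, so the chain rule for KL yields $\KL{\pi}{\pi^{f_k}} = \E{x \sim \bbP_k} \KL{\pi(\cdot\vert x)}{\mu_x^{f_k}}$. Plugging in the explicit density \eqref{cond_plan_analytic} of $\mu_x^{f_k}$ and using the relation $f_k^{C_k}(x) = -\epsilon \log Z_{c_k}(f_k, x)$, a direct expansion of the log-ratio should give the pointwise identity
\[
\epsilon \KL{\pi}{\pi^{f_k}} = \Big[ \E{(x,y) \sim \pi} c_k(x, y) - \epsilon \E{x \sim \bbP_k} H(\pi(\cdot\vert x)) \Big] - \E{x \sim \bbP_k} f_k^{C_k}(x) - \E{y \sim \bbQ} f_k(y),
\]
valid for every $\pi \in \Pi(\bbP_k, \bbQ)$.

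Next, I would specialize this identity to $\pi = \pi_k^* \in \Pi(\bbP_k, \bbQ^*)$, so that the bracketed term coincides with $\EOT_{c_k, \epsilon}(\bbP_k, \bbQ^*)$ by the primal definition \eqref{EOT_primal}. Multiplying by $\lambda_k$, summing over $k$, and invoking the congruence constraint $\sum_k \lambda_k f_k = 0$ to cancel the term $\E{y \sim \bbQ^*} \sum_k \lambda_k f_k(y)$, I would arrive at
\[
\epsilon \sum_{k=1}^{K} \lambda_k \KL{\pi_k^*}{\pi^{f_k}} = \sum_{k=1}^{K} \lambda_k \EOT_{c_k, \epsilon}(\bbP_k, \bbQ^*) - \cL(f_1, \dots, f_K) = \cL^* - \cL(f_1, \dots, f_K),
\]
which is the claimed equality. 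For the inequality, I would invoke the data-processing (monotonicity) property of KL under marginalization: since $\bbQ^*$ and $\bbQ^{f_k}$ are the second marginals of $\pi_k^*$ and $\pi^{f_k}$, we have $\KL{\pi_k^*}{\pi^{f_k}} \geq \KL{\bbQ^*}{\bbQ^{f_k}}$ for each $k$, and taking the $\lambda_k$-weighted sum concludes.

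The only real work lies in the algebraic expansion of Step 1; the remaining steps amount to plugging in definitions, using the congruence condition, and applying a standard contraction property of KL. A minor sanity check is that both conditional densities are strictly positive and bounded on the compact domains (by continuity of $c_k$ and $f_k$), ensuring all KL divergences are finite and well-defined. I do not anticipate any substantial obstacle beyond this bookkeeping.
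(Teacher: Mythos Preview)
Your proposal is correct and follows essentially the same approach as the paper: the paper cites the key identity (your Step~1, specialized to $\pi=\pi_k^*$) as \cite[Thm.~2]{mokrov2024energyguided} rather than re-deriving it, then likewise multiplies by $\lambda_k$, sums, uses congruence to kill the $\E{y\sim\bbQ^*}\sum_k\lambda_k f_k(y)$ term, and finishes the inequality via the data-processing inequality for KL. Your self-contained derivation of the identity via the chain rule and the explicit form \eqref{cond_plan_analytic} is exactly what underlies that cited result, so the arguments are the same in substance.
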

\vspace*{-2mm}
According to Theorem \ref{thm-duality-gaps}, an approximate solution $\{f_k\}_{k = 1}^K$ of \eqref{EOT_bary_dual_our} yields distributions $\pi^{f_k}$ which are close to the optimal plans $\pi^{*}_k$. Each $\pi^{f_k}$ is formed by conditional distributions $\mu_{x_k}^{f_k}$, c.f.\ ~\eqref{cond_plan_analytic}, with closed-form energy function, i.e., the unnormalized log-likelihood. Consequently, one can generate samples from $\mu_{x_k}^{f_k}$ using standard MCMC techniques \cite{andrieu2003introduction}. In the next subsection, we stick to the practical aspects of optimization of \eqref{EOT_bary_dual_our}, which bears certain similarities to the training of Energy-Based models \cite[EBM]{lecun2006tutorial,song2021train}.

\textbf{Relation to prior works.} Works \cite{li2020continuous,korotin2021continuous} also aim to first get the dual potentials and then recover the barycenter, see the discussion in \S\ref{sec:related-works} for more details.

\vspace{-3mm}\subsection{Practical Optimization Algorithm}
\vspace{-1.5mm}
\label{sec-algorithm}

To maximize the dual EOT barycenter objective \eqref{EOT_bary_dual_our}, we replace the potentials ${f_k \in \cC(\cY)}$ for ${k \in \overline{K}}$ with neural networks $f_{\theta, k} \,, \,\theta \in \Theta$. 
In order to eliminate the constraint in \eqref{EOT_bary_dual_our}, we parametrize $f_{\theta, k}$ as $g_{\theta_k} - \sum_{k' = 1}^{K} \lambda_{k'} g_{\theta_{k'}}$, where $\{g_{\theta_k}:\mathbb{R}^{D}\rightarrow\mathbb{R}, \theta_k \in  \Theta_k\}_{k = 1}^K$ are neural networks. 
This parameterization automatically ensures the congruence condition $\sum_{k=1}^{K} \lambda_k f_{\theta, k} \equiv 0$. Note that $\Theta \!=\! \Theta_1 \times \dots \times \Theta_K$ and $\theta\! =\! (\theta_1, \dots , \theta_K)\in\Theta$. Our objective function for \eqref{EOT_bary_dual_our} is
\vspace{-1.5mm}\begin{gather}
    L(\theta) \defeq - \epsilon \sum_{k = 1}^{K} \lambda_k \Big\{\E{x_k \sim \bbP_k} \log Z_{c_k}(f_{\theta, k}, x_k)\Big\}. \label{loss-function}
\end{gather}\vspace{-5mm}

The direct computation of the normalizing constant $Z_{c_k}$ may be infeasible. Still, the gradient of $L$ with respect to $\theta$ can be derived similarly to \cite[Theorem 3]{mokrov2024energyguided}:
\begin{theorem}[\normalfont{Gradient of the dual EOT barycenter objective [\hyperref[proof-weak_dual_loss_func_grad]{proof ref}]}]
The gradient of $L$ satisfies
\vspace{-2mm}\begin{gather}
    \pdv{\theta} L(\theta) =  - \sum_{k = 1}^{K} \lambda_k\!\! \E{x_k \sim \bbP_k} \bigg\{\E{\,y \sim \mu_{x_{\scaleto{k}{2.5pt}}}^{f_{\scaleto{\theta, k}{2.5pt}}}} \left[ \pdv{\theta} f_{\theta, k}(y)\right] \bigg\}. \label{weak_EOT_bary_loss_grad}
\end{gather}\vspace{-3mm}
\label{prop_weak_dual_loss_func_grad}
\end{theorem}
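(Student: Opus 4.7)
The plan is to differentiate $L(\theta)$ term by term and reduce the computation to differentiating each log-partition function $\log Z_{c_k}(f_{\theta,k}, x_k)$. Since $L$ is an average over $\bbP_k$ of these terms (scaled by $-\epsilon \lambda_k$), the bulk of the work reduces to evaluating the gradient of the logarithm of a Gibbs-type normalizing integral, which is a standard computation familiar from Energy-Based Models and exponential families.

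Concretely, I would first fix $k \in \overline{K}$ and a point $x_k \in \cX_k$, and consider $\theta \mapsto Z_{c_k}(f_{\theta,k}, x_k) = \int_{\cY} \exp\!\bigl(\epsilon^{-1}(f_{\theta,k}(y) - c_k(x_k,y))\bigr)\,dy$. By the Leibniz rule I would exchange $\pdv{\theta}$ and $\int_{\cY}$ to obtain
\begin{equation*}
\pdv{\theta} Z_{c_k}(f_{\theta,k}, x_k) \;=\; \frac{1}{\epsilon}\int_{\cY} \pdv{\theta} f_{\theta,k}(y)\,\exp\!\Bigl(\tfrac{f_{\theta,k}(y) - c_k(x_k,y)}{\epsilon}\Bigr)\,dy.
\end{equation*}
Dividing by $Z_{c_k}(f_{\theta,k}, x_k)$ and recognising the Gibbs density from \eqref{cond_plan_analytic} yields
\begin{equation*}
\pdv{\theta} \log Z_{c_k}(f_{\theta,k}, x_k) \;=\; \frac{1}{\epsilon}\,\E{y \sim \mu_{x_k}^{f_{\theta,k}}} \pdv{\theta} f_{\theta,k}(y).
\end{equation*}
Summing over $k$ with the weights $\lambda_k$, taking expectation over $x_k \sim \bbP_k$, and multiplying by $-\epsilon$ collapses the $\epsilon$ factors and gives exactly \eqref{weak_EOT_bary_loss_grad}. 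The outer $\pdv{\theta}$ vs.\ $\E{x_k\sim\bbP_k}$ swap is a second, easier application of dominated convergence.

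The only non-routine step is justifying the interchange of differentiation and integration. Since $\cY$ (and each $\cX_k$) is compact, $c_k$ is continuous, and $f_{\theta,k}$ is a neural network, both $f_{\theta,k}(y)$ and $\pdv{\theta} f_{\theta,k}(y)$ are bounded uniformly in $y \in \cY$ on any neighborhood of $\theta$, so the integrands and their $\theta$-derivatives admit an integrable dominating function and dominated convergence applies. I would note this compactness/continuity argument once and reuse it for both interchanges. Everything else is algebraic identification of the Gibbs conditional density, so this is where I would place essentially all the care; the remainder is a direct computation mirroring the proof of Theorem~3 in \cite{mokrov2024energyguided}, which I would cite for the single-marginal case and then simply lift to the barycenter objective by linearity in $k$.
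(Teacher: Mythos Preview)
Your proposal is correct and matches the paper's approach: the paper's proof simply defers to \cite[Theorem 3]{mokrov2024energyguided}, and your explicit computation (differentiate the log-partition, swap $\pdv{\theta}$ with $\int_{\cY}$ via dominated convergence using compactness of $\cY$, recognize the Gibbs conditional density, then sum over $k$ by linearity) is exactly that argument spelled out.
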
\vspace{-4mm}

With this result, we can describe our proposed algorithm which maximizes $L$ using \eqref{weak_EOT_bary_loss_grad}.

\textsc{Training}. To perform stochastic gradient ascent step w.r.t.\ $\theta$, we approximate \eqref{weak_EOT_bary_loss_grad} with Monte-Carlo by drawing samples from $\dd \pi^{f_{\theta, k}}(x_k, y) = \dd \mu_{x_k}^{f_{\theta, k}}(y) \dd \bbP_k(x_k)$. Analogously to \cite[\S 3.2]{mokrov2024energyguided}, this can be achieved by a simple two-stage procedure. At first, we draw a random vector $x_k$ from $\bbP_k$. This is done by picking a random empirical sample from the available dataset $X_k$. 
Then, we need to draw a sample from the distribution $\mu_{x_k}^{f_{\theta, k}}$. 
Since we know the negative \textbf{energy} (unnormalized log density) of $\mu_{x_k}^{f_{\theta, k}}$ by \eqref{cond_plan_analytic}, we can sample from this distribution by applying an MCMC procedure which uses the negative energy function $\epsilon^{-1}(f_{\theta, k}(y) - c_k(x_k, y))$ as the input.
Our findings are summarized in Algorithm \ref{algorithm-main}. Note that typical MCMC needs the energy functions, in particular, costs $c_k$, to be differentiable. Otherwise, one can consider \textit{gradient-free} procedures, e.g., \cite{sejdinovic2014kernel, strathmann2015gradient}.
\setlength{\textfloatsep}{7pt}
\begin{algorithm}[t!]
    { 
    \small
        \SetKwProg{Empty}{}{:}{}
        \SetAlgorithmName{Algorithm}{empty}{Empty}
        \textbf{Input:} Distributions $\bbP_k$, $k \in \overline{K}$ accessible by samples; 
        \hspace*{0.5mm}cost functions $c_k(x_k, y): \cX_k \times \cY \rightarrow \bbR$; the regulari\hspace*{0mm}zation coeff. $\epsilon > 0$; barycenter averaging 
        \hspace*{0.1mm}coeff. $\lambda_k > 0$ : $\sum_{k=1}^{K} \lambda_k = 1$; 
        \hspace*{0.1mm}MCMC procedure $\texttt{MCMC\_proc}$; batch size $S > 0$;
        \hspace*{0.1mm}NNs $f_{\theta, k} : \cY \rightarrow \bbR$, s.t. $\sum_{k=1}^{K} \lambda_k f_{\theta, k} \equiv 0$ (see \S\ref{sec-algorithm}).\\ \vspace{1.5mm}
        \textbf{Output:} Trained NNs $f_{\theta^*, k}$ recovering the conditional
        \hspace*{0.5mm}OT plans between $\bbP_k$ and barycenter $\bbQ^*$.\\  \vspace{1.5mm}
        \For{$iter = 1, 2, \dots$}{
            \For{$k = 1, 2, \dots, K$}{
                Sample batch $\{x_k^s\}_{s = 1}^{S} \sim \bbP_k$\;
                \Empty{{\normalfont Draw $Y_k = \{y_k^s\}_{s = 1}^S$ with MCMC}}{$y_k^s = \texttt{MCMC\_proc}
                \scaleto{(}{15pt}\frac{f_{\theta,k}(\cdot) - c_k(x_k^s, \cdot)}{\epsilon}\scaleto{)}{15pt}$\;}
                $\widehat{L}_k \leftarrow - \lambda_k \frac{1}{S}\big[ \sum_{s = 1}^S f_{\theta,k}\left(y_k^s\right) \big]$\;
            }
            
            $\widehat{L} \leftarrow \sum_{k = 1}^K \widehat{L}_k$; Update $\theta$ by using $\frac{\partial\widehat{L}}{\partial \theta}$\;
        }
        \caption{EOT barycenters via Energy-Based Modelling}
        \label{algorithm-main}       }
\end{algorithm}

\vspace*{-0.5mm}
In all our experiments, we use ULA \cite[\S 1.4.1]{roberts1996exponential} as a $\texttt{MCMC\_proc}$. It is a conventional MCMC algorithm. 
Specifically, in order to draw a sample $y_k \sim \mu_{x_k}^{f_{\theta, k}}$, where $x_k \in \cX_k$, we initialize $y_k^{(0)}$ from the $D-$dimensional distribution $\cN(0, I_{D})$ and then iterate the discretized Langevin dynamics:
\vspace{-1mm}\begin{gather*}
    y_k^{(l + 1)} \!\leftarrow\! y_k^{(l)} \!+\! \frac{\eta}{2 \epsilon} \nabla_{y} \big( f_{\theta, k}(y) \!-\! c(x_k, y)\big)\Big\vert_{y = y_k^{(l)}} \!+\! \sqrt{\eta} \xi_l \, ,
\end{gather*} 
where $\xi_l \sim \cN(0, I_D)$, $l \in \{0, 1, 2, \dots, L\}$, $L$ is a number of steps, and $\eta > 0$ is a step size. Note that the iteration procedure above could be straightforwardly adapted to a batch scenario, i.e., we can simultaneously simulate the whole batch of samples $Y_k^{(l)}$ conditioned on $X_k^{(l)}$. The particular values of number of steps $L$ and step size $\eta$ are reported in the details of the experiments, see Appendix \ref{sec-exp-details}. An \underline{alternative importance sampling-based approach} for optimizing \eqref{weak_EOT_bary_loss_grad} is presented in Appendix \ref{sec-ebm-training-alternative}.

\textsc{Inference}. We use the same ULA procedure for sampling from the recovered optimal conditional plans $\pi^{f_{\theta^*, k}}(\cdot \vert x_k)$, see the details on the hyperparameters $L, \eta$ in \S\ref{sec:experiments}.
\vspace*{1mm}\newline
\textbf{Relation to prior works.} Learning a distribution of interest via its energy function (EBMs) is a well-established direction in generative modelling research \cite{lecun2006tutorial, xie2016theory, du2019implicit, song2021train}. Similar to ours, the key step in most energy-based approaches is the MCMC procedure which recovers samples from a distribution accessible only by an unnormalized log density. Typically, various techniques are employed to improve the stability and convergence speed of MCMC, see, e.g., \cite{du2021improved, gao2021learning, zhao2021learning}.
The majority of these techniques can be readily adapted to complement our approach.
At the same time, the primary goal of this study is to introduce and validate the methodology for computing EOT barycenters in an energy-guided manner.
Therefore, we opt for the \textbf{simplest} MCMC algorithm, even \textbf{without the replay buffer} \cite{hinton2002training}, as it serves our current objectives.

\vspace{-3mm}
\subsection{Generalization Bounds and Universal Approximation with Neural Nets}
\vspace{-2.5mm}
\label{subsec:bounds&approx}

In this subsection, we answer the question of how far the recovered plans are from the EOT plan $\pi^{*}_{k}$ between $\bbP_{k}$ and $\bbQ$. 
In practice, for each distribution $\bbP_k$ we know only the empirical samples $X_k = \{x_k^{1}, x_k^{2}, \dots x_k^{N_k}\} \sim \bbP_k$, i.e., finite datasets. Besides, the available potentials $f_k$, $k \in \overline{K}$ come from restricted classes of functions and satisfy the congruence condition.
More precisely, we have $f_{k}=g_{k}-\sum_{k=1}^{K}\lambda_{k}g_k$ (\S\ref{sec-algorithm}), where each $g_{k}$ is picked from some class $\cG_{k}$ of neural networks. 
Formally, we write $(f_{1},\dots,f_{K})\in \overline{\cF}$ to denote the congruent potentials constructed this way from the functional classes $\cG_{1},\dots,\cG_{K}$. Hence, in practice, we optimize the \textit{empirical version} of \eqref{loss-function}:
\vspace{-1.0mm}\begin{align*}
    \max\limits_{ (f_{1},\dots,f_{K})\in \overline{\cF}} \!\!\!\!\!\widehat{\cL}(f_1, \dots, f_K)&\defeq \!\!\!\!\!\!\!\max\limits_{\scriptsize (f_{1},\dots,f_{K})\in \overline{\cF}}  \sum_{k = 1}^K \frac{\lambda_k}{N_k} \sum_{n=1}^{N_k} f_k^{C_k}(x_k^{n});\\
    (\widehat{f_{1}},\dots \widehat{f_{K}}) &\defeq\!\!\! \argmax\limits_{ (f_{1},\dots,f_{K})\in\overline{\cF}} \widehat{\cL}(f_{1},\dots,f_k).%
\end{align*}\vspace{-3mm}

A natural question arises: \textbf{\textit{How close are the recovered plans $\pi^{\widehat{f_k}}$ to the EOT plans $\pi_{k}^{*}$}} between $\bbP_{k}$ and $\bbQ^{*}$? 
Since our objective \eqref{loss-function} is a sum of integrals over distributions $\bbP_k$, the generalization error can be straightforwardly decomposed into the estimation and approximation parts.

\begin{proposition}[Decomposition of the generalization error {[\hyperref[proof-Specific]{proof ref.}]}] 
\label{lemma:Specific}
The following bound holds:
    \vspace{-3.5mm}\begin{eqnarray}
        \epsilon\mathbb{E}\sum_{k = 1}^K \lambda_k \KL{\pi_k^*}{\pi^{\widehat{f}_k}}\leq
        \resizebox{0.6\hsize}{!}{$
        \overbrace{2\sum\nolimits_{k=1}^{K}\lambda_{k}\mathbb{E}\text{Rep}_{X_k}(\mathcal{F}_k^{C_k},\mathbb{P}_{k})}^{\text{Estimation error (upper bound)}}+
        \overbrace{\bigg[\mathcal{L}^{*}-\!\!\!\!\!\!\!\!\max\limits_{\scriptsize (f_{1},\dots,f_{K})\in \overline{\cF}} \!\!\!\!\!\!\cL(f_1, \dots, f_K)\bigg]}^{\text{Approximation error}},
        $}
        \label{generalization-error-bound2}
    \end{eqnarray}\vspace{-4.5mm}
    
where $\mathcal{F}_k^{C_k}\defeq \{f_k^{C_k}\text{ }|\text{ }(f_{1},\dots,f_{K})\in\overline{\cF}\}$, and the expectations are taken w.r.t.\ the random realization of the datasets $X_1\!\!\sim\!\!\bbP_{1},\dots,X_K\!\!\sim\!\!\bbP_{K}$.
Here $\text{Rep}_{X_k}(\mathcal{F}_k^{C_k},\mathbb{P}_{k})$ is the standard notion of the \textbf{representativeness} of the sample $X_{k}$ w.r.t. functional class $\mathcal{F}_k^{C_k}$ and distribution $\mathbb{P}_{k}$, see App. \ref{sec-proof-rademacher}.
\end{proposition}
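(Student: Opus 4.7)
The plan is to begin with Theorem~\ref{thm-duality-gaps} applied to the empirical maximizer $(\widehat{f}_1,\dots,\widehat{f}_K)$, which is congruent by construction. This immediately gives
\begin{equation*}
\epsilon\sum_{k=1}^{K}\lambda_k\KL{\pi_k^*}{\pi^{\widehat{f}_k}}=\mathcal{L}^{*}-\mathcal{L}(\widehat{f}_1,\dots,\widehat{f}_K).
\end{equation*}
I would then introduce the in-class optimum $(f_1^{\star},\dots,f_K^{\star})\in\argmax_{(f_1,\dots,f_K)\in\overline{\mathcal{F}}}\mathcal{L}(f_1,\dots,f_K)$ and split the right-hand side by add-and-subtract:
\begin{equation*}
\mathcal{L}^{*}-\mathcal{L}(\widehat{f}_1,\dots,\widehat{f}_K)=\underbrace{\bigl[\mathcal{L}^{*}-\mathcal{L}(f_1^{\star},\dots,f_K^{\star})\bigr]}_{\text{approximation}}+\underbrace{\bigl[\mathcal{L}(f_1^{\star},\dots,f_K^{\star})-\mathcal{L}(\widehat{f}_1,\dots,\widehat{f}_K)\bigr]}_{\text{estimation}}.
\end{equation*}
The approximation term is exactly the second summand in the target inequality, so it remains only to bound the estimation term in expectation.

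For the estimation term, I would use the standard ``optimizer trick'': insert $\widehat{\mathcal{L}}(f_1^{\star},\dots)$ and $\widehat{\mathcal{L}}(\widehat{f}_1,\dots)$ to obtain
\begin{equation*}
\mathcal{L}(f_1^{\star},\dots)-\mathcal{L}(\widehat{f}_1,\dots)=\bigl[\mathcal{L}(f_1^{\star},\dots)-\widehat{\mathcal{L}}(f_1^{\star},\dots)\bigr]+\bigl[\widehat{\mathcal{L}}(f_1^{\star},\dots)-\widehat{\mathcal{L}}(\widehat{f}_1,\dots)\bigr]+\bigl[\widehat{\mathcal{L}}(\widehat{f}_1,\dots)-\mathcal{L}(\widehat{f}_1,\dots)\bigr].
\end{equation*}
The middle bracket is non-positive by definition of $\widehat{f}_k$ as empirical maximizer, so I can discard it. Each of the remaining brackets is bounded in absolute value by $\sup_{(f_1,\dots,f_K)\in\overline{\mathcal{F}}}|\mathcal{L}(f_1,\dots,f_K)-\widehat{\mathcal{L}}(f_1,\dots,f_K)|$.

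The third step is to decouple this joint supremum over $\overline{\mathcal{F}}$ into per-marginal suprema. Since both $\mathcal{L}$ and $\widehat{\mathcal{L}}$ are weighted sums over $k$ of functionals depending only on the $k$-th component $f_k^{C_k}$, I would apply the triangle inequality and push the supremum inside the sum, yielding
\begin{equation*}
\sup_{\overline{\mathcal{F}}}|\mathcal{L}-\widehat{\mathcal{L}}|\le \sum_{k=1}^{K}\lambda_k\sup_{h\in\mathcal{F}_k^{C_k}}\Bigl|\tfrac{1}{N_k}\sum_{n=1}^{N_k}h(x_k^n)-\mathbb{E}_{x_k\sim\mathbb{P}_k}h(x_k)\Bigr|=\sum_{k=1}^{K}\lambda_k\,\mathrm{Rep}_{X_k}(\mathcal{F}_k^{C_k},\mathbb{P}_k).
\end{equation*}
Taking expectations over $X_1,\dots,X_K$ and combining the two identical bounds for the outer brackets gives the factor of $2$, producing the estimation-error term with the stated constant.

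The main obstacle I anticipate is the decoupling step: the congruence constraint couples the $K$ potentials, so strictly speaking the joint class $\overline{\mathcal{F}}$ is not a product of the marginal classes $\mathcal{F}_k$. However, because the objective splits additively across $k$ and we only need an upper bound, replacing $\sup_{\overline{\mathcal{F}}}\sum_k(\cdots)$ by $\sum_k\sup_{\mathcal{F}_k}(\cdots)$ is valid and introduces only slack, which is absorbed into the representativeness terms defined on each projected marginal class $\mathcal{F}_k^{C_k}$. A minor care point is that $\mathcal{F}_k^{C_k}$ is the image of $\mathcal{F}_k$ under the weak $c_k$-transform, so one must verify that representativeness is defined on this transformed class (which matches the definition deferred to App.~\ref{sec-proof-rademacher}). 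No further uniform-convergence machinery is needed at this level of statement; sharper Rademacher-based control is separated into the appendix.
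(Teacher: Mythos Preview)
Your proposal is correct and follows essentially the same route as the paper's proof: apply Theorem~\ref{thm-duality-gaps} to the congruent empirical maximizer, split $\mathcal{L}^{*}-\mathcal{L}(\widehat{\mathbf{f}})$ into approximation plus estimation, use the three-term add-and-subtract with the middle bracket $\le 0$, bound the two remaining brackets by $2\sup_{\overline{\mathcal{F}}}|\mathcal{L}-\widehat{\mathcal{L}}|$, and then decouple over $k$ via the additive structure. The only cosmetic mismatch is that the paper's definition of $\text{Rep}_{X_k}$ in App.~\ref{sec-proof-rademacher} is one-sided (no absolute value), whereas your displayed equation carries $|\cdot|$; this does not affect the argument or the final constant.
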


\vspace{-1mm}
To bound the estimation error, we need to further bound the \textit{expected} representativeness $\mathbb{E}\text{Rep}_{X_k}(\mathcal{F}_k^{C_k},\mathbb{P}_{k})$.
Doing preliminary analysis, we found that it does not depend that much on the complexity of the functional class $\mathcal{F}_k$. However, it seems to heavily depend on the properties of the cost $c_k$. We derive two bounds: a general one for Lipschitz (in $x$) costs and a better one for the feature-based quadratic costs.

\begin{theorem}[Bound on $\mathbb{E}\text{Rep}$ w.r.t. $C_{k}$-transform classes {[\hyperref[proof-rademacher]{proof ref.}]}] \label{thm:rademacher}
\textbf{(a)} Let $\mathcal{F}_{k}\subset\mathcal{C}(\mathcal{Y})$. Assume that $c_{k}(x,y)$ is Lipschitz in $x$ with the same Lipschitz constant for all $y\!\in\!\mathcal{Y}$. Then
\vspace{-1mm}
\begin{equation}
    \mathbb{E}\text{Rep}_{X_k}(\mathcal{F}_k^{C_k},\mathbb{P}_{k}) \leq O\Big(N_k^{{-1}/({D_k}+1)}\Big).
    \label{rademacher-lipschitz}
\end{equation}\vspace{-5mm}

\textbf{(b)} Let ${c_{k}(x_k,y)=\frac{1}{2}\|u_k(x_k)-v(y)\|^{2}}$, $\mathcal{F}_{k}$ be a \textbf{bounded} (w.r.t. the supremum norm) subset of $\mathcal{C}(\mathcal{Y})$, ${u_k:\mathcal{X}_k\rightarrow \mathbb{R}^{D''}}$ and ${v:\mathcal{Y}\rightarrow \mathbb{R}^{D''}}$ be continuous functions. Then
\vspace{-1mm}
\begin{equation}
\mathbb{E}\text{Rep}_{X_k}(\mathcal{F}_k^{C_k},\mathbb{P}_{k}) \leq O\big(N_k^{-1/2}\big).
\label{rademacher-kernel}
\end{equation}\vspace{-6mm}
\end{theorem}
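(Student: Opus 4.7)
My plan for both parts is to symmetrize and bound the Rademacher complexity of $\cF_k^{C_k}$, exploiting the specific structure of the weak entropic $C_k$-transform in each regime. By the standard symmetrization inequality it suffices to control $R_{N_k}(\cF_k^{C_k}) \defeq \bbE_{\sigma}\sup_{f\in \cF_k}\frac{1}{N_k}\sum_i \sigma_i f^{C_k}(x_k^i)$, where $\sigma_i\in\{\pm 1\}$ are i.i.d. Rademacher signs, and then reduce $\bbE\text{Rep}$ to a constant multiple of $\bbE R_{N_k}$.

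For part (a), the key observation is that the $C_k$-transform inherits Lipschitz regularity of $c_k$ in the $x$-argument. Using $f^{C_k}(x) = -\epsilon \log Z_{c_k}(f,x)$ from the excerpt and the fact that $-\epsilon \log \int_{\cY} \exp(\cdot/\epsilon)\, dy$ is a $1$-Lipschitz functional in the supremum norm of its argument, one gets $|f^{C_k}(x_1)-f^{C_k}(x_2)| \leq \sup_{y}|c_k(x_1,y)-c_k(x_2,y)| \leq L\|x_1-x_2\|$. Combined with the uniform sup-norm bound on $f^{C_k}$ over the compact $\cX_k$, this embeds $\cF_k^{C_k}$ into the class of bounded $L$-Lipschitz functions on $\cX_k\subset\bbR^{D_k}$, which admits sup-norm $\delta$-covering number at most $\exp(O(\delta^{-D_k}))$. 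A standard $\delta$-net plus Massart bound, with $\delta$ chosen to balance the approximation term $O(\delta)$ against the concentration term $O(\sqrt{\delta^{-D_k}/N_k})$, produces the announced rate $O(N_k^{-1/(D_k+1)})$.

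For part (b), the quadratic structure enables a parametric bound via a bilinear decomposition. Expanding $c_k(x,y) = \tfrac12\|u_k(x)\|^2 - \langle u_k(x), v(y)\rangle + \tfrac12\|v(y)\|^2$ rewrites the $C_k$-transform as
\begin{equation*}
f^{C_k}(x) = \tfrac12 \|u_k(x)\|^2 + h_f(u_k(x)),\qquad h_f(t) \defeq -\epsilon \log \int_{\cY} \phi_t(y)\rho_f(y)\, dy,
\end{equation*}
with $\phi_t(y)\defeq\exp(\langle t, v(y)\rangle/\epsilon)$ depending only on $t$ and $\rho_f(y)\defeq\exp((f(y)-\tfrac12\|v(y)\|^2)/\epsilon)$ depending only on $f$. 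The $f$-independent prefactor $\tfrac12\|u_k(\cdot)\|^2$ contributes $O(N_k^{-1/2})$ by Hoeffding. The crux is that the $f$-dependence of $Z_f(t)\defeq \int \phi_t\rho_f\, dy$ is \emph{linear} in $\rho_f$: for any sign vector $\sigma$, $\frac{1}{N_k}\sum_i \sigma_i Z_f(u_k(x_k^i)) = \int_{\cY} W_{N_k}(y)\rho_f(y)\, dy$ with $W_{N_k}(y)\defeq \frac{1}{N_k}\sum_i \sigma_i \phi_{u_k(x_k^i)}(y)$. By $L^1$-$L^\infty$ duality, $\sup_f |\int W_{N_k}\rho_f\, dy| \leq \|W_{N_k}\|_{L^1}\sup_f \|\rho_f\|_\infty$; a pointwise variance computation gives $\bbE_\sigma W_{N_k}(y)^2 \leq \Phi_{\max}^2/N_k$ uniformly in $y$, whence $\bbE_\sigma \|W_{N_k}\|_{L^1} = O(N_k^{-1/2})$. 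Boundedness of $\cF_k$ together with compactness keep $Z_f$ uniformly within an interval $[c_1,c_2]$ with $c_1>0$, so $-\epsilon\log(\cdot)$ is Lipschitz on the range and Talagrand's contraction principle transfers the $O(N_k^{-1/2})$ rate from $\{Z_f\circ u_k\}$ to $\{h_f\circ u_k\}$, and hence to $\cF_k^{C_k}$.

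The main obstacle is the conceptual step in part (b): recognizing the bilinear/dual-pairing structure that turns the partition function into a linear functional of $\rho_f$ against a fixed exponential kernel generated by $v$, and verifying the uniform lower bound $\inf_{f,t} Z_f(t) > 0$ needed for Talagrand contraction without leaking dependence on the feature dimension $D''$. Part (a) is essentially routine chaining once the Lipschitz inheritance of the $C_k$-transform is observed.
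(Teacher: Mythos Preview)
Your plan for part \textbf{(a)} is the same as the paper's: show that $f^{C_k}$ inherits the $x$-Lipschitz constant of $c_k$, then invoke a Rademacher/covering bound for bounded Lipschitz classes. One caveat: in part (a) the class $\cF_k\subset\cC(\cY)$ is \emph{not} assumed bounded, so there is no uniform sup-norm bound on $\cF_k^{C_k}$ as you claim; each $f^{C_k}$ is bounded on the compact $\cX_k$, but the bound depends on $f$. The paper fixes this by the observation that $\text{Rep}_{X_k}$ is invariant under translating each class member individually, and replaces $\cF_k^{C_k}$ by $\{f^{C_k}-f^{C_k}(\tilde x_k):f\in\cF_k\}$ for a fixed anchor $\tilde x_k$, which \emph{is} uniformly bounded by $L_k\cdot\mathrm{diam}(\cX_k)$. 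With that centering, your covering-number calculation goes through.

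For part \textbf{(b)} your argument is correct but takes a different route from the paper. The paper exploits that $\exp(-\|a-b\|^2/(2\epsilon))$ is the Gaussian kernel: writing $\exp(-c_k(x,y)/\epsilon)=\langle\phi(u_k(x)),\phi(v(y))\rangle_{\cH}$ for the RKHS feature map $\phi$, it obtains $Z_{c_k}(f,x)=\langle\phi(u_k(x)),w_f\rangle_{\cH}$ with $w_f=\int_{\cY}e^{f(y)/\epsilon}\phi(v(y))\,dy$ lying in an $\cH$-ball of radius $R<\infty$ (this is where boundedness of $\cF_k$ enters). The Rademacher bound for linear functionals on an RKHS ball (Bartlett--Mendelson) then gives $\cR_{N_k}(\cG_k)\le R/\sqrt{N_k}$ directly, after which the $-\epsilon\log$ contraction is applied exactly as you do. Your alternative---linearizing $Z_f(t)$ in $\rho_f$, swapping sum and integral to get $\int W_{N_k}\rho_f\,dy$, and bounding via $L^1$--$L^\infty$ duality with a pointwise second-moment estimate on $W_{N_k}$---is more elementary (no RKHS machinery) and arrives at the same rate, picking up an explicit $|\cY|$ factor. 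The paper's route is cleaner conceptually and makes the ``why $N_k^{-1/2}$'' transparent as a linear-class phenomenon; yours is more self-contained. Both need, and both verify, the uniform lower bound $\inf_{f,t}Z_f(t)>0$ before invoking Lipschitz contraction.
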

Substituting \eqref{rademacher-lipschitz} or \eqref{rademacher-kernel} to \eqref{generalization-error-bound2} immediately provides the \textit{statistical consistency} when $N_{1},\dots,N_K\rightarrow\infty$, i.e., vanishing of the estimation error when the sample size grows.

The case \textbf{(a)} here is not very practically useful as the rate suffers from the curse of dimensionality. Still, this result points to one intriguing property of our solver. Namely, we may take \textbf{arbitrarily large} set $\mathcal{F}_k$ (even $\mathcal{F}_k=\mathcal{C}(\mathcal{Y})$!) and still have the guarantees of learning the barycenter. This happens because of $C_{k}$-transforms: informally, they make functions $f_{k}\in\mathcal{F}_k$ smoother and "simplify"\ the set $\mathcal{F}_k$.
In our experiments, we always work with the costs as in \textbf{(b)}. As a result, our estimation error is $O(\sum_{k=1}^{K}N_k^{-1/2})$; this is a \textit{standard fast and dimension-free convergence rate}. In practice, $\mathcal{F}_k$ are usually neural nets. They are indeed bounded, as required in \textbf{(b)}, if their weights are constrained.

While the estimation error usually decreases when the sample sizes tend to infinity, it is natural to wonder whether the approximation error can be also made arbitrarily small. We positively answer this question when the standard fully-connected neural nets (multi-layer perceptrons) are used.

\begin{theorem}[\normalfont{Vanishing Approximation Error [\hyperref[proof-universal-approximation]{proof ref.}]}]
\label{thm-universal-approximation} Let $\sigma:\mathbb{R}\rightarrow\mathbb{R}$ be an activation function. Assume that it is non-affine and there is an $\widetilde{x}\in \mathbb{R}$ at which $\sigma$ is differentiable and $\sigma'(\widetilde{x})\neq 0$. Then for every $\delta>0$ there exist $K$ multi-layer perceptrons $g_{k}:\mathbb{R}^{D}\rightarrow\mathbb{R}$ with activations $\sigma$
for which the congruent functions $f_{k}= g_{k}-\sum_{k=1}^{K}\lambda_k g_k$ satisfy
\vspace{-1.5mm}\begin{gather*}
	\sum_{k = 1}^K \lambda_k \KL{\pi_k^*}{\pi^{f_k}}=(\cL^{*}-\cL(f_{1},\dots,f_{K}))/\epsilon < \delta/\epsilon.
\end{gather*}\vspace{-3mm}\newline
Furthermore, 
each $g_k$ has width at-most $D+4$.
\end{theorem}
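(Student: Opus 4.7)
}
The plan is to reduce the statement to two ingredients: an identification of a good pair of \emph{continuous} congruent potentials, and a quantitative universal approximation theorem with the constrained width $D+4$. By Theorem~\ref{thm-duality-gaps}, the KL bound in the statement is equivalent to controlling the duality gap $\mathcal{L}^{*}-\mathcal{L}(f_1,\dots,f_K)$. Hence it suffices to find congruent MLP potentials $f_1,\dots,f_K$ for which this gap is $<\delta$.

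\textbf{Step 1 (Reference continuous potentials).} Invoke Theorem~\ref{thm-our-dual-eot-bary-problem}: the dual supremum is taken over congruent $K$-tuples in $\mathcal{C}(\mathcal{Y})$. Pick congruent continuous potentials $\widetilde f_1,\dots,\widetilde f_K$ with $\mathcal{L}^{*}-\mathcal{L}(\widetilde f_1,\dots,\widetilde f_K) < \delta/2$.

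\textbf{Step 2 (Continuity of $\mathcal{L}$ in the sup norm).} Using $f_k^{C_k}(x_k)=-\epsilon\log Z_{c_k}(f_k,x_k)$ together with compactness of $\mathcal{X}_k,\mathcal{Y}$ and continuity of $c_k$, show that the map $f_k\mapsto f_k^{C_k}$ is $1$-Lipschitz from $(\mathcal{C}(\mathcal{Y}),\|\cdot\|_\infty)$ to $(\mathcal{C}(\mathcal{X}_k),\|\cdot\|_\infty)$ (direct calculation from the log-sum-exp representation). This implies that $\mathcal{L}$ is $1$-Lipschitz in each coordinate w.r.t.\ the sup norm, so there exists $\eta>0$ such that whenever $\|f_k-\widetilde f_k\|_\infty<\eta$ for every $k$, one has $|\mathcal{L}(f_1,\dots,f_K)-\mathcal{L}(\widetilde f_1,\dots,\widetilde f_K)|<\delta/2$.

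\textbf{Step 3 (MLP approximation with width $D+4$).} Since each $\widetilde f_k$ is continuous on the compact set $\mathcal{Y}\subset\mathbb{R}^{D}$ and $\sigma$ satisfies the stated regularity condition (non-affine, differentiable at some point with nonzero derivative), invoke a minimum-width universal approximation theorem (e.g.\ of Kidger--Lyons / Park et al.\ type): there exist fully-connected networks $g_k:\mathbb{R}^D\to\mathbb{R}$ with activation $\sigma$ and width at most $D+4$ such that $\|g_k-\widetilde f_k\|_\infty < \eta/2$ on $\mathcal{Y}$, for every $k\in\overline K$.

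\textbf{Step 4 (Enforcing congruence for free).} Define $f_k\defeq g_k-\sum_{k'=1}^{K}\lambda_{k'}g_{k'}$. By construction $\sum_{k}\lambda_k f_k\equiv 0$. Because the reference potentials are congruent, $\sum_{k'}\lambda_{k'}\widetilde f_{k'}\equiv 0$, and hence
\begin{equation*}
    \|f_k-\widetilde f_k\|_\infty = \Big\|(g_k-\widetilde f_k)-\sum_{k'}\lambda_{k'}(g_{k'}-\widetilde f_{k'})\Big\|_\infty \le 2\max_{k'}\|g_{k'}-\widetilde f_{k'}\|_\infty < \eta.
\end{equation*}
Combining this with Step~2 yields $\mathcal{L}(\widetilde f_1,\dots,\widetilde f_K)-\mathcal{L}(f_1,\dots,f_K)<\delta/2$, and adding the bound from Step~1 gives $\mathcal{L}^{*}-\mathcal{L}(f_1,\dots,f_K)<\delta$. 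Dividing by $\epsilon$ and applying Theorem~\ref{thm-duality-gaps} concludes the proof.

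\textbf{Anticipated obstacle.} The only non-routine part is citing the correct minimum-width universal approximation theorem: the width-$(D+4)$ bound for arbitrary targets in $\mathcal{C}(\mathcal{Y})$ under the weak regularity on $\sigma$ requires a specific reference (such as Kidger--Lyons 2020 or its refinements). The remaining sup-norm continuity of $\mathcal{L}$ and the trick of absorbing congruence via subtraction are straightforward once that ingredient is in place.
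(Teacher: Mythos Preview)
Your proposal is correct and follows essentially the same route as the paper's proof: pick near-optimal congruent continuous potentials, use the $1$-Lipschitz continuity of the $C_k$-transform in sup norm (the paper proves this as Proposition~\ref{prop:Ctrafo}\ref{it:Ctrafo.continuity}), approximate each potential by a width-$(D+4)$ MLP, and restore congruence by subtracting the $\lambda$-weighted average. The only cosmetic differences are that the paper first applies the Dugundji extension theorem to pass from $\mathcal{Y}$ to all of $\mathbb{R}^D$ before invoking the UAT, and cites \cite[Proposition~53]{PaponKratsios_2022_JMLR_UAT} rather than Kidger--Lyons for the width-$(D+4)$ result under the stated assumptions on $\sigma$.
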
\vspace{-2mm}
Importantly, our Theorem \ref{thm-universal-approximation} is more than just result on universal approximation since it deals with (i) \textit{congruent} potentials and (ii) entropic $C_k$-transforms. In particular, only specific properties of the entropic $C_k$-transforms allow deriving the desired universal approximation statement, see the proof.
\vspace{1mm}\newline
\textbf{Summary.} Our results of this section show that both the estimation and approximation errors can be made arbitrarily small given a sufficient amount of data and large neural nets, allowing to perfectly recover the EOT plans $\pi_{k}^{*}$.

\vspace{-1mm}\noindent\textbf{Relation to prior works. } To our knowledge, the generalization and the universal approximation are novel results with no analogues established for any other continuous barycenter solver. Our analysis shows that the EOT barycenter objective \eqref{loss-function} is well-suited for statistical learning and approximation theory tools. This aspect distinguishes our work from the predecessors, where complex optimization objectives may not be as amenable to rigorous study.

\vspace{-3mm}\subsection{Learning EOT barycenter on data manifold}
\vspace{-2mm}
\label{subsec-manifold}

Averaging complex data distributions by means of EOT barycenter directly in the data space may be undesirable. In particular, for image data domain:
\vspace{-5mm}\newline
\begin{itemize}[leftmargin=*]
    \item the entropic barycenter contains noisy images, see, e.g., our MNIST 0/1 experiment, \S \ref{sec-exp-mnist}. This is due to the ``blurring bias'' bias \cite{chizat2023doubly, janati2020debiased} of our entropic barycenter setup and reliance on MCMC.
    \vspace{-1mm}
    \item searching for (entropic) barycenter is not very practical for standard OT cost functions like $\ell^2$. It is known that the true unregularized  ($\epsilon=0$) $\ell^{2}$-barycenter of several image domains consists of just some pixel-wise averages of images from these source domains, which is not practically useful.
\end{itemize}
To alleviate the problem, we propose solving the (entropic) barycenter problem on some \textit{a priori} known data manifold $\mathcal{M}$, where we want the barycenter to be concentrated on. In our experiments (\S \ref{sec-exp-mnist}, \S \ref{sec-exp-celeba}) these manifolds are given by pre-trained StyleGAN  \cite{karras2019style} generator models ${G:\mathcal{Z}\rightarrow\mathcal{Y}}$; $\cZ$ is the \textit{latent} space, $\mathcal{M} = G(\mathcal{Z})$. Technically speaking, to adapt our Alogithm \ref{algorithm-main} for manifold-constrained setup, we propose solving the barycenter problem in \textit{latent} space $\mathcal{Z}$ with \textit{modified} cost functions $c_{k, G}(x_k, z) := c_k(x_k, G(z))$. We emphasize that such costs are \textbf{general} (not $\ell^2$ cost!) because $G$ is a non-trivial StyleGAN generator. Hence, while our proposed manifold-constrained barycenter learning setup could be used on par with other OT barycenter solvers, these barycenter solvers \textbf{should} support general costs. In particular, the majority of competitive methods from Table \ref{table-bary-solvers-comparison} \textit{are not adjustable to the manifold setup} as they work exclusively with $\ell^{2}$.

\vspace{-1mm}\noindent\textbf{Relation to prior works. } While the utilization of data manifolds given by pre-trained (foundational) models is ubiquitous in generative modeling, the adaptation of this technique for Optimal Transport barycenter is a novel idea. Apart from our work, this idea is exploited in follow-up paper \cite{kolesov2024estimating}.
 
\vspace{-3mm}
\section{Experiments}
\vspace{-2mm}
\label{sec:experiments}

We assess the performance of our barycenter solver on small-dimensional illustrative setups (\S\ref{sec-exp-toy}) and in image spaces (\S\ref{sec-exp-mnist}, \S\ref{sec-exp-celeba}). 
The source code for our solver is  written in the \texttt{PyTorch} framework and available at \url{https://github.com/justkolesov/EnergyGuidedBarycenters}.
The experiments are issued in the form of convenient \texttt{*.ipynb} notebooks. Reproducing the most challenging experiments (\S\ref{sec-exp-mnist}, \S\ref{sec-exp-celeba}) requires less than $12$ hours on a single TeslaV100 GPU. The \underline{details} of the experiments, \underline{extended experimental results} are in Appendix \ref{sec-exp-details}, \underline{additional experiments with single-cell data} are given in Appendix \ref{sec-single-cell}.

\vspace{-1mm}\textbf{Disclaimer.} Evaluating how well our solver recovers the EOT barycenter is challenging because the ground truth barycenter is typically unknown. In some cases, the true \textit{unregularized} barycenter $(\epsilon=0)$ can be derived (see below). The EOT barycenter for sufficiently small $\epsilon>0$ is expected to be close to the unregularized one. Therefore, in most cases, our evaluation strategy is to compare the computed EOT barycenter (for small $\epsilon$) with the unregularized one. In particular, we use this strategy to \underline{quantitatively evaluate} our solver in the Gaussian case, see Appendix \ref{sec-exp-bary-gauss}.

\vspace{-3mm}\subsection{Barycenters of Toy Distributions}\vspace{-2mm}
\label{sec-exp-toy}

\vspace{-1mm}\begin{figure*}[!h]
\begin{subfigure}[b]{0.254\linewidth}
\centering
\includegraphics[width=0.995\linewidth]{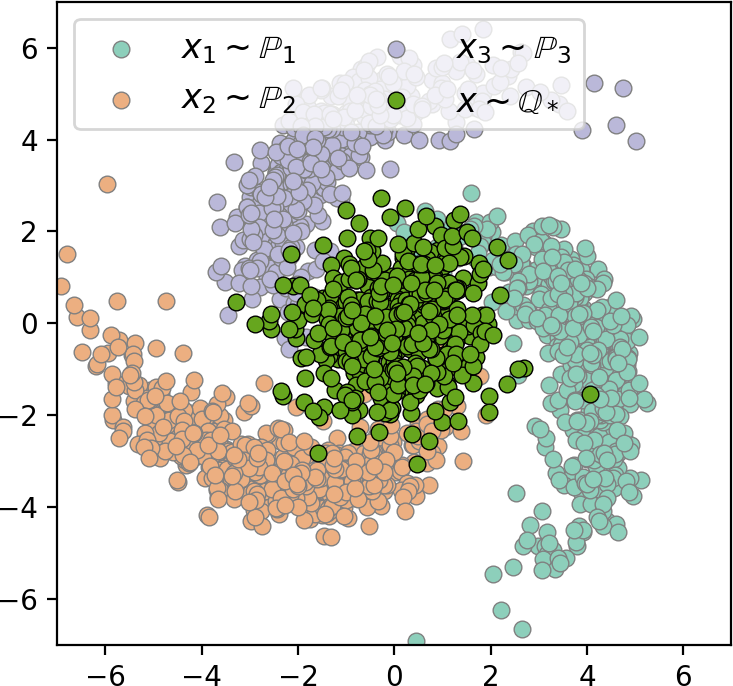}
\caption{\centering\scriptsize The true unregularized barycenter for the twisted cost.}
\vspace{-1mm}
\label{fig:twister-general-gt}
\end{subfigure}
\vspace{-1mm}\hspace{0.3mm}\begin{subfigure}[b]{0.232\linewidth}
\centering
\includegraphics[width=0.995\linewidth]{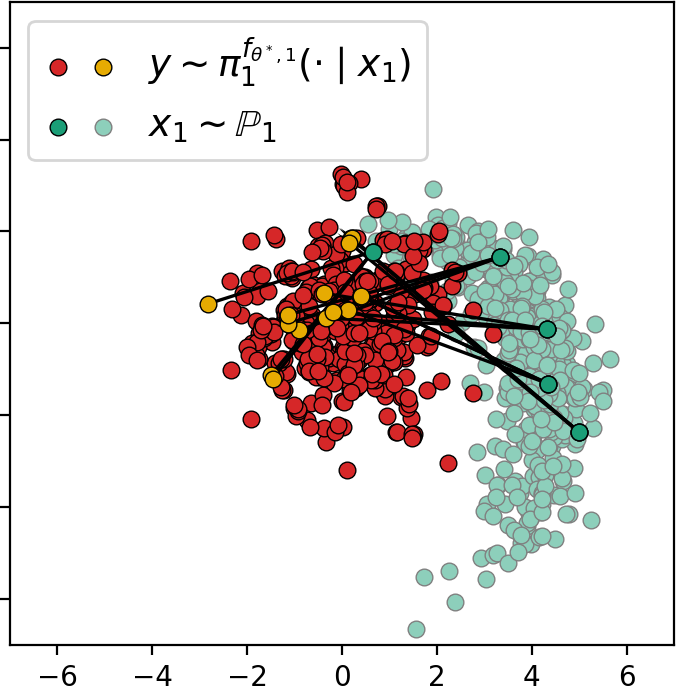}
\caption{\centering\scriptsize \textbf{Our} EOT barycenter for the twisted cost (map from $\bbP_{1}$).}
\label{fig:twister-general-our}
\vspace{-1mm}
\end{subfigure}
\hspace{0mm}
\vline
\hspace{2mm}\begin{subfigure}[b]{0.232\linewidth}
\includegraphics[width=0.995\linewidth]{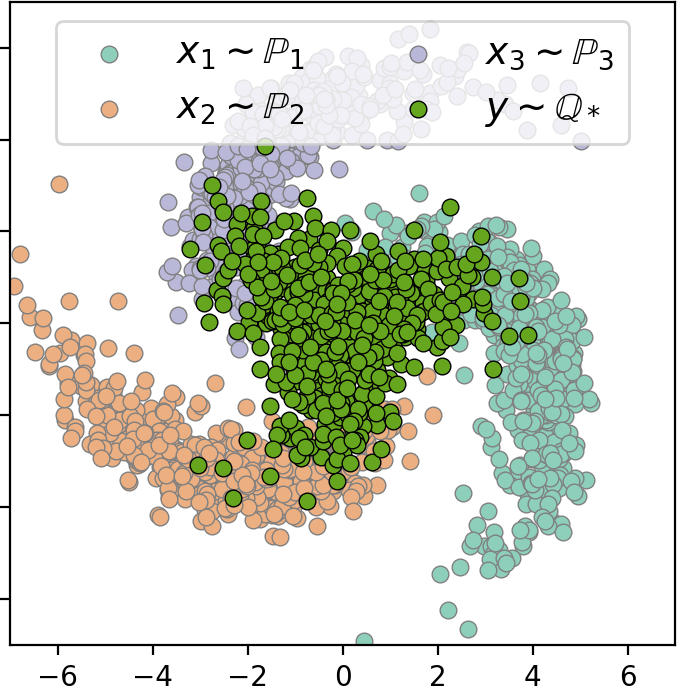}
\caption{\centering\scriptsize The true unregularized barycenter for $\ell^{2}$ cost.}
\label{fig:twister-l2-gt}
\vspace{-1mm}
\end{subfigure}
\hspace{0.3mm}\begin{subfigure}[b]{0.232\linewidth}
\centering
\includegraphics[width=0.995\linewidth]{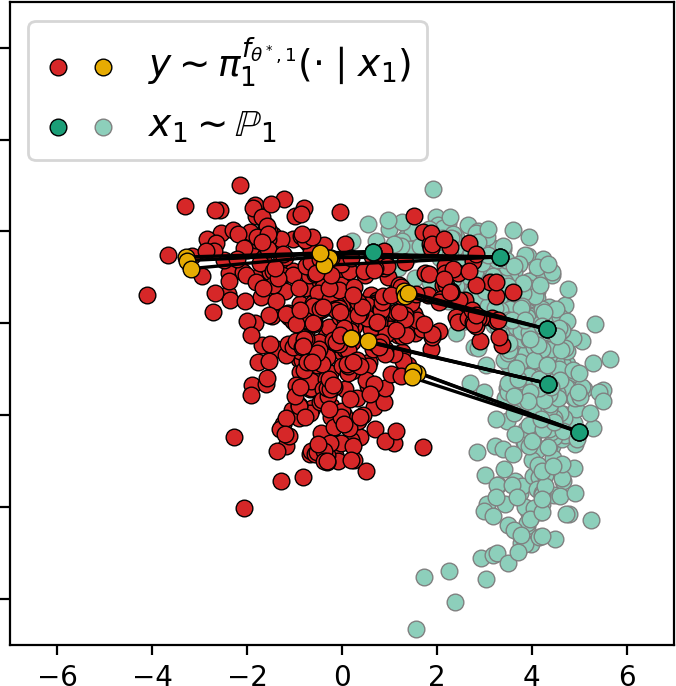}\caption{\centering\scriptsize \textbf{Our} EOT barycenter for $\ell^{2}$ cost (map from $\bbP_{1}$).}
\label{fig:twister-l2-our}
\vspace{-1mm}
\end{subfigure}
 \vspace{1mm} 
\caption{\centering\small \textit{2D twister example}: The true barycenter of 3 comets vs. the one computed by our solver with $\eps=10^{-2}$. Two costs $c_{k}$ are considered: the twisted cost (\ref{fig:twister-general-gt}, \ref{fig:twister-general-our}) and $\ell^{2}$ (\ref{fig:twister-l2-gt}, \ref{fig:twister-l2-our}).}

\vspace{-2mm}
\label{fig:twister}
\end{figure*}

\textbf{2D Twister}. Consider the map $u:\mathbb{R}^{2}\rightarrow\mathbb{R}^{2}$ which, in the \textit{polar coordinate system}, is represented by $\mathbb{R}_{+}\times [0,2\pi)\ni (r,\theta)\mapsto \big(r,(\theta-r) \text{ mod }2\pi\big)$. The \underline{cartesian version} of $u$ is presented in Appendix \ref{sec-exp-details-toy}.
Let $\bbP_{1},\bbP_{2},\bbP_{3}$ be $2$-dimensional distributions as shown in Fig.\ \ref{fig:twister-general-gt}. 
For these distributions and uniform weights $\lambda_{k}=\frac{1}{3}$, the unregularized barycenter ($\epsilon=0$) for the \textbf{twisted} cost $c_{k}(x_k,y)=\frac{1}{2}\|u(x_{k})-u(y)\|^{2}$ can be \underline{derived analytically}, see Appendix \ref{sec-exp-details-toy}. 
The barycenter is the centered Gaussian distribution which is also shown in Fig.\ \ref{fig:twister-general-gt}. We run the experiment for this cost with $\epsilon=10^{-2}$, and the results are recorded in Fig.\ \ref{fig:twister-general-our}.
We see that it qualitatively coincides with the true barycenter. 
For completeness, we also show the EOT barycenter computed with our solver for $\ell^{2}(x,y)=\frac{1}{2}\|x-y\|^{2}$ costs (Fig.\ \ref{fig:twister-l2-gt}) and the same regularization $\epsilon$.
The true $\ell^{2}$ barycenter is estimated by using the \texttt{free\_support\_barycenter} solver from POT package \cite{flamary2021pot}. 
We stress that the twisted cost barycenter and $\ell^2$ barycenter differ, and so do the learned conditional plans: the $\ell^{2}$ EOT plan (Fig.\ \ref{fig:twister-l2-our}) expectedly looks more well-structured while for the twisted cost (Fig.\ \ref{fig:twister-general-our}) it becomes more chaotic due to non-trivial structure of this cost.

\begin{wrapfigure}{r}{0.25\textwidth}
\vspace{-6mm}
\includegraphics[width=0.99\linewidth]{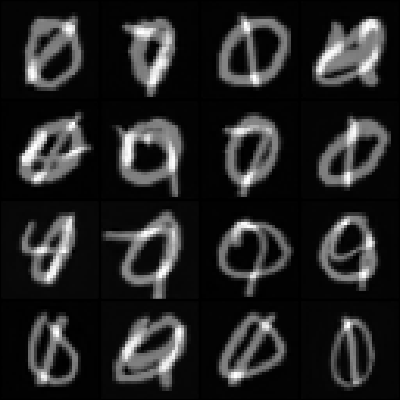} 
\caption{\centering\small Samples from the StyleGAN $G$ defining the polluted manifold $\mathcal{M}$.}
\label{fig:stylegan_0147}
\vspace{-7mm}
\end{wrapfigure}

\vspace{-0.5mm}\textbf{Sphere.} In this experiment, we look for the barycenter of four von Mises distributions $\bbP_n$ supported on 3D sphere, see Figure \ref{fig:sphere}. The cost functions are $c_k(x_k, y) = \frac{1}{2}\arccos^2 \langle x_k, y \rangle$, the regularization is $\epsilon = 10^{-2}$. The learned potentials $f_{\theta, k}$ operate with ambient $\bbR^3$ vectors. When performing MCMC, we project each Langevin step to the sphere. Our qualitative results are shown on Figure \ref{fig:sphere}. While the ground truth solution to the considered problem is unknown, the learned barycenter looks reasonable. This showcases the applicability of our approach to non-standard non-quadratic experimental setups.
\vspace{-3mm}\subsection{Barycenters of MNIST Classes 0 and 1}
\vspace{-2mm}
\label{sec-exp-mnist}

A classic experiment considered in the continuous barycenter literature \cite{fan2021scalable,korotin2022wasserstein,noble2023treebased,cohen2020estimating} is averaging of distributions of MNIST 0/1 digits with weights $(\frac{1}{2},\frac{1}{2})$ in the grayscale image space $\cX_1\!=\!\cX_2\!=\!\cY\!=\![-1,1]^{32\times 32}$. The true unregularized  ($\epsilon=0$) $\ell^{2}$-barycenter images $y$ are direct pixel-wise averages $\frac{x_1+x_2}{2}$ of pairs of images $x_1$ and $x_2$ coming from the $\ell^{2}$ OT plan between 0's ($\bbP_1$) and 1's ($\bbP_2$). In Fig. \ref{fig:MNIST-0-1}, we show the unregularized $\ell^{2}$ barycenter computed by \cite[SCWB]{fan2021scalable}, \cite[WIN]{korotin2022wasserstein}.

\textbf{Data space EOT barycenter.} To begin with, we employ our solver to compute the $\epsilon$-regularized EOT $\ell^{2}$-barycenter directly in the image space $\cY$ for $\epsilon=10^{-2}$. We emphasize that the true entropic barycenter slightly differs from the unregularized one. To be precise, it is expected that regularized barycenter images are close to the unregularized barycenter images but with additional noise.
In Fig.\ \ref{fig:MNIST-0-1}, we see that our solver (data space) recovers the noisy barycenter images exactly as expected. \

\vspace{-0mm}\begin{figure*}[!t]
     \centering
    \begin{subfigure}[b]{0.3367\textwidth}
          \centering
          \includegraphics[width=0.95\linewidth]{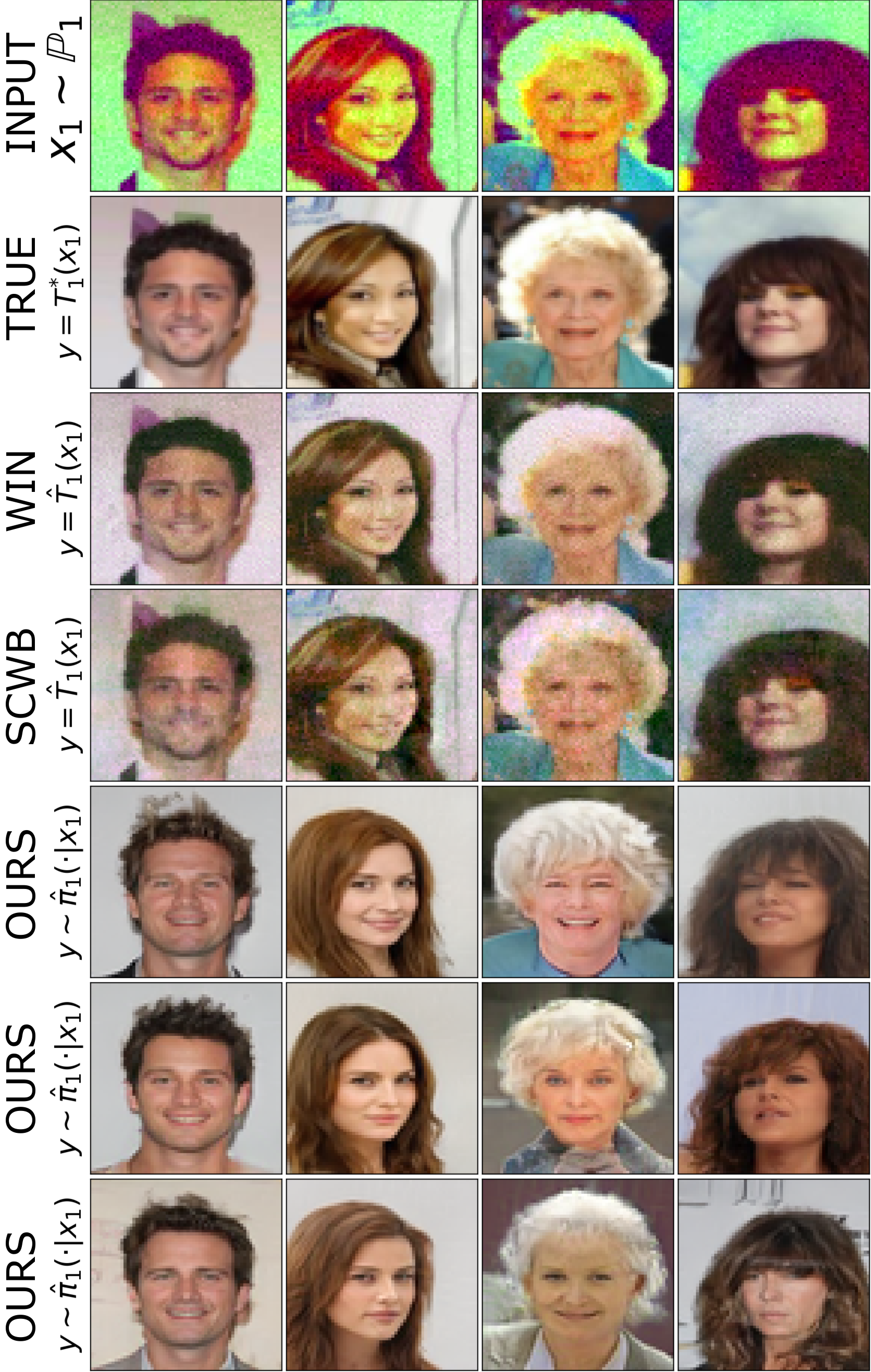}
         \caption{Maps from $\mathbb{P}_{1}$ to the barycenter.}
         \label{fig:celeba-win_0}
     \end{subfigure}\hfill
      \begin{subfigure}[b]{0.32\textwidth}
         \centering
         \includegraphics[width=0.95\linewidth]{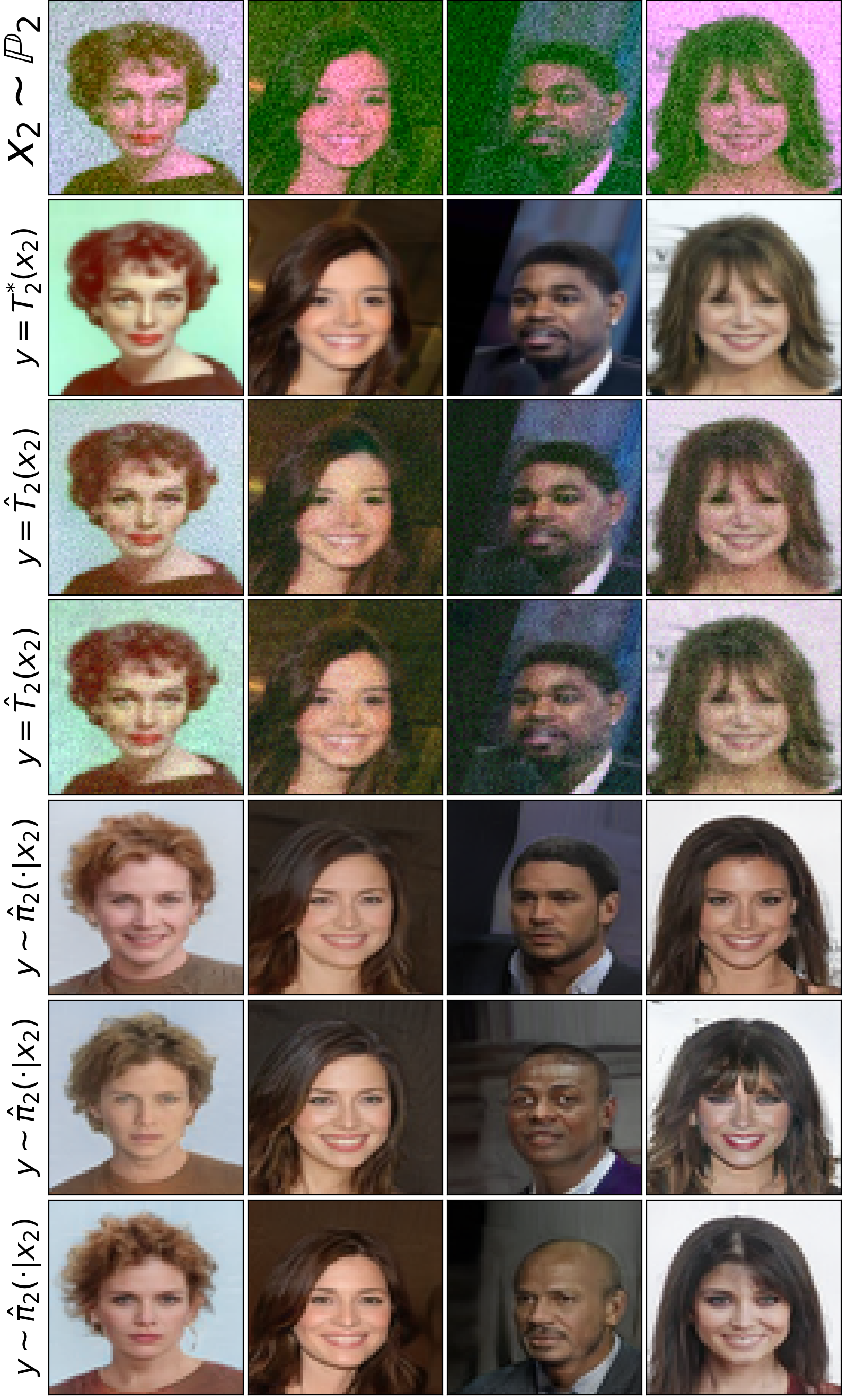}
        \caption{Maps from $\mathbb{P}_{2}$ to the barycenter.}
         \label{fig:celeba-win_1}
      \end{subfigure}
      \begin{subfigure}[b]{0.32\textwidth}
         \centering
         \includegraphics[width=0.95\linewidth]{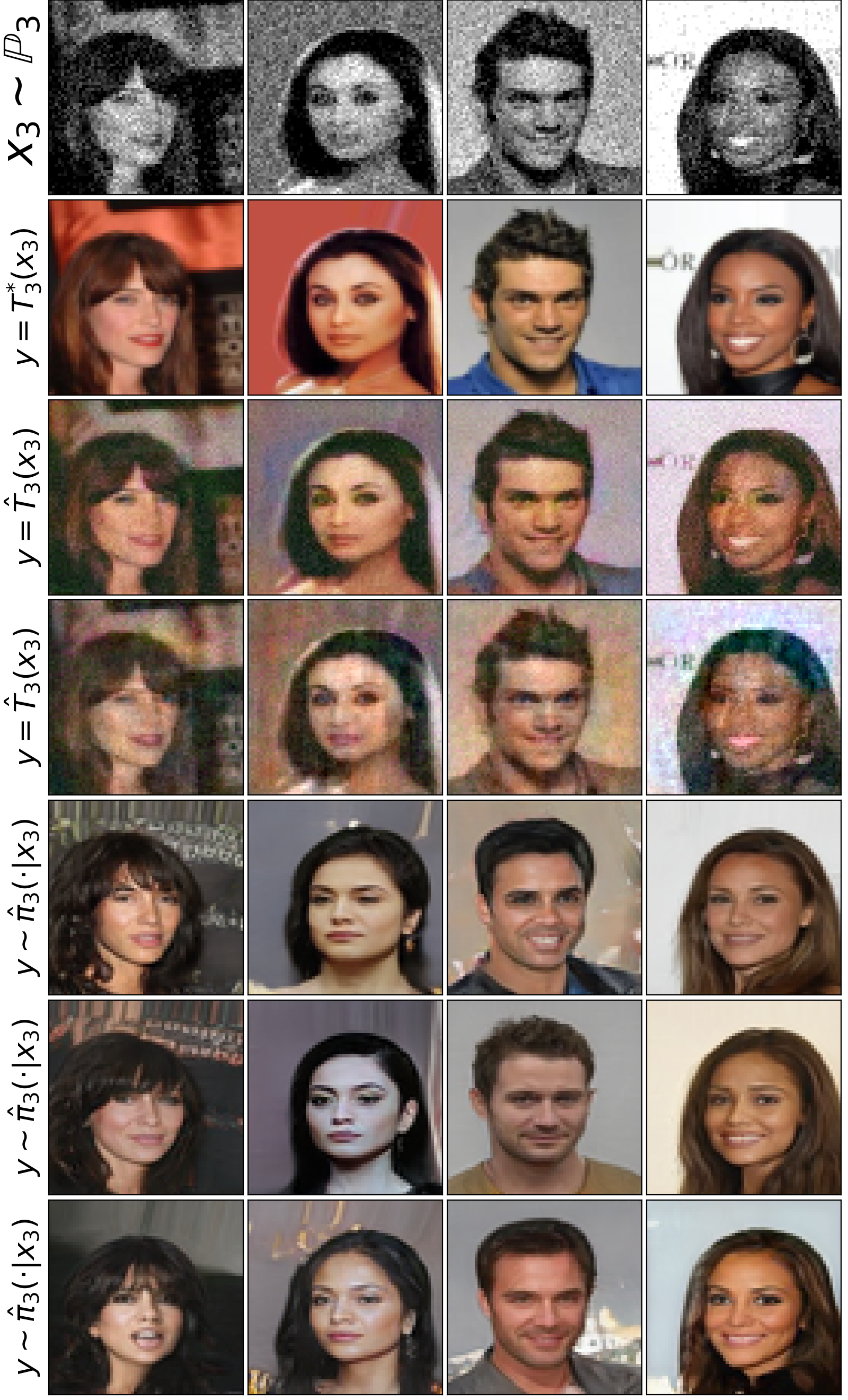}
         \caption{Maps from $\mathbb{P}_{3}$ to the barycenter.}
         \label{fig:celeba-win_2}
      \end{subfigure}
 \vspace{-1mm}
     \caption{\small \textit{Experiment on the Ave, celeba! barycenter dataset.} The plots compare the transported inputs ${x_{k}\sim \mathbb{P}_{k}}$ to the barycenter learned by various solvers. The true unregularized $\ell^{2}$ barycenter of $\bbP_{1},\bbP_{2}, \bbP_3$ are the clean celebrity faces, see \cite[\S 5]{korotin2022wasserstein}.}
     \label{fig:ave-celeba-main}
 \vspace{-2mm}
 \end{figure*}

\textbf{Manifold-constrained EOT barycenter.} 
Following the reasoning from \S \ref{subsec-manifold}, we propose to restrict the search space for our algorithm to some pre-defined manifold $\mathcal{M}$.
 \begin{figure*}[!t]
     \vspace{-2.5mm}
     \centering
    \begin{subfigure}[b]{0.49\textwidth}
          \centering
          \includegraphics[width=0.99\linewidth]{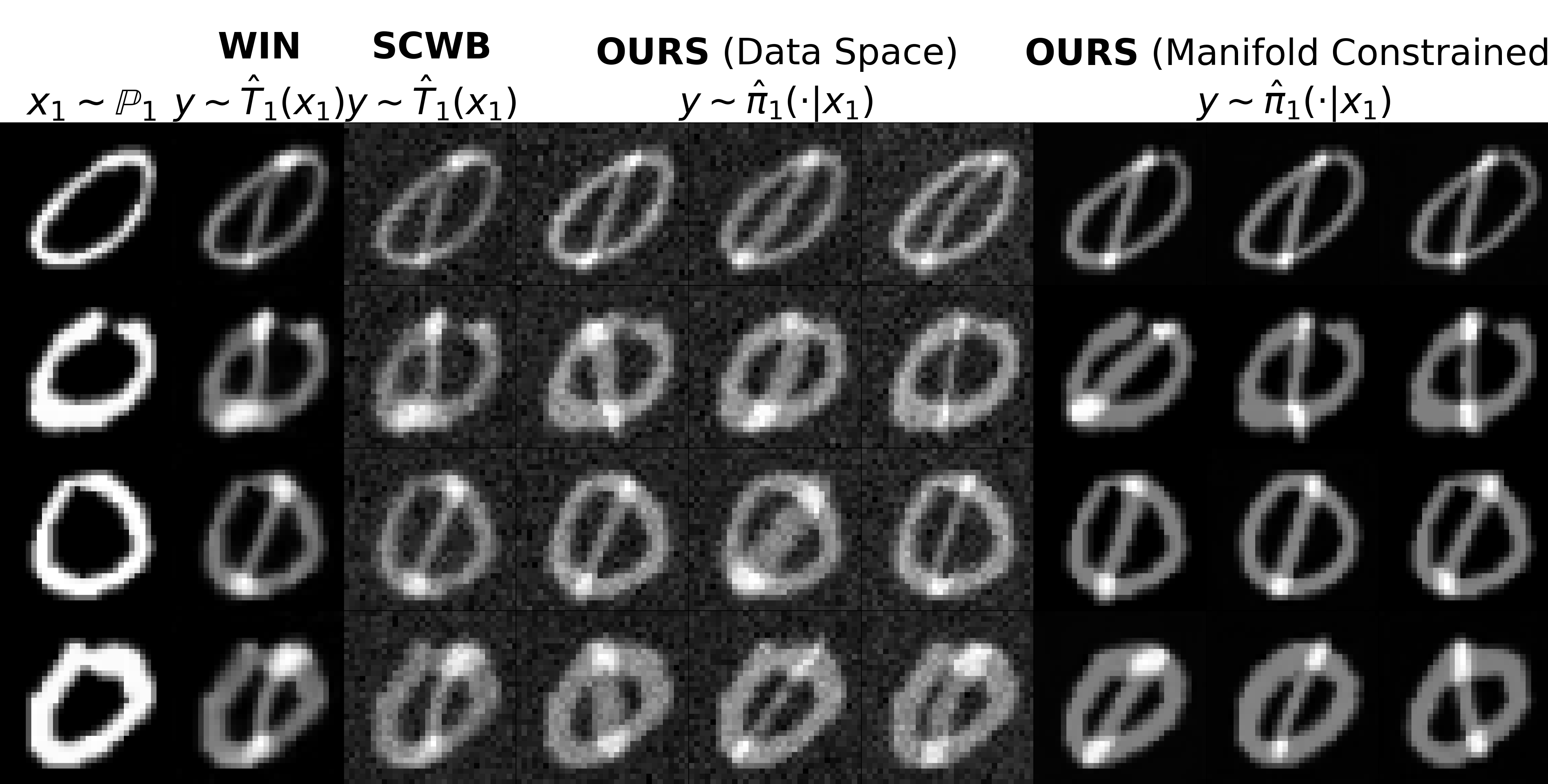}
         \caption{Learned plans from $\mathbb{P}_{1}$ (zeros) to the barycenter.}
         \label{fig:mnist-0-1-map_1}
     \end{subfigure}\hfill
      \begin{subfigure}[b]{0.49\textwidth}
         \centering
         \includegraphics[width=0.99\linewidth]{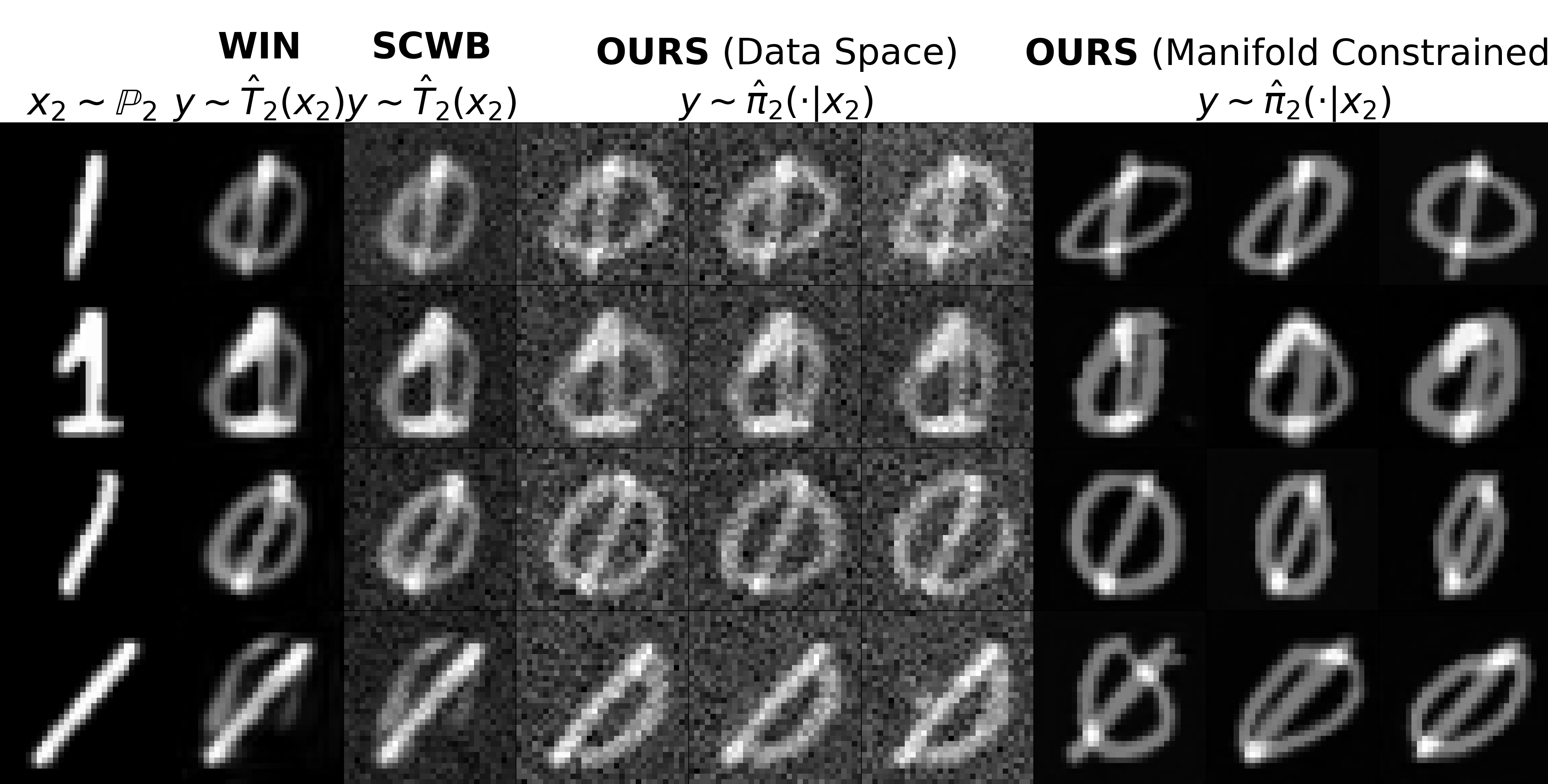}
         \caption{Learned plans from $\mathbb{P}_{2}$ (ones) to the barycenter.}
         \label{fig:mnist-0-1-map_2}
      \end{subfigure}
 \vspace{-0.5mm}
     \caption{\small Qualitative comparison of barycenters of MNIST 0/1 digit classes computed with barycenter solvers in the image space w.r.t.\ the pixel-wise $\ell^{2}$. Solvers SCWB and WIN only learn the unregularized barycenter ($\epsilon=0$) directly in the data space. In turn, our solver learns the EOT barycenter in data space as well as it can learn EOT barycenter restricted to the StyleGAN manifold  ($\eps=10^{-2}$).}
     \label{fig:MNIST-0-1}
     \vspace{-0.0mm}
 \end{figure*}
As discussed earlier, the support of the image-space unregularized $\ell^{2}$-barycenter is a certain \textit{subset} of $\mathcal{M}'\defeq \{\frac{x_1+x_2}{2}\text{ }|\text{ }x_{1}\in\text{Supp}(\bbP_{1}),x_{2}\in \text{Supp}(\bbP_2)\}$. 
To achieve this, we train a StyleGAN \cite{karras2019style} model ${G:\mathcal{Z}\rightarrow\mathcal{Y}}$ with $\mathcal{Z}=\mathbb{R}^{512}$ to generate some \textbf{even larger} manifold $\mathcal{M}=G(\mathcal{Z})$ which is expected to contain $\mathcal{M}'$. Namely, we use all possible pixel-wise half-sums $\frac{x_{1}+x_2}{2}$ of digits $0$ as $x_{1}$ and $\{1,4,7\}$ as $x_{2}$, see Figure \ref{fig:stylegan_0147} with the trained StyleGAN samples. That is, our final constructed manifold $\mathcal{M}$ is \textbf{polluted} with additional samples (e.g., averages of digits 0 and 7) which should not to lie in the support of the barycenter. Then, we use our solver with $\epsilon=10^{-2}$ to search for the barycenter of 0/1 digit distributions on $\cX_1,\cX_2$ which lies in the latent space $\mathcal{Z}$ w.r.t.\ costs $c_{k, G}(x,z)\defeq \frac{1}{2}\|x-G(z)\|^{2}$. This can be interpreted as learning the EOT $\ell^{2}$-barycenter in the ambient space but constrained to the StyleGAN-parameterized manifold $G(\mathcal{Z})$. The barycenter $\bbQ^{*}$ is some distribution of the latent variables $z$, which can be pushed to the manifold $G(\mathcal{Z})\subset\cY$ via $G(z)$.

The results are in Fig.\ \ref{fig:MNIST-0-1}.
There is \textbf{(a)} no noise compared to the data-space EOT barycenter because of the manifold constraint, and \textbf{(b)} our solvers correctly ignores polluted samples from $\mathcal{M}$.

\vspace{-3mm}\subsection{Evaluation on the Ave, celeba! Dataset}
\label{sec-exp-celeba}\vspace{-2mm}

\begin{wraptable}{r}[15pt]{6cm}
	\vspace{-3mm}
	\centering
	\scriptsize
	\begin{tabular}{|x{13mm}|x{7mm}|x{7mm}|x{8.3mm}|}
		\hline
		\multirow{2}{*}{\textit{Solver}} & \multicolumn{3}{|c|}{\textit{FID}$\downarrow$ of plans to barycenter}  \\  
		\cline{2-4}
		& $k\!=\!1$ & $k\!=\!2$ & $k\!=\!3$  \\
		\hline
		SCWB \cite{fan2021scalable} &   56.7 & 53.2 & 58.8 \\
		\hline
		WIN \cite{korotin2022wasserstein} &  49.3 & 46.9 & 61.5 \\  \hline
		\textbf{Ours} &  \textbf{8.4} (.3) &  \textbf{8.7} (.3) &  \textbf{10.2} (.7) \\ \hline
	\end{tabular}
	\vspace{-1mm}
	\hspace*{4mm}\caption{\centering\small \hspace{-2mm} FID scores of images mapped \protect\newline from inputs $\mathbb{P}_{k}$ to the barycenter.}
	\label{table-fid-ave-celeba}
	\vspace{-3mm}
\end{wraptable}

In \cite{korotin2022wasserstein}, the authors developed a theoretically grounded methodology for finding probability distributions whose unregularized $\ell^{2}$ barycenter is known by construction. 
Based on the CelebA faces dataset \cite{liu2015faceattributes}, they constructed an Ave, celeba! dataset containing 3 degraded subsets of faces. 
The true $\ell^{2}$ barycenter w.r.t.\ the weights $(\frac{1}{4},\frac{1}{2},\frac{1}{4})$ is the distribution of Celeba faces itself. 
This dataset is used to test how well our approach recovers the barycenter.

We follow the EOT manifold-constrained setup (\S \ref{subsec-manifold}) and train the StyleGAN on unperturbed celeba faces. 
This might sound a little bit unfair, but our goal is to demonstrate the learned transport plan to the constrained barycenter rather than unconditional barycenter samples (recall the setup in \S\ref{subsec:computational_aspects}). 
Hence, we learn the constrained EOT barycenter with $\epsilon=10^{-4}$.
In Fig.\ \ref{fig:ave-celeba-main}, we present the results, depicting samples from the learned plans from each $\bbP_{k}$ to the barycenter. 
Overall, the map is qualitatively good, although sometimes failures in preserving the image content may occur. 
This is presumably due to MCMC inference getting stuck in local minima of the energy landscape. For comparison, we also show the results of the solvers by \cite[SCWB]{fan2021scalable}, \cite[WIN]{korotin2022wasserstein}. Additionally, we report the FID score \cite{heusel2017gans} for images mapped to the barycenter in Table \ref{table-fid-ave-celeba} (std. deviations for our method correspond to running the inference with different random seeds). 
Owing to the manifold-constrained setup, the FID score of our solver is significantly smaller.
\vspace{-3mm}
\section{Potential Impact, Limitations and Broader Impact}
\label{sec-limitations-broad-impact} 
\vspace{-2mm}

\textbf{Potential impact.} In our work, we propose a novel approach for solving EOT barycenter problems which is applicable to \textit{general OT costs}. From the practical viewpoint, we demonstrate the ability to restrict the sought-for barycenter to the \textit{image manifold} by utilizing a pretrained generative model. Our findings may be applicable to a list of important \underline{real-world applications}, see Appendix~\ref{app:broader-potential-apps}. We believe that our large-scale barycenter solver will leverage industrial \& socially-important problems. 

\textbf{Methodological limitations}. The methodological limitations of our approach are mostly the same as those of EBMs. 
It is worth mentioning the usage of MCMC during the training/inference. The basic ULA algorithm which we use in \S\ref{sec-algorithm} may poorly converge to the desired distribution $\mu_x^f$.
In addition, MCMC sampling is time-consuming. We leave the search for more efficient sampling procedures for our solver, e.g., \cite{levy2018generalizing, song2017nice, habib2018auxiliary,neklyudov2020involutive,hoffman2019neutra,turitsyn2011irreversible,lawson2019energy, du2023reduce}, for future research. We also note that our theoretical analysis in \S\ref{subsec:bounds&approx} does not take into the account the optimization errors appearing due to the gradient descent and MCMC. The analysis of these quantities is a completely different domain in machine learning and out of the scope of our work. As the most generative modelling research, we do not attempt to analyse these errors.

\textbf{Problem setup limitations.} Our paper aims at solving Entropic OT barycenter problem. In the image data space, due to utilization of the Entropy, the learned barycenter distribution may contain noisy images. However, the utilization of our proposed StyleGAN-inspired manifold technique \textbf{entirely} alleviates the problem with the noise. This is demonstrated by our latent-space experiments with MNIST 0/1 (manifold space) and Ave Celeba! dataset.

\textbf{Broader impact.}
This paper presents work whose goal is to advance the field of Machine Learning. There are many potential societal consequences of our work, none which we feel must be specifically highlighted here.

\vspace{-3mm}\section{Acknowledgements}\vspace{-2mm} Skoltech was supported by the Analytical center under the RF Government (subsidy agreement 000000D730321P5Q0002, Grant No. 70-2021-00145 02.11.2021).  AK acknowledges financial support from the NSERC Discovery Grant No.\ RGPIN-2023-04482.  We would like to express special thanks to Vladimir Vanovskiy from Skoltech for the insightful discussions and details on geological modelling (Appendix~\ref{app:broader-potential-apps}).

\newpage
\appendix

\bibliographystyle{plain}
\bibliography{references}

\newpage

\section{Proofs}
\label{sec-profs}

\subsection{Auxiliary Statements}
We start by showing some basic properties of the $C$-transform which will be used in the main proofs.
\begin{proposition}[Properties of the $C$-transform]\label{prop:Ctrafo}
    Let $f_1,f_2 \colon \cY \to \bbR$ be two measurable functions which are bounded from below. It holds that
    \begin{enumerate}[label = (\roman*)]
        \item \label{it:Ctrafo.monotonicity} {\bf Monotonicity}: $f_1 \le f_2$ implies $f_1^C \ge f_2^C$;
        \item \label{it:Ctrafo.additivity} {\bf Constant additivity}: $(f_1 + a)^C = f_1^C - a$ for all $a \in \bbR$;
        \item \label{it:Ctrafo.concavity} {\bf Concavity}: $(\lambda f_1 + (1 - \lambda) f_2 )^C \ge \lambda f_1^C + (1- \lambda) f_2^C$ for all $\lambda \in [0,1]$;
        \item \label{it:Ctrafo.continuity} {\bf Continuity}: $f_1,f_2$ bounded implies $\sup_{x \in \cX} |f_1^C(x) - f_2^C(x)| \le \sup_{y \in \cY} |f_1(y) - f_2(y)|$.
    \end{enumerate}
\end{proposition}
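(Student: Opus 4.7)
The plan is to prove each of the four properties directly from the definition
\begin{equation*}
f^C(x) = \min_{\mu \in \cP(\cY)} \Big\{ \E{y\sim\mu} c(x,y) - \epsilon H(\mu) - \E{y\sim\mu} f(y) \Big\}.
\end{equation*}
The key observation is that, for every fixed $\mu \in \cP(\cY)$ and fixed $x\in\cX$, the functional
\begin{equation*}
F_{\mu,x}(f) \defeq \E{y\sim\mu} c(x,y) - \epsilon H(\mu) - \E{y\sim\mu} f(y)
\end{equation*}
is \emph{affine} in $f$, and $f^C(x)$ is the infimum of these affine functionals over $\mu$. This single observation drives all four properties.

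For \ref{it:Ctrafo.monotonicity} (monotonicity), if $f_1 \le f_2$ then $-\E{y\sim\mu} f_1(y) \ge -\E{y\sim\mu} f_2(y)$ for every $\mu$, so $F_{\mu,x}(f_1) \ge F_{\mu,x}(f_2)$ pointwise in $\mu$; taking the infimum over $\mu$ preserves the inequality. For \ref{it:Ctrafo.additivity} (constant additivity), $F_{\mu,x}(f_1+a) = F_{\mu,x}(f_1) - a$ for all $\mu$, and the $-a$ factors through the infimum. For \ref{it:Ctrafo.concavity} (concavity), write
\begin{equation*}
F_{\mu,x}(\lambda f_1 + (1-\lambda) f_2) = \lambda F_{\mu,x}(f_1) + (1-\lambda) F_{\mu,x}(f_2),
\end{equation*}
and use the standard fact that the infimum over $\mu$ of a sum is at least the sum of the infima.

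For \ref{it:Ctrafo.continuity} (continuity), I would bootstrap from \ref{it:Ctrafo.monotonicity} and \ref{it:Ctrafo.additivity}. Set $\delta \defeq \sup_{y\in\cY}|f_1(y)-f_2(y)|$ (finite by boundedness). Then $f_1 \le f_2 + \delta$ and $f_2 \le f_1 + \delta$ pointwise, so by \ref{it:Ctrafo.monotonicity} and \ref{it:Ctrafo.additivity}
\begin{equation*}
f_1^C \ge (f_2+\delta)^C = f_2^C - \delta, \qquad f_2^C \ge (f_1+\delta)^C = f_1^C - \delta,
\end{equation*}
which rearranges to $|f_1^C(x)-f_2^C(x)|\le \delta$ uniformly in $x$.

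There is no real obstacle here; all four statements are direct consequences of the variational definition. The only mild care needed is that $F_{\mu,x}$ is well-defined: boundedness from below of $f_i$ ensures $\E{y\sim\mu} f_i(y)$ is well-defined (possibly $+\infty$), and boundedness in \ref{it:Ctrafo.continuity} makes $\delta$ finite. As a sanity check, one can equivalently verify (i)--(ii) via the closed form $f^C(x) = -\epsilon\log Z_c(f,x)$ noted in \S\ref{sec-background-eot}: monotonicity of $Z_c$ in $f$ gives \ref{it:Ctrafo.monotonicity}, and $Z_c(f+a,x) = e^{a/\epsilon} Z_c(f,x)$ gives \ref{it:Ctrafo.additivity}.
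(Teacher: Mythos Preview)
Your proof is correct. For items \ref{it:Ctrafo.monotonicity}--\ref{it:Ctrafo.concavity} you take essentially the same route as the paper: both identify $f^C(x)$ as an infimum over $\mu$ of functionals that are affine in $f$, from which monotonicity, constant additivity, and concavity follow immediately.

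For item \ref{it:Ctrafo.continuity} you take a genuinely different and more elegant route. The paper argues directly: it writes $f_1^C(x)-f_2^C(x)$ as a $\sup_\mu\inf_{\tilde\mu}$ of the difference of the two variational integrands, bounds the inner infimum by choosing $\tilde\mu=\mu$, and then appeals to symmetry. Your argument instead bootstraps from \ref{it:Ctrafo.monotonicity} and \ref{it:Ctrafo.additivity}: from $f_1\le f_2+\delta$ you get $f_1^C\ge (f_2+\delta)^C=f_2^C-\delta$, and likewise with the roles swapped. This is shorter, avoids any explicit manipulation of the variational formula, and makes the Lipschitz-in-sup-norm property transparent as a consequence of the already-proved order and shift invariance. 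The paper's approach has the minor advantage of being self-contained for \ref{it:Ctrafo.continuity} (it does not invoke the earlier items), but yours is the cleaner argument.
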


\begin{proof}[Proof of Proposition \ref{prop:Ctrafo}]
    We recall the definition of the $C$-transform
    \[
        f_1^C(x) = \inf_{\mu \in \cP(\cY)}
        \left\{ C(x,\mu) - \int_\cY f_1(y) \dd\mu(y) \right\},
    \]
    where $C(x,\mu) \defeq \int_{\cY} c(x,y) \dd \mu(y) - \epsilon H(\mu)$.
    Monotonicity \ref{it:Ctrafo.monotonicity} and constant additivity \ref{it:Ctrafo.additivity} are immediate from the definition.
    
    To see \ref{it:Ctrafo.concavity}, observe that the dependence of $\int_{\cY} f_1(y) \dd\mu(y)$ on $f_1$ is linear.
    Thus, $f_1^C$ is the pointwise infimum of a family of linear functionals and thus concave.

    Finally, to show \ref{it:Ctrafo.continuity} we have by monotonicity of the integral that
    \begin{equation}
        \label{eq:prop.Ctrafo.1}
        \left| \int_{\cY} f_1(y) \dd\mu(y) - \int_{\cY} f_2(y) \dd \mu(y) \right| \le \sup_{y \in \cY} |f_1(y) - f_2(y)|
    \end{equation}
    for any $\mu \in \cP(\cY)$.
    For fixed $x \in \cX$ we have
    \begin{align*}
        f_1^C(x) - f_2^C(x) &= \inf_{\mu \in \cP(\cY)} \left[
        C(x,\tilde \mu) -
        \int_{\cY} f_1(y)\dd\tilde \mu(y)\right] - \inf_{\mu \in \cP(\cY)}\left[     C(x,\mu) -
        \int_{\cY} f_2(y) \dd\mu(y)\right]\\
        &= \sup_{\mu \in \cP(\cY)} \inf_{\tilde \mu \in \cP(\cY)} \left[C(x,\tilde \mu) - C(x,\mu) - \int_{\cY} f_1(y) \dd\tilde \mu(y)   + \int_{\cY} f_2(y)\dd\mu(y)\right].
    \end{align*}
    By setting $\tilde \mu = \mu$ we increase the value and obtain
    \begin{equation}
        \label{eq:prop.Ctrafo.2}
        f_1^C(x) - f_2^C(x) \le
        \sup_{\mu \in \cP(\cY)}
        \int_{\cY} \left[ f_2(y) - f_1(y)\right] \dd\mu(y) \le \sup_{y \in \cY} |f_1(y) - f_2(y)|,
    \end{equation}
    where the last inequality follows from \eqref{eq:prop.Ctrafo.1}.
    For symmetry reasons, we can swap the roles of $f_1$ and $f_2$ in \eqref{eq:prop.Ctrafo.2}, which yields the claim.
\end{proof}

\subsection{Proof of Theorem~\ref{thm-our-dual-eot-bary-problem}}\label{proof-our-dual-eot-bary-problem}
\begin{proof}
    By substituting in \eqref{EOT_bary_primal} the primal EOT problems \eqref{EOT_primal} with their dual counterparts \eqref{weak_dual_objective}, we obtain a dual formulation, which is the starting point of our analysis:
    \begin{equation}
        \mathcal{L}^{*}=\min\limits_{\bbQ \in \cP(\cY)} \sup\limits_{f_1, \dots, f_K \in \cC(\cY)} \underbrace{\sum\limits_{k = 1}^{K} \lambda_k \bigg\{ \int_{\cX_k} f^{C_{k}}_k (x_k) \dd \bbP_k(x_k) + \int_{\cY} f_k(y) \dd \bbQ(y)\bigg\}}_{\defeq \widetilde{\mathcal{L}}\big(\mathbb{Q},\{f_k\}_{k=1}^{K}\big)}.
        \label{bary-dual-before-swap}
    \end{equation}
    Here, we replaced $\inf$ with $\min$ because of the existence of the barycenter (\S \ref{sec-barycenter-entropic}). 
    Moreover, we refer to the entire expression under the $\min \sup$ as a functional $\widetilde{\mathcal{L}} \colon \mathcal{P}(\mathcal{Y})\times \mathcal{C}(\mathcal{Y})^{K}\rightarrow\mathbb{R}$.
    For brevity, we introduce, for  $(f_1,\ldots,f_K) \in \cC(\cY)^K$, the notation
    \begin{equation}
        \label{eq:thm.1.notation}
        \bar f \defeq  \sum_{k = 1}^K \lambda_k f_k \quad \text{and} \quad M \defeq \inf_{y \in \cY} \bar f(y) = \inf_{\bbQ \in \cP(\cY)} \int \bar f(y)\dd \bbQ(y),
    \end{equation}
    where the equality follows from two elementary observations that \textbf{(a)} $M \le \int \bar f(y) \dd \bbQ(y)$ for any $\bbQ \in \cP(\cY)$ and \textbf{(b)}  $\bar f(y) = \int \bar f(y') \dd \delta_{y}(y')$ where $\delta_{y}$ denotes a Dirac mass at $y \in \cY$.
    
    On the one hand, $\mathcal{P}(\mathcal{Y})$ is compact w.r.t.\ the weak topology because $\mathcal{Y}$ is compact, and for fixed potentials $(f_1,\ldots,f_K) \in \cP(\cY)^K$ we have that $\widetilde \cL(\cdot, (f_k)_{k = 1}^K)$ is continuous and linear.
    In particular, $\widetilde \cL(\cdot, (f_k)_{k = 1}^K)$ is convex and l.s.c.
    On the other hand, for a fixed $\bbQ$, the functional $\widetilde{\mathcal{L}}\big(\mathbb{Q},\cdot \big)$ is concave by \ref{it:Ctrafo.concavity} in Proposition \ref{prop:Ctrafo}. These observations  allow us to apply Sion's minimax theorem \cite[Theorem 3.4]{sion58general} to swap $\min$ and $\inf$ in \eqref{bary-dual-before-swap} and obtain using \eqref{eq:thm.1.notation}
    \begin{align}
     \mathcal{L}^{*} &= 
     \sup_{f_1, \dots, f_K \in \cC(\cY)}
     \min_{\bbQ \in \cP(\cY)} 
     \sum_{k = 1}^{K} \lambda_k \bigg\{ \int_{\cX_k} f^{C_{k}}_k (x_k) \dd \bbP_k(x_k) + \int_{\cX} f_k(y) \dd \bbQ(y)\bigg\}
     \nonumber
     \\
     &= \sup_{f_1, \dots, f_K \in \cC(\cY)} \bigg\{\sum_{k = 1}^{K} \lambda_k  \int_{\cX_k} f^{C_{k}}_k (x_k) \dd \bbP_k(x_k) + \min\limits_{\bbQ \in \cP(\cY)}\int_{\cX} \bar f(y) \dd \bbQ(y)\bigg\}
     \nonumber
     \\
     &= \sup\limits_{f_1, \dots, f_K \in \cC(\cY)} \underbrace{\bigg\{\sum\limits_{k = 1}^{K} \lambda_k  \int_{\cX_k} f^{C_{k}}_k (x_k) \dd \bbP_k(x_k) + \min_{y \in \cY} \bar f(y) \bigg\}}_{\defeq \widetilde{\mathcal{L}}(f_{1},\dots,f_K)}.
    \label{bary-dual-after-swap}
    \end{align}    
    Next, we show that the $\sup$ in \eqref{bary-dual-after-swap} can be restricted to tuplets satisfying the congruence condition $\sum_{k = 1}^K \lambda_k f_k = 0$. 
    It remains to show that for every tuplet $(f_{1},\dots,f_{K}) \in \mathcal{C}(\mathcal{Y})^K$ there exists a \textit{congruent} tuplet $(\tilde f_{1},\dots,\tilde f_{K})\in \mathcal{C}(\mathcal{Y})^K$ such that $\widetilde{\mathcal{L}}(\tilde f_{1},\dots,\tilde f_{K})\geq \widetilde{\mathcal{L}}(f_{1},\dots,f_{K})$. 
    
    To this end, fix $(f_{1},\dots, f_{K})$ and define the congruent tuplet
    \begin{equation}
        (\tilde f_{1},\dots,\tilde f_{K})\defeq \left(f_{1},\dots,f_{K-1},f_{K}-\frac{\bar{f}}{\lambda_K}\right).
        \label{new-fs}
    \end{equation}
    We find $\tilde M \defeq \inf_{y \in \cY} \sum_{k = 1}^K \lambda_k \tilde f_k = 0$ by the congruence and derive
    \begin{align}\nonumber
        \widetilde{\mathcal{L}}(\tilde f_{1},\dots,\tilde f_K)-\widetilde{\mathcal{L}}(f_{1},\dots,f_K) 
        &= \lambda_{K}\int_{\mathcal{X}_{K}}\left[\tilde f_K^{C_{K}} (x_K)-f^{C_{K}}_K (x_K)\right] \dd\bbP_{K}(x_K)-M \\ \nonumber
        &\ge \lambda_K \int_{\cX_K} \left[\left(f_K - \frac{M}{\lambda_K}\right)^{C_K}(x_K) - f_K^{C_K}(x_K)\right] \dd \bbP(x_K) - M \\
        &= \lambda_K \int_{\cX_K} \frac{M}{\lambda_K} \dd \bbP(x_K) - M = 0,
        \nonumber       
    \end{align}
    where the first inequality follows from $\tilde f_K = f_K - \frac{\bar f}{\lambda_K} \le f_K - \frac{M}{\lambda_K}$  combined with monotonicity of the $C$-transform, see \ref{it:Ctrafo.monotonicity} in Proposition \ref{prop:Ctrafo}. The second to last equality follow from constant additivity, see \ref{it:Ctrafo.additivity} in Proposition \ref{prop:Ctrafo}.

    In summary, we obtain
    \begin{equation}
        \label{eq:final_equality}
        \cL^\ast = \sup_{ \substack{f_1, \ldots, f_k \in \cC(\cY) \\ \sum_{k = 1}^K f_k = 0} }
        \widetilde\cL(f_1,\ldots,f_K).
    \end{equation}
    Finally, observe that for congruent $(f_1,\ldots,f_K)$ we have $\widetilde\cL(f_1,\ldots,f_K) = \cL(f_1,\ldots,f_K)$.
    Hence, we can replace $\widetilde\cL$ by $\cL$ in \eqref{eq:final_equality}, which yields \eqref{EOT_bary_dual_our}.
\end{proof}

\subsection{Proof of Theorem \ref{thm-duality-gaps}}\label{proof-duality-gaps}
\begin{proof}
    Write $\bbQ^\ast$ for the barycenter and $\pi^{\ast}_{k}$ for the optimizer of $\EOT_{c_k,\epsilon}(\bbP_k, \bbQ^\ast)$.
    Consider congruent potentials $f_1,\ldots,f_K \in \cC(\cY)$ and define the probability distribution 
    \[ \dd \pi^{f_{k}}(x_k,y)\defeq \dd \mu_{x_k}^{f_k}(y) \, \dd \bbP_k(x_k),  \]
    where
    \begin{align}
        \frac{\dd \mu^{f_k}_{x_k}(y)}{\dd y} &\defeq
        \frac{1}{Z_{c_k}(f_k,x_k)} \exp\left( \frac{f_k(y) - c_k(x_k,y)}{\epsilon} \right),
        \\
    \label{eq:C_transform_EOTCase_f__ClosedForm}
        Z_{c_k}(f_k,x_k) &\defeq \log \left( \int_{\cY} e^{\frac{f_k(y) - c_k(x_k,y)}{\epsilon}} \dd y\right).
    \end{align}
    Then we have by \cite[Thm. 2]{mokrov2024energyguided}:
    \begin{equation}
        \label{just theorem 2}
        \EOT_{c_k,\epsilon}(\bbP_k,\bbQ^\ast) - 
        \left( \int_{\cX_k} f^{C_k}(x_k) \dd \bbP(x_k) + \int_{\cY} f(y) \dd \bbQ^{*}(y) \right) = \epsilon
        \KL{\pi_{k}^{*}}{\pi^{f_k}}.
    \end{equation}
    Multiplying \eqref{just theorem 2} by $\lambda_k$ and summing over $k$ yields
    \begin{align} \nonumber
        \epsilon \sum_{k = 1}^K \lambda_k \KL{\pi_{k}^{*}}{\pi^{f_k}} 
        &=
        \sum_{k = 1} \lambda_k \left\{ \EOT_{c_k,\epsilon}(\bbP_k,\bbQ^\ast) - \int_{\cX_k} f^{C_k}_k(x_k) \dd\bbP_k(x_k)\right\} - 
        \int_{\cY} \underbrace{\sum_{k = 1}^K \lambda_k f(y)}_{=0} \dd \bbQ^\ast(y) \\
        &= \cL^\ast - \cL(f_1,\ldots,f_k), \label{eq:Last-L}
    \end{align}
    where the last equality follows by congruence, i.e., $\sum_{k = 1}^K \lambda_k f_k \equiv 0$.

    The remaining inequality in \eqref{theorem 2} is a consequence of the data processing inequality for $f$-divergences which we invoke here to get
    \[
        \KL{\pi_k^\ast}{\pi^{f_k}} \ge \KL{\bbQ^\ast}{\bbQ^{f_k}},
    \]
    where $\bbQ^\ast$ and $\bbQ^{f_k}$ are the second marginals of $\pi^\ast_k$ and $\pi^{f_k}$, respectively.
\end{proof}

\subsection{Proof of Theorem \ref{prop_weak_dual_loss_func_grad}}\label{proof-weak_dual_loss_func_grad}
\begin{proof}
   The desired equation \eqref{weak_EOT_bary_loss_grad} could be derived exactly the same way as in \cite[Theorem 3]{mokrov2024energyguided}.
\end{proof}

\subsection{Proof of Proposition~\ref{lemma:Specific} and Theorem~\ref{thm:rademacher}}
\label{sec-proof-rademacher}

The derivations of the quantitative bound for Proposition~\ref{lemma:Specific} and Theorem~\ref{thm:rademacher} relies on the following standard definitions from learning theory, which we now recall for convenience (see, e.g.~\cite[\S 26]{shalev2014understanding}). Consider some class $\mathcal{S}$ of functions $s:\cX\rightarrow\bbR$ and a distribution $\mu$ on $\mathcal{X}$. Let $X=\{x^{1},\dots,x^{N}\}$ be a sample of $N$ points in $\mathcal{X}$.

The \textbf{representativeness} of the sample $X$ w.r.t.\ the class $\mathcal{S}$ and the distribution $\mu$ is defined by
\begin{equation}
    \text{Rep}_{X}(\mathcal{S},\mu)\defeq \sup_{s\in\mathcal{S}}\big[\int_{\cX} s(x) \dd \mu(x)- \frac{1}{N} \sum_{n=1}^{N} s(x^n)\big].
        \label{representativeness-def}
\end{equation}
The \textbf{Rademacher complexity} of the class $\mathcal{S}$ w.r.t.\ the distribution $\mu$ and sample size $N$ is given by
\begin{equation}
    \cR_N( \mathcal{S}, \mu) \defeq \frac{1}{N}\bbE \bigg\{ \sup_{s \in \mathcal{S}} 
            \sum_{n = 1}^{N} s(x^n) \sigma_n
    \bigg\},
\label{rademcaher-def}
\end{equation}
where $\{x^n\}_{n = 1}^{N} \sim \mu$ are mutually independent, $\{\sigma^n\}_{n = 1}^N$ are mutually independent Rademacher random variables, i.e., $\text{Prob}\big(\sigma^n = 1\big) = \text{Prob}\big(\sigma^n = -1\big) = 0.5$, and the expectation is taken with respect to both $\{x_n\}_{n = 1}^{N}$, $\{\sigma_n\}_{n = 1}^N$. The well-celebrated relation between \eqref{rademcaher-def} and $\eqref{representativeness-def}$, as shown in \cite[Lemma 26.2]{shalev2014understanding}, is
\begin{equation}
    \mathbb{E}\text{Rep}_{X}(\mathcal{S},\mu)\leq 2\cdot \cR_N( \mathcal{S}, \mu),
    \label{rademacher-bound}
\end{equation}
where the expectation is taken w.r.t.\ random i.i.d. sample $X\sim\mu$ of size $N$.

\begin{proof}[{Proposition~\ref{lemma:Specific}}]\label{proof-Specific}
    Observe that by~\eqref{eq:Last-L} we may write
    \begin{eqnarray}
        \epsilon\sum_{k = 1}^K \lambda_k \KL{\pi_k^*}{\pi^{\widehat{f}_k}}= \cL^* - \cL(\widehat{\textbf{f}}) =\underbrace{\bigg[\mathcal{L}^{*}-\max\limits_{\scriptsize \textbf{f}\in \overline{\cF}} \cL(\textbf{f})\bigg]}_{\text{Approximation error}}+\underbrace{\bigg[\max\limits_{\scriptsize \textbf{f}\in \overline{\cF}} \cL(\textbf{f})-\cL(\widehat{\textbf{f}})\bigg]}_{\text{Estimation error}}.
        \label{pre-estimation-error}
    \end{eqnarray}
    Let $\overline{\textbf{f}}$ be a maximizer of $\cL(\textbf{f})$ within $\overline{\cF}$. 
    Analysing the estimation error in~\eqref{pre-estimation-error} yields
    \begin{align}
        \max\limits_{\scriptsize \textbf{f}\in \overline{\cF}} \cL(\textbf{f})-\cL(\widehat{\textbf{f}})&=\cL(\overline{\textbf{f}})-\cL(\widehat{\textbf{f}})
        \nonumber
        \\
        &= \big[\cL(\overline{\textbf{f}})-\widehat{\cL}(\overline{\textbf{f}})\big]+
        \underbrace{\big[\widehat{\cL}(\overline{\textbf{f}})-\widehat{\cL}(\widehat{\textbf{f}})\big]}_{\leq 0}+\big[\widehat{\cL}(\widehat{\textbf{f}})-\cL(\widehat{\textbf{f}})\big]
        \label{upper-bound-estimation}
        \\
        \label{gen_bound}
        &\le
        \,
            2\sup_{\textbf{f}\in \overline{\cF}}\big|\cL(\textbf{f})-\widehat{\cL}(\textbf{f})\big|
    ,
    \end{align}
    where central term in line~\eqref{upper-bound-estimation} is bounded above by $0$ due the maximality of $\widehat{\textbf{f}}$, that is, $\widehat{\mathcal{L}}(\widehat{\textbf{f}})=\max_{\textbf{f}\in\overline{\mathcal{F}}}\widehat{\mathcal{L}}(\textbf{f})\geq \widehat{\mathcal{L}}(\overline{\textbf{f}})$.
    Due to \eqref{gen_bound}, we can bound the estimation error using the Rademacher complexity
    \begin{align*} 
    \label{rep-intro}
        \sup_{\textbf{f}\in \overline{\cF}}\big|\cL(\textbf{f})-\widehat{\cL}(\textbf{f})\big|
        \le\sum_{k=1}^{K}\lambda_{k}\sup_{f_{k}\in\cF_{k}}
            \biggl[
                \int_{\cX_k} f^{C_{k}}_k (x_k) \dd \bbP_k(x_k)- \frac{1}{N_k} \sum_{n=1}^{N_k} f_k^{C_k}(x_k^{n})
            \biggr] 
        =
        \sum_{k=1}^{K}\lambda_{k}\text{Rep}_{X_{k}}(\cF_{k}^{C_k},\bbP_k). %
    \end{align*}
\end{proof}

\begin{proof}[Proof of Theorem~\ref{thm:rademacher}]\label{proof-rademacher}
    {\bf Case \textbf{(a)} - Lipschitz costs}:
    Assume that, for $k \in \overline{K}$, $x \mapsto c_k(x,y)$ is Lipschitz with constant $L_k \ge 0$ for every $y \in \mathcal Y$.
    Recall that $f^{C_k}_k$ is defined as the pointwise supremum of $L_k$-Lipschitz functions and, therefore, Lipschitz continuous with the same constant. 
    Since the value of the representativeness of a sample w.r.t.\ a function class is invariant under translating individual elements of said class, we have that $\text{Rep}_{X}(\cF_k^{C_k},\mathbb P_k)$ coincides with $\text{Rep}_{X}(\cG_k,\mathbb P_k)$ where
    \[ 
        \cG_k
        \defeq 
        \{ f^{C_k} - f^{C_k}(\tilde x_k) : f \in C(\cY) \}, 
    \]
    for some fixed $\tilde x_k \in \cX_k$. All the functions in this class are $L_k$-Lipschitz and, therefore, bounded by $L_k\cdot \text{diam}(\mathcal{X}_{k})$.
    We may therefore apply~\cite[Theorem 4.3]{GottliebKontorovichKrauthgamer_2013_AdaptMetricDimReduc_ALT} to the class $\mathcal G_k$ and obtain
    \begin{equation*}
            \mathbb E_{X \sim \mathbb P_k} \text{Rep}_{X}(\cF^{C_k}_k,\mathbb P_k) = \mathbb E_{X \sim \mathbb P_k} \text{Rep}_{X}(\cG_k,\mathbb P_k) \le 2 \cR_N(\cG_k, \mathbb{P}_k)\leq 
        O(N_k^{-\frac{1}{D_k+1}}).
    \end{equation*}
    
    {\bf Case (b) - Feature-based quadratic costs $\frac{1}{2}\| u_k(\cdot) - v(\cdot) \|_2^2$}:
    Alternatively, consider the case where $c_k(x_k,y) = 
    \frac{1}{2}\| u_k(x_k) - v(y) \|_2^2$ and 
    $\cF_k \subseteq \mathcal C(\mathcal Y)$ is bounded (w.r.t.\ the supremum norm).

    To this end, recall that for a measurable and bounded function $f : \cY \to \mathbb R$, the weak entropic $c_k$-transform satisfies
    \begin{align*}
        f^{C_k}(x_k) = - \epsilon \log( Z_{c_k}(f,x_k) ),
    \end{align*}
    where $Z_{c_k}(f,x_k) = \int_{\cY} \exp\big( \frac{f(y) - c_k(x_k,y) }{\epsilon}\big) \, dy$. Recall that $\mathbb{R}^{D''}\times\mathbb{R}^{D''}\ni(a,b)\mapsto \exp(-\frac{\|a-b\|^{2}}{2\epsilon})$ is a positive definite kernel which is widely known as the \textbf{Gaussian kernel}. This means that there exists a Hilbert space $\mathcal{H}$ and a feature map $\phi : \mathbb R^{D''} \to \mathcal H$ such that $\exp(-\frac{\|a-b\|^{2}}{2\epsilon})=\langle \phi (a), \phi (b) \rangle_{\mathcal H}.$ Due to the particular form of $c_k$, we may write
    \begin{equation}
        \label{our_identity}
        \exp\Big( - \frac{c_k(x_k,y)}{\epsilon} \Big) = \exp\Big( - \frac{\|u_k(x)-v(y)\|^{2}}{2\epsilon} \Big) =\langle \phi (u_k(x_k)), \phi (v(y)) \rangle_{\mathcal H}.
    \end{equation}
    Notice that $\{ \phi (u_k(x_k)) : x_k \in \cX_k \} \subseteq \{v\in \mathcal{H}:\, \|v\|_{\mathcal H} = 1\}$ 
    since $\|\phi(a) \|_{\mathcal H}^2 = \langle \phi(a), \phi(a) \rangle_{\mathcal H} = 1$ for every $a \in \mathbb R^{D''}$.
    
    Using the identity in~\eqref{our_identity}, we can express $Z_{c_k}(f,x_k)$ by
    \begin{align*}
        Z_{c_k}(f,x_k) 
        = 
            \int_{\cY} \langle \phi(u_k(x_k)), \phi(v(y)) \rangle_{\mathcal H} 
            \,
            e^{\frac{f(y)}{\epsilon}} \, dy 
        =
         \Big\langle \phi\big(u_k(x_k)\big), \int_{\cY} \phi(v(y)) e^{\frac{f(y)}{\epsilon}} \, dy \Big\rangle_{\mathcal H},
    \end{align*}
    where the last equality is justified as 
    $\cY$ is compact; furthermore, we note the integrals are well-defined by the measurability of $\phi$, $u_k$ and $v$.
    Moreover, using the boundedness of each $\cF_k$ and compactness of $\cY$, we get
    \begin{equation}
        \label{eq:norm_bound_RKHS}
            R 
    \defeq
        \max_{k = 1,\ldots, K}
        \sup_{f \in \cF_k} \Big\| \int_{\cY} \phi(v(y)) e^{\frac{f(y)}{\epsilon}} \, dy \Big\|_{\mathcal H}
    < 
        \infty
    . 
    \end{equation}
    Define $\mathcal G_k \defeq \{ Z_{c_k}(f,\cdot) : f \in \cF_k \}$ and observe that $
    \mathcal G_k \subseteq \mathcal G_k'\defeq \{ \langle \phi\big( u_k(\cdot)\big), w \rangle_{\mathcal H} : \| w \|_{\mathcal{H}} \le
    R
    \}. 
    $ This implies that $\cR_{N_k}(\mathcal G_k, \mathbb P_k) \leq \cR_{N_k}(\mathcal G_k', \mathbb P_k)$ by the properties of the Rademacher complexity. In turn, the latter Rademacher complexity can be bounded by \cite[Lemma 22]{BartlettMendelson_2002_JMLR_GausRadComplex}.
    Indeed, write $\mathbb P_k' \defeq (u_k)_\# \mathbb P_k$ such that
    \[
        \cR_{N_k}(\mathcal G_k', \mathbb P_k)=
        \cR_{N_k}(\{ \langle \phi(\cdot), w \rangle_{\mathcal H} : \| w \|_\mathcal H \le R \}, \mathbb P_k'),
    \]
    which can be directly seen from the definition of the Rademacher complexity.
    Thus, we can apply \cite[Lemma 22]{BartlettMendelson_2002_JMLR_GausRadComplex} to the right-hand side, and summarizing obtain
    \[
        \cR_{N_k}(\mathcal G_k, \mathbb P_k) \le \frac{R}{\sqrt{N_k}}.
    \]
    Since the functions in $\cG_k$ are bounded uniformly away from zero by some constant $\kappa > 0$ (depending on the bound of $\cF_k$ and $\epsilon$), and since the logarithm restricted to $[\kappa, \infty)$ is $1/\kappa$-Lipschitz, we have that
    \[
            \cR_{N_k}(\cF_k^{C_k},\mathbb P_k) 
        \le 
            \frac{\epsilon}{\kappa} \cR_{N_k}(\cG_k, \mathbb P_k)
        \le 
            \frac{\epsilon}{\kappa} 
            \,
            \frac{
                R
            }{
                \sqrt{N_k}
            }
        .
    \]
    We can now bound the expected representativeness of $\cF_k^{C_k}$ with the Rademacher complexity by~\eqref{rademacher-bound}, yielding the claim.
\end{proof}

\subsection{Proof of Theorem \ref{thm-universal-approximation}}\label{proof-universal-approximation}
\begin{proof}[Proof of Theorem \ref{thm-universal-approximation}]
Let $\sigma\in C(\mathbb{R})$ be a non-affine activation function which is differentiable at some point $x_0\in \mathbb{R}$ and for which $\sigma'(x_0)\neq 0$.
Let $\delta>0$, $\lambda_1,\dots,\lambda_K>0$, and $K\in \mathbb{N}$ be given.  By Theorem~\ref{thm-our-dual-eot-bary-problem}, there exist $K$ congruent continuous functions $f_{1}',\dots,f_K'$ such that 
\begin{equation}
     \cL^* - \cL(\tilde{f_1}, \dots, \tilde{f_K}) < \frac{\delta}{2}.
    \label{first-part-inequality__dummy}
\end{equation}
Applying~\cite[Theorem 4.1]{DUGUNDJI} we deduce that for each $k=1,\dots,K$, there exist a continuous extension $f_k':\mathbb{R}^{D}\to \mathbb{R}$ for $\tilde{f}_k$ to all of $\mathbb{R}^{D}$; i.e.~$f_k'(y)=\tilde{f}_k(y)$ for each $y\in \mathcal{Y}$.  In particular,~\eqref{first-part-inequality__dummy} can be rewritten as
\begin{equation}
     \cL^* - \cL(f_1', \dots, f_K') < \frac{\delta}{2}.
    \label{first-part-inequality}
\end{equation}
Set $\delta_k \defeq \delta/(4\lambda_k)$ for each $k=1,\dots,K$.  Since $\mathcal{Y}$ is a compact subset of $\mathbb{R}^{D}$, and since we have assumed that $\sigma\in C(\mathbb{R})$ is non-affine activation function which is differentiable at some point $\widetilde{x}\in \mathbb{R}$ and for which $\sigma'(\widetilde{x})\neq 0$ then the special case of \cite[Theorem 9]{PaponKratsios_2022_JMLR_UAT} given in~\cite[Proposition 53]{PaponKratsios_2022_JMLR_UAT}
, implies that for any there exist feedforward neural networks $g_{k}:\mathbb{R}^{D_k}\rightarrow\mathbb{R}$ ($k\in\overline{K}$) with activation function $\sigma$, such that
\[
            \|g_{k}-f_k'\|_{\infty}\defeq \sup_{y\in\mathcal{Y}}|g_{k}(y)-f_k'(y)|
        =
            \sup_{y\in\mathcal{Y}}|g_{k}(y)-f_k'(y)|
        <
            \delta_{k}
,
\]
each $g_k$ width at-most $D+4$.
Pick $\delta_{k}=\frac{\delta}{4}$ for all $k\in\overline{K}$ and suitable neural networks $g_{1},\dots,g_K$. 
Next, we define the congruent sums of neural networks $f_{k}\defeq g_k-\sum_{k=1}^{K}\lambda_k g_{k}$. We derive
\begin{eqnarray}
    \Big\|
        \sum_{k=1}^{K}\lambda_k g_k
    \Big\|_{\infty}
    =
        \Big\|\sum_{k=1}^{K}\lambda_k g_k-\underbrace{\sum_{k=1}^{K}\lambda_k f_k'}_{=0}\Big\|_{\infty}
    \le 
        \sum_{k=1}^{K}\lambda_{k}\Big\|g_k-f_k'\Big\|_{\infty}
    < 
        \sum_{k=1}^{K}\lambda_k\frac{\delta}{4\lambda_k}=\frac{\delta}{4}.
    \label{avg-norm-bound}
\end{eqnarray}
Using \eqref{avg-norm-bound} we obtain for fixed $k \in \overline{K}$
\begin{eqnarray}
    \|f_k'-f_k\|_{\infty}=\|f_k'-g_k+\sum_{k'=1}^{K}\lambda_{k'}g_{k'}\|_{\infty}\leq \underbrace{\|f_k'-g_k\|_{\infty}}_{<\frac{\delta}{4}}+\underbrace{\|\sum\nolimits_{k'=1}^{K}\lambda_{k'} g_{k'}\|_{\infty}}_{<\frac{\delta}{4}}< \frac{\delta}{2}.
    \label{delta-f-bound}
\end{eqnarray}
    By \ref{it:Ctrafo.continuity} in Proposition \ref{prop:Ctrafo}  together with \eqref{delta-f-bound} we find
    \begin{equation}
        \|f_k^{C_{k}}-(f_k')^{C_{k}}\|_{\infty}\leq \|f_{k}-f_{k}'\|_{\infty} < \frac{\delta}{2}.
        \label{c-transform-through-f}
    \end{equation}
    Now we use \eqref{c-transform-through-f} to derive
    \begin{align}
        \nonumber
        |\mathcal{L}(f_{1},\dots,f_{K})-\mathcal{L}(f_{1}',\dots,f_{K}')|&\leq 
        \sum_{k=1}^{K}\lambda_{k}\left|\int_{\cX_k} f^{C_{k}}_k (x_k) \dd \bbP_k(x_k)-\int_{\cX_k} (f_k')^{C_{k}} (x_k) \dd \bbP_k(x_k)\right| 
        \nonumber
        \\
        &\leq
        \sum_{k=1}^{K}\lambda_{k}\int_{\cX_k} |f^{C_{k}}_k (x_k) -(f_k')^{C_{k}} (x_k)|\dd \bbP_k(x_k)
        \nonumber
        \\
        &\leq
        \sum_{k=1}^{K}\lambda_{k}\|f_k^{C_{k}}-(f_k')^{C_{k}}\|_{\infty}<\underbrace{(\sum_{k=1}^{K}\lambda_{k})}_{=1}\frac{\delta}{2}=\frac{\delta}{2}.
        \label{second-part-inequality}
    \end{align}
    Next we combine \eqref{first-part-inequality} with \eqref{second-part-inequality} to get
    \begin{eqnarray}
        \mathcal{L}^{*}-\mathcal{L}(f_1,\dots,f_{K})\leq \underbrace{[\mathcal{L}^{*}-\mathcal{L}(f_1',\dots,f_{K}')]}_{<\delta/2}+\underbrace{|\mathcal{L}(f_{1},\dots,f_{K})-\mathcal{L}(f_{1}',\dots,f_{K}')|}_{<\delta/2}<\delta.
        \label{final-inequality}
    \end{eqnarray}
    By using \eqref{final-inequality} together with Theorem \ref{thm-duality-gaps} we obtain
    \begin{equation}
        \epsilon\sum_{k = 1}^K \lambda_k \KL{\pi_k^*}{\pi^{f_k}}=\mathcal{L}^{*}-\mathcal{L}(f_1,\dots,f_{K})<\delta
        \nonumber
    \end{equation}
    which completes the proof.
\end{proof}

\subsection{Existence and uniqueness of the barycenter distribution which solves {\eqref{EOT_bary_primal}}}\label{proof-ex-uniq-bary-primal}

We introduce an auxiliary functional which is the argument of minimization problem \eqref{EOT_bary_primal}:
\begin{gather}
    \mathcal{B}(\bbQ)\!\defeq\! \sum\limits_{k = 1}^{K} \lambda_k \EOT_{c_k, \epsilon}(\bbP_k, \bbQ)
    ,\label{EOT_bary_primal_B}
\end{gather}
i.e. the optimal value of \eqref{EOT_bary_primal} could be defined as $\mathcal{L}^{*} = \inf\limits_{\bbQ \in \cP(\cY)}\mathcal{B}(\bbQ)$.

Note that the functional $\bbQ \! \mapsto\! \mathcal{B}(\bbQ)$ is strictly convex and lower semicontinuous (w.r.t.\ the weak topology) as each component $\bbQ\!\mapsto\! \EOT_{c_k, \epsilon}(\bbP_k, \bbQ)$ is strictly convex and l.s.c. (lower semi-continuous) itself. The latter follows from \cite[Th. 2.9]{backhoff2019existence} by noting that on $\cP(\cY)$ the map $\mu \!\mapsto\! \bbE_{y \sim \mu} c_k(x, y) \!-\! H(\mu)$ is l.s.c, bounded from below and strictly convex thanks to the entropy term. 
Since $\mathcal{P}(\mathcal{Y})$ is weakly compact (as $\mathcal{Y}$ is compact due to Prokhorov's theorem, see, e.g., \cite[Box 1.4]{santambrogio2015optimal}), it holds that $\mathcal{B}(\bbQ)$ admits at least one minimizer due to the Weierstrass theorem \cite[Box 1.1]{santambrogio2015optimal}, i.e., a barycenter $\bbQ^{*}$ exists.
In the paper, \textit{we work under the reasonable assumption that there exists at least one $\mathbb{Q}$ for which $\mathcal{B}(\bbQ)\!\!<\!\!\infty$}.
In this case, the barycenter $\bbQ^{*}$ is unique due to the strict convexity of $\mathcal{B}$.

\section{Extended discussions}

\subsection{Extended discussion of related works}\label{app:broader-related-works}

\textbf{Discrete OT-based solvers} provide solutions to OT-related problems between discrete distributions. 
A comprehensive overview can be found in \cite{peyre2019computational}. 
The discrete OT methods for EOT barycenter estimation are \cite{cuturi2014fast, solomon2015convolutional, benamou2015iterative, cuturi2016smoothed, cuturi2018semidual, dvurechenskii2018decentralize, krawtschenko2020distributed, janati2020debiased, le2021robust}. An alternative formulation of the barycenter problem based on weak mass transport and corresponding discrete solver could be found in \cite{cazelles2021novel}. In spite of sound theoretical foundations and established convergence guarantees \cite{kroshnin2019complexity}, these approaches can not be directly adapted to our learning setup, see \S \ref{subsec:computational_aspects}.

\textbf{Continuous OT solvers.} 
Beside the continuous EOT solvers discussed in \S\ref{sec:related-works}, there exist a variety of neural OT solver for the non-entropic (unregularized, $\epsilon=0$) case. For example, solvers such as \cite{henry2019martingale,makkuva2020optimal,korotin2021wasserstein,korotin2021neural,korotin2022kantorovich,fan2021scalable,gazdieva2023extremal,rout2021generative,amos2022amortizing, fan2023neural}, are based on optimizing the dual form, similar to our \eqref{weak_dual_objective}, with neural networks. 
We mention these methods because they serve as the basis for certain continuous unregularized barycenter solvers. 
For example, ideas of \cite{korotin2022wasserstein} are employed in the barycenter method presented in \cite{korotin2021continuous}; the solver from \cite{makkuva2020optimal} is applied in \cite{fan2021scalable}; max-min solver introduced in \cite{korotin2021neural} is used in \cite{korotin2022wasserstein}. 
It is also worth noting that there exist several neural solvers that cater to more general OT problem formulations \cite{korotin2023neural,korotin2023kernel,fan2023neural,asadulaev2024neural, pooladian2023neural}. 
These can even be adapted to the EOT case \cite{gushchin2023building} but require substantial technical effort and the usage of restrictive neural architectures.

\vspace{-1mm}\textbf{Continuous EOT solvers} aim to recover the optimal EOT plan $\pi^*$ in EOT problems like \eqref{EOT_primal} between unknown distributions $\bbP$ and $\bbQ$ which are only accessible through a limited number of samples. 
One group of methods \cite{genevay2016stochastic, seguy2018large, daniels2021score} is based on the dual formulation of OT problem regularized with KL divergences \cite[Eq. ($\cP_\epsilon$)]{genevay2016stochastic} which is equivalent to \eqref{EOT_primal}. 
Another group of methods \cite{vargas2021solving, de2021diffusion, chen2022likelihood, gushchin2023entropic, tong2024improving, shi2023diffusion} takes advantage of the dynamic reformulation of EOT via Schr\"odinger bridges \cite{leonard2013survey,marino2020optimal}. In turn, \cite{klein2023generative} solve EOT with conditional flow matching \cite{tong2024improving}.

\vspace{-1mm}In \cite{mokrov2024energyguided}, the authors propose an approach to tackle \eqref{EOT_primal} by means of Energy-Based models \cite[EBM]{lecun2006tutorial,song2021train}. 
They develop an optimization procedure resembling standard EBM training which retrieves the optimal dual potential $f^*$ appearing in \eqref{weak_dual_objective}. 
As a byproduct, they recover the optimal conditional plans $\mu_x^{f^*}=\pi^{*}(\cdot|x)$. 
Our approach for solving the EOT barycenter \eqref{EOT_bary_primal} is primarily inspired by this work. In fact, we manage to overcome the theoretical and practical difficulties that arise when moving from the EOT problem guided with EBMs to the EOT barycenter problem (multiple marginals, optimization with respect to an \textit{unfixed} marginal distribution $\bbQ$), see \S \ref{sec:method} for details of our method.

\textbf{Other related works.} 
Another relevant work is \cite{simon2020barycenters}, where the authors study the barycenter problem and restrict the search space to a manifold produced by a GAN. 
This idea is also utilized in \S\ref{sec-exp-mnist} and \S\ref{sec-exp-celeba}, but their overall setting drastically differs from our setup and actually is not applicable.
We search for a barycenter of $K$  \textit{high-dimensional image distributions} represented by their random samples (datasets).
In contrast, they consider $K$ images, represent each \textit{image as a $2D$ distribution} via its intensity histogram and search for a \textbf{single image} on the GAN manifold whose density is the barycenter of the input images. 
To compute the barycenter, they use discrete OT solver. 
In summary, neither our barycenter solver is intended to be used in their setup, nor their method is targeted to solve the problems considered in our paper.

\subsection{Extended discussion of potential applications}\label{app:broader-potential-apps}

It is not a secret that despite considerable efforts in developing continuous barycenter solvers \cite{li2020continuous,korotin2021continuous,korotin2022wasserstein,fan2021scalable,noble2023treebased,chi2023variational}, these solvers have not found yet a real working practical application. The reasons for this are two-fold:
\begin{enumerate}[leftmargin=5mm]
    \item Existing continuous barycenter solvers (Table \ref{table-bary-solvers-comparison}) are yet not scalable enough and/or work exclusively with the quadratic cost ($\ell^{2}$), which might be not sufficient for the practical needs.
    \item Potential applications of barycenter solvers are too technical and, unfortunately, require substantial efforts (challenging and costly data collection, non-trivial design of task-specific cost functions, unclear performance metrics, etc.) to be implemented in practice.
\end{enumerate}

Despite these challenges, there exist rather inspiring practical problem formulations where the continuous barycenter solvers may potentially shine and we name a few below. These potential applications motivate the research in the area. More generally, we hope that our developed solver could be a step towards applying continuous barycenters to practical tasks that benefit humanity.

\textbf{1. Solving domain shift problems in medicine (Fig. \ref{fig-mri-bary}).} In medicine, it is common that the data is collected from multiple sources (laboratories, clinics) and using different equipment from various vendors, each with varying technical characteristics \cite{guan2021domain, kushol2023dsmri,kondrateva2021domain, stacke2020measuring, yan2019domain}.
Moreover, the data coming from each source may be of limited size. These issues complicate building robust and reliable machine learning models by using such datasets, e.g., learning MRI segmentation models to assist doctors.

\begin{figure*}[!t]
     \centering
    \begin{subfigure}[b]{0.44\textwidth}
          \centering
          \includegraphics[width=0.95\linewidth]{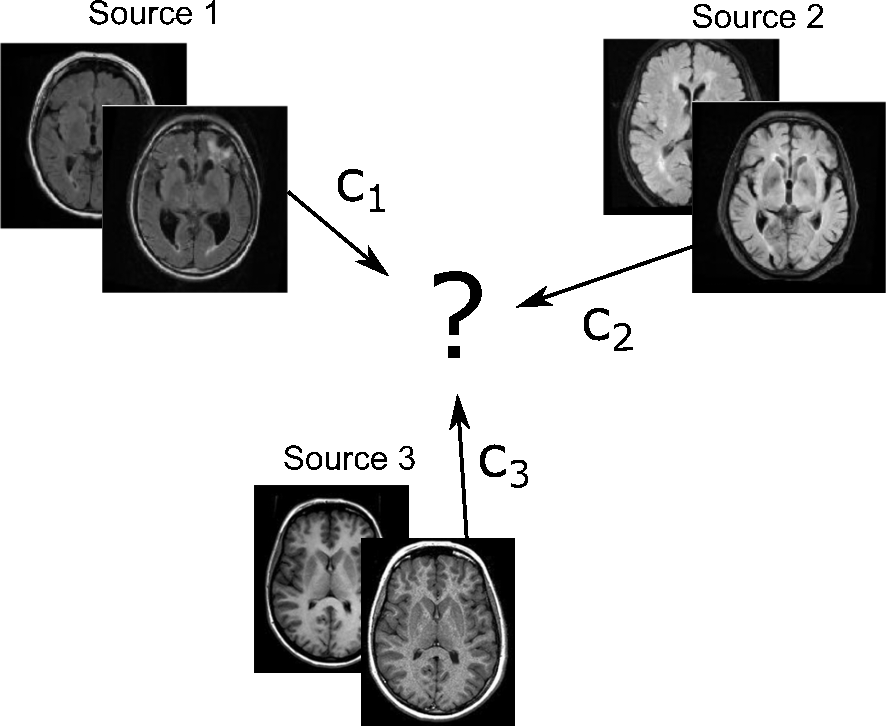}
         \caption{\scriptsize Domain averaging of MRI scans' sources.}
     \label{fig-mri-bary}
    \end{subfigure}
    \begin{subfigure}[b]{0.40\textwidth}
          \centering
          \includegraphics[width=0.95\linewidth]{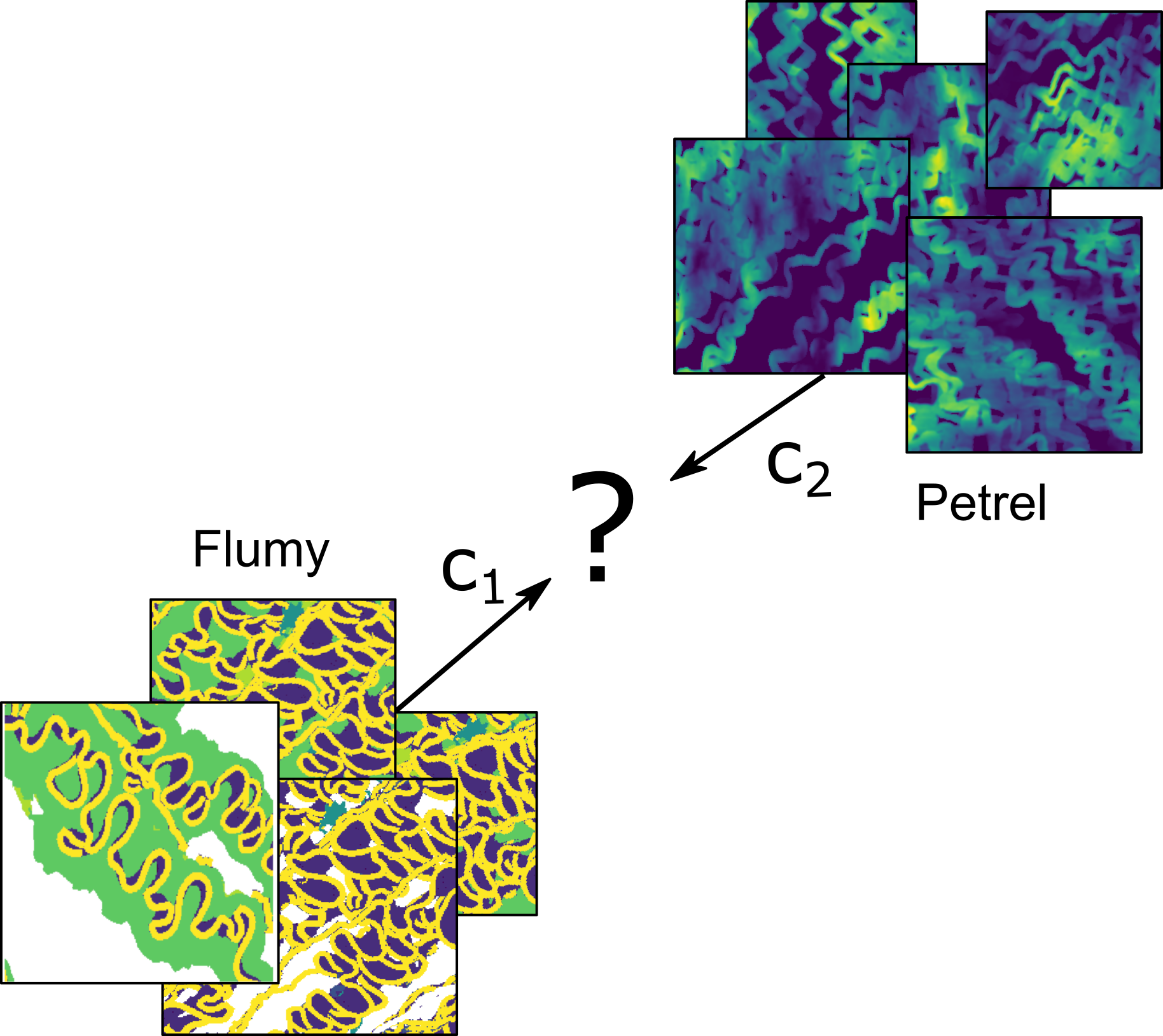}
         \caption{\scriptsize Mixing geological simulators.}
     \label{fig-geological}
    \end{subfigure}
    \caption{A schematical presentation of potential applications of barycenter solvers.}
\end{figure*}

A potential way to overcome the above-mentioned limitations is to find a common representation of the data coming from multiple sources. 
This representation would require translation maps that can transform the new (previously unseen) data from each of the sources to this shared representation. 
This formulation closely aligns with the continuous barycenter learning setup (\S\ref{subsec:computational_aspects}) studied in our paper. 
In this context, the barycenter could play the role of the shared representation.

To apply barycenters effectively to such domain averaging problems, two crucial ingredients are likely required: appropriate cost functions $c_{k}$ and a suitable data manifold $\mathcal{M}$ in which to search for the barycenters. 
The design of the cost itself may be a challenge requiring certain domain-specific knowledge that necessitates involving experts in the field. 
Meanwhile, the manifold constraint is required to avoid practically meaningless barycenters such as those considered in \S\ref{sec-exp-mnist}. Nowadays, with the rapid growth of the field of generative models, it is reasonable to expect that soon the new large models targeted for medical data may appear, analogously to DALL-E \cite{ramesh2022hierarchical}, StableDiffusion \cite{rombach2022high} or StyleGAN-T \cite{sauer2023stylegan} for general image synthesis. These future models could potentially parameterize the medical data manifolds of interest, opening new possibilities for medical data analysis.

\textbf{2. Mixing geological simulators (Fig. \ref{fig-geological}).} In geological modeling, variuos simulators exist to model different aspects of underground deposits. Sometimes one needs to build a generic tool which can take into account several desired geological factors which are successfully modeled by separate simulators.

\textbf{Flumy}\footnote{\url{https://flumy.minesparis.psl.eu}} is a process-based simulator that uses hydraulic theory \cite{ikeda1981bend} to model specific channel depositional processes returning a detailed three-dimensional geomodel informed with deposit lithotype, age and grain size. However, its result is a 3D segmentation field of facies (rock types) and it does not produce the real valued porosity field needed for hydrodynamical modeling.

\textbf{Petrel}\footnote{\url{https://www.software.slb.com/products/petrel}} software is the other popular simulator in the oil and gas industry. It is able to model complex real-valued geological maps such as the distribution of porosity. The produced porosity fields may not be realistic enough due to paying limited attention to the geological formation physics.

Both Flumy and Petrel simulators contain some level of stochasticity and are hard to use in conjunction. Even when conditioned on common prior information about the deposit, they may produce maps of facies and permeability which do not meaningfully correspond to each other. This limitation provides potential \textit{prospects for barycenter solvers} which could be used to \textit{get the best from both simulators} by mixing the distributions produced by each of them.

From our personal discussions with the experts in the field of geology, such task formulations are of considerable interest both for scientific community as well as industry. 
Applying our barycenter solver in this context is a challenge for future research. 
We acknowledge that this would also require overcoming considerable technical and domain-specific issues, including the data collection and the choice of costs $c_{k}$.

\subsection{Extended discussion on doubly-regularized OT barycenters}\label{app:broader-doubly-reg}

The objective \eqref{EOT_bary_primal} is not the only way to formulate Entropic OT barycenter problem. Recent studies \cite{chizat2023doubly, noble2023treebased} consider the so-called \textit{doubly}-regularized Entropic OT barycenters. Following the notations from \cite{chizat2023doubly}, for $\lambda \geq 0$ and $\tau \geq 0$ we define:
\begin{gather*}
        \EOT_{c, \lambda, \tau}^{\text{dr}}(\bbP, \bbQ) \defeq \!\!\!\!\min\limits_{\pi \in \Pi(\bbP, \bbQ)} \!\Big\{\E{(x, y) \sim \pi}\!\!\! c(x, y) + \lambda KL(\pi \Vert \bbP \otimes \bbQ) + \tau H(\bbQ)\Big\}.\label{EOT_dr_primal}
\end{gather*}
The corresponding \textit{doubly}-regularized EOT barycenter problem is as follows:
\begin{gather}
    \mathcal{L}^{*, \text{dr}} \defeq \!\!\!\!\inf\limits_{\bbQ \in \cP(\cY)}\!\!\!\mathcal{B}^{\text{dr}}(\bbQ)\!\defeq\!\!\!\! \inf\limits_{\bbQ \in \cP(\cY)} \sum\limits_{k = 1}^{K} \lambda_k \EOT_{c, \lambda, \tau}^{\text{dr}}(\bbP_k, \bbQ)
    .\label{EOT_bary_dr_primal}
\end{gather}
It turns out that our considered EOT barycenter \eqref{EOT_bary_primal} with regularization strength $\epsilon$ is the particular case of \eqref{EOT_bary_dr_primal} with $\lambda = \tau = \epsilon$. According to the classification \cite[Table 1]{chizat2023doubly}, this case is known as \textit{Schrödinger} barycenter. The natural question arizes: Is it possible to adapt our energy-guided methodology for the case $\lambda \neq \tau$?

To answer this question, in what follows, we have a closer look at the differences between general \textit{doubly}-regularized formulations and our particular \textit{Schrödinger} specification. The important property of our case is that the entropy term $H(\mathbb{Q})$ completely disappears from the barycenter objective. In all the other regularized cases, when $\lambda\neq \tau$, this entropy term immediately reappears (either with the plus or minus sign). The presence of $H(\mathbb{Q})$ term notably differs from ours and seems like to be not suitable for our solver. In what follows, we explain the reasons.

In the \textit{Schrödinger} case, the barycenter problem can be solved via optimizing conditional distributions $\pi_{k}(y|x_{k})$. Namely, we use potentials $f_k$ combined with costs $c_k$ to approximate the energy functions (unnormalized log-densities) of these conditionals. We employ EBM-based techniques to compute the gradient of the learning objective which avoids the direct computation of the entropy terms $H(\pi_{k}(y|x_{k}))$ appearing in the $C_k$-transforms.

\begin{wraptable}{r}[15pt]{7cm}
	\vspace{-3mm}
	\centering
	\scriptsize
	\begin{tabular}{|x{30mm}|x{8mm}|x{9.5mm}|}
		\hline
		Experiment & \makecell{training \\ time} & \makecell{inference \\ time} \\
		\Xhline{2.5\arrayrulewidth}
		Toy $2D$ (\textbf{Ours}) & 3h & 20s \\
		\hline
        Gaussians (\textbf{Ours}) &  6h & 40s \\  
        \hline
		Sphere (\textbf{Ours}) &  3h &  20s \\ 
        \hline
        MNIST 0/1 manifold (\textbf{Ours}) &  10h &  1m \\  
        \hline
        MNIST 0/1 data (\textbf{Ours}) &   20h & 1m \\
        \hline
        Ave Celeba manifold (\textbf{Ours}) &  60h &  2m \\ 
        \hline
        Ave Celeba data (\textbf{Ours}) &  60h & 2m \\
        \Xhline{2.5\arrayrulewidth}
        Ave Celeba data (\textbf{WIN}) & 160h & 10s \\  
        \hline
		Ave Celeba data (\textbf{SCWB}) & 200h &  10s \\ 
        \hline
	\end{tabular}
	\vspace{-1mm}
	\hspace*{4mm}\caption{\centering\small Computational complexity for \textbf{Ours} (all experiments) and baselines (Ave Celeba).}
	\label{table-comp-complexity}
	\vspace{-3mm}
\end{wraptable}

If we further add the non-zero term $H(\mathbb{Q})$ to the barycenter objective, this will presumably require (in the dual objective) a separate computation of the entropy terms $H(\pi_{k}(y))$ of second marginals $\pi_{k}(y)$ of each $\pi_{k}$. \textbf{This is highly non-trivial} and it seems like our \textit{solver does not easily generalize to this case}. In our framework, we can get samples of $\pi_{k}(y)$ by MCMC (via sampling $x_{k}\sim \mathbb{P}_{k}$ and then running MCMC for $\pi(y|x_{k})$). However, estimation of entropy of $\pi_{k}(y)$ from raw samples still remains infeasible. In particular, EBM-like techniques (which we employ) can not be used here to derive the gradient of the objective. This is because the  required unnormalized density of $\pi_{k}(y)$ is itself unknown (we only know it for conditional distributions $\pi_{k}(y|x)$).

\section{Experimental Details}
\label{sec-exp-details}

The hyperparameters of our solver are summarized in Table \ref{tab:table-hyperparams}. Some hyperparameters, e.g., $L, S, iter$, are chosen primarily from time complexity reasons. Typically, the increase in these numbers positively affects the quality of the recovered solutions, see, e.g., \cite[Appendix E, Table 16]{gushchin2023building}. However, to reduce the computational burden, we report the figures which we found to be reasonable. Working with the manifold-constraint setup, we parameterize each $g_{\theta_{k}}(z)$ in our solver as $h_{\theta_{k}}\circ G(z)$, where $G$ is a pre-trained (frozen) StyleGAN and $h_{\theta_{k}}$ is a neural network with the ResNet architecture. We empirically found that this strategy provides better results than a direct MLP parameterization for the function $g_{\theta_{k}}(z)$.

\begin{center}
\begin{table}[!h]
\scriptsize
\centering
\hspace*{-2mm}\begin{tabular}{|l|l|l|l|l|l|l|l|l|l|l|l|l|l|}
\hline
Experiment    & $D$    & $K$ & $\epsilon$ & $\lambda_{1}$   & $\lambda_{2}$  & $\lambda_{3}$ & $\lambda_{4}$  & $f_{\theta,k}$    & $lr_{g_{\theta,k}}$  & \begin{tabular}[c]{@{}l@{}} $iter$ \end{tabular} & $\sqrt{\eta}$ &  $L$ & \begin{tabular}[c]{@{}l@{}}  $S$\end{tabular}\\\hline
Toy 2D        & 2        & 3 & $10^{-2}$    & 1/3 & 1/3 & 1/3    & -    & MLP    & $10^{-2}$          & 200   & 1.0 & 300 & 256  \\\hline  
MNIST 0/1     & 1024     & 2 & $10^{-2}$    & 0.5  & 0.5 & -     & -    & ResNet & $10^{-4}$          & 1000  & 0.1 & 500 & 32   \\\hline 
MNIST 0/1     & 512      & 2 & $10^{-2}$    & 0.5  & 0.5 & -     & -    & ResNet & $10^{-4}$          & 1000  & 0.1 & 250 & 32   \\\hline 
\makecell[l]{Ave,  celeba!\\ (Data)}  & 3*64$^2$      & 3 & $10^{-2}$    & 0.25 & 0.5 & 0.25  & -    & ResNet & $10^{-4}$          & 5000  & 0.1 & 500 & 64  \\\hline
\makecell[l]{Ave,  celeba!\\ (Manifold)}  & 512      & 3 & $10^{-4}$    & 0.25 & 0.5 & 0.25  & -    & ResNet & $10^{-4}$          & 1000  & 0.1 & 250 & 128  \\\hline
Gaussians     & \!2-64\! & 3 & $10^{-2}$, 1 & 0.25 & 0.25 & 0.5  & -    & MLP    & $10^{-3}$          & 50000 & 0.1 & 700 & 1024 \\\hline
Sphere        & 3        & 4 & $10^{-2}$    & 0.25 & 0.25 & 0.25 & 0.25 & MLP    & 3 $\times 10^{-3}$ & 1000  & 1.0 & 300 & 256  \\\hline
Singlle-cell        & 1000        & 2 & $10^{-2}$    & 0.25 & 0.75 & - & - & MLP    & 5 $\times 10^{-4}$ & 1000  & 0.05 & 1000 & 1024 \\\hline 
\end{tabular}
\vspace{4mm}
\caption{\centering Hyperparameters  that we use in the  experiments with our Algorithm \ref{algorithm-main}. }
\label{tab:table-hyperparams}
\end{table}
\end{center}

To train the StyleGAN for MNSIT01 \& Ave, celeba! experiments, we employ the official code from
\begin{center}
    \url{https://github.com/NVlabs/stylegan2-ada-pytorch}
\end{center}

\textbf{Computational complexity.} We report the (approximate) time for training and inference (batch size $S = 128$) of our method on different experimental setups, see Table \ref{table-comp-complexity} (the hardware is a single V100 gpu). For Ave Celeba experiment, we additionally report the computational complexity of the competitors. As we can see, all the methods in this experiment (Ave Celeba) require a comparable amount of time for training. The inference with our approach is expectedly costlier than competitors due to the reliance on MCMC. 

\subsection{Barycenters of 2D/3D Distributions}
\label{sec-exp-details-toy}

\textbf{Cartesian representation of twister map.} In \S\ref{sec-exp-toy} we define twister map $u$ using polar coordinate system. For clearness, we give its form on cartesian coordinates. Let $x \in \bbR^2$, $x = (x_{(1)}, x_{(2)})$. Note that $\Vert x \Vert = \sqrt{x_{(1)}^2 + x_{(2)}^2}$ Then,
\begin{align*}
    u(x) = u(x_{(1)}, x_{(2)}) = \begin{pmatrix}
        \cos \big(\Vert x \Vert\big) & -\sin\big(\Vert x \Vert\big) \\ \sin \big(\Vert x \Vert\big) & \cos \big(\Vert x \Vert\big) \end{pmatrix} \begin{pmatrix}
            x_{(1)} \\ x_{(2)}
        \end{pmatrix},
\end{align*}
i.e., the twister map rotates input points $x$ by angles equal to $\Vert x \Vert$.

\textbf{Analytical barycenter distribution for 2D Twister experiment.}
Below, we provide a mathematical derivation that the true unregularized barycenter of the distributions $\bbP_1,\bbP_2,\bbP_3$ in Fig.\ \ref{fig:twister-general-gt} coincides with a Gaussian.

We begin with a rather general observation. Consider $\cX_{k}=\cY=\bbR^{D}$ ($k\in\overline{K}$) and let $\text{OT}_{c}\defeq \text{EOT}_{c,0}$ denote the unregularized OT problem ($\epsilon=0$) for a given continuous cost function $c$. 
Let $u:\bbR^{D}\rightarrow\bbR^{D}$ be a measurable bijection and consider $\bbP_{k}'\in\mathcal{P}(\bbR^{D})$ for  $k\in\overline{K}$. 
By using the change of variables formula, we have for all $\bbQ'\in\mathcal{P}(\bbR^{D})$ that
\begin{equation}
    \text{OT}_{c\circ (u \times u)}(u^{-1}_\# (\bbP'),u^{-1}_\#(\bbQ'))=\text{OT}_{c}(\bbP',\bbQ'),
    \label{ot-change-of-variables}
\end{equation}
where $\#$ denotes the pushforward operator of distributions and $[c\circ (u\times u)](x,y)=c\big(u(x),u(y)\big)$. 
Note that by \eqref{ot-change-of-variables} the barycenter of $\bbP_{1}',\dots,\bbP_{K}'$ for the unregularized problem with cost $c$ coincides with the result of applying the pushforward operator $u^{-1}_\#$ to the barycenter of the very same problem but with cost $c \circ (u \times u)$.

Next, we fix $u$ to be the twister map (\S\ref{sec-exp-toy}).
In Fig.\ \ref{fig:twister-general-gt} we plot the distributions $\bbP_{1}\defeq u^{-1}_\# \bbP_{1}',u^{-1}_\# \bbP_{1}',u^{-1}_\# \bbP_{3}'$ which are obtained from Gaussian distributions $\bbP_{1}'=\mathcal{N}\big((0,4), I_2\big),\bbP_{2}'=\mathcal{N}\big((-2,2\sqrt{3}), I_2\big),\bbP_{3}'=\mathcal{N}\big((2,2\sqrt{3}), I_2\big)$ by the pushforward. Here $I_{2}$ is the 2-dimensional identity matrix.
For the unregularized $\ell^{2}$ barycenter problem, the barycenter of such shifted Gaussians can be derived analytically \cite{alvarez2016fixed}.
The solution coincides with a zero-centered standard Gaussian $\bbQ'=\mathcal{N}\big(0, I_2\big)$. 
Hence, the barycenter of $\bbP_1,\ldots,\bbP_K$ w.r.t.\ the cost $\ell^{2}\circ (u \times u)$ is given by $\bbQ^{*}=u^{-1}_\# \bbQ'$. 
From the particular choice of $u$ it is not hard to see that ${\bbQ^{*}=\bbQ'=\mathcal{N}\big(0, I_2\big)}$ as well.

\subsection{Barycenters of MNIST Classes}
\label{sec-bary-mnist}

\textbf{Additional qualitative examples} of our solver's results are given in Figure \ref{fig:MNIST-0-1-extra}.

\textbf{Details of the baseline solvers.} For the solver by \cite[SCWB]{fan2021scalable}, we run their publicly available code from the official repository
\begin{center}
    \url{https://github.com/sbyebss/Scalable-Wasserstein-Barycenter}
\end{center}
The authors do no provide checkpoints, and we train their barycenter model from scratch. In turn, for the solver by \cite[WIN]{korotin2022wasserstein}, we also employ the publicly available code
\begin{center}
    \url{https://github.com/iamalexkorotin/WassersteinIterativeNetworks}
\end{center}
Here we do not train their models but just use the checkpoint available in their repo.

 \begin{figure*}[!t]
     \centering
    \begin{subfigure}[b]{0.95\textwidth}
          \centering
          \includegraphics[width=0.99\linewidth]{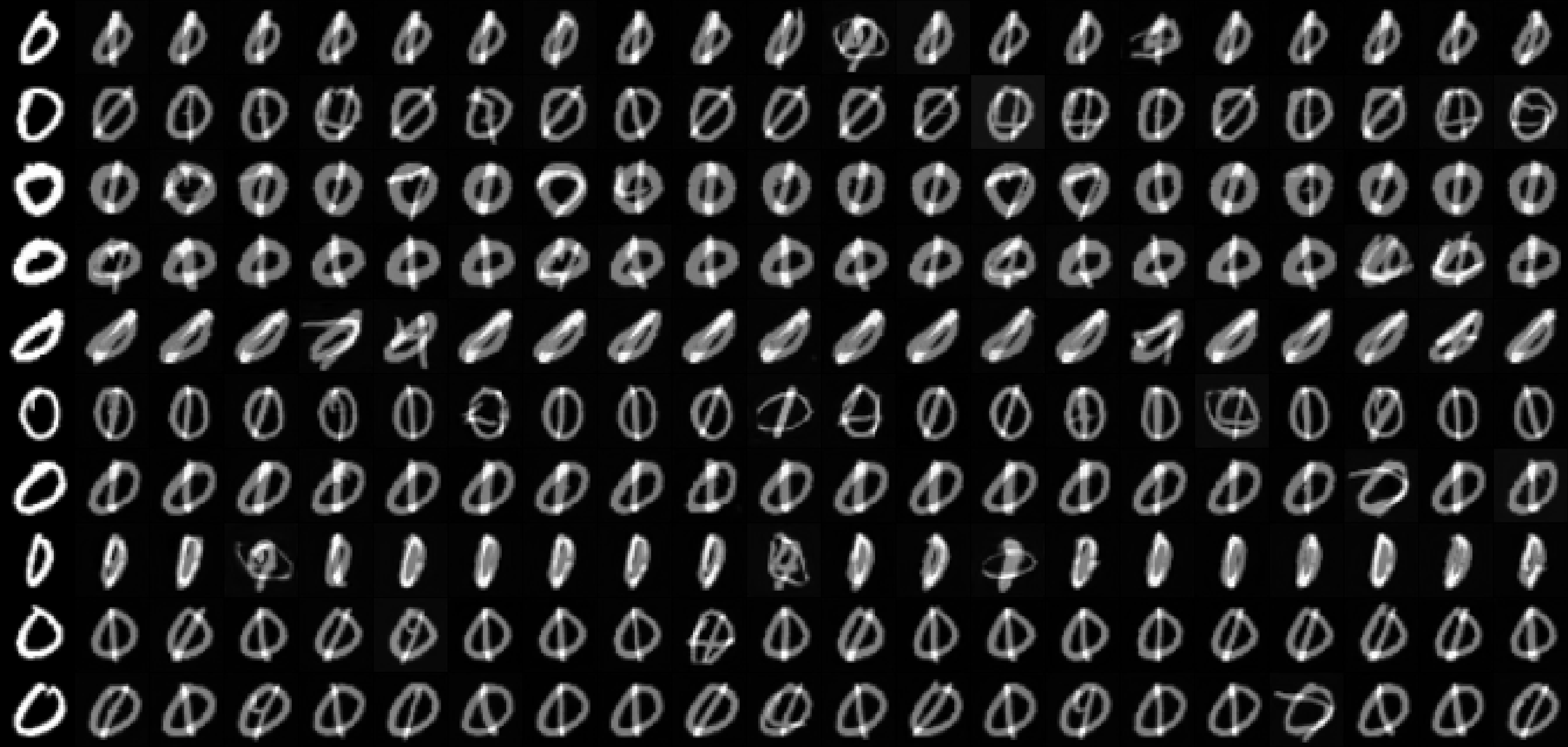}
         \caption{Learned plans from $\mathbb{P}_{1}$ (zeros, 1st column) to the barycenter.}
         \label{fig:mnist-0-1-map_1-extra}
     \end{subfigure}
     
      \begin{subfigure}[b]{0.95\textwidth}
         \centering
         \includegraphics[width=0.99\linewidth]{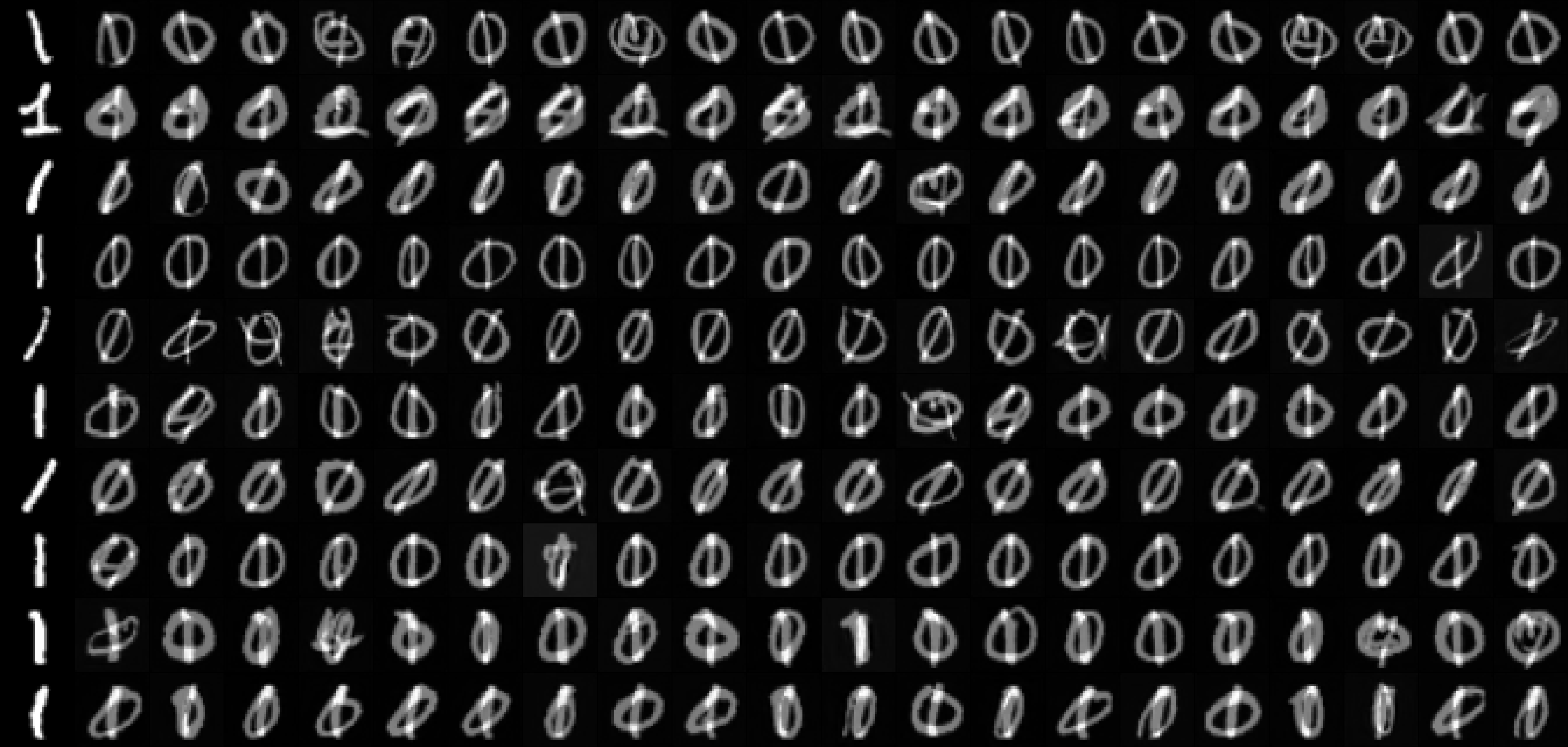}
         \caption{Learned plans from $\mathbb{P}_{2}$ (ones, first column) to the barycenter.}
         \label{fig:mnist-0-1-map_2-extra}
      \end{subfigure}
 \vspace{-1.6mm}
     \caption{\textit{Experiment with averaging MNIST 0/1 digit classes.} The plot shows additional examples of samples transported with \textbf{our} solver to the barycenter.}
     \label{fig:MNIST-0-1-extra}
     \vspace{3mm}
 \end{figure*}

\subsection{Barycenters  of the Ave, Celeba! Dataset} 

\textbf{Additional qualitative examples} of our solver's results are given in Figure \ref{fig:ave-celeba-canvas}.
 
 \begin{figure*}[!t]
     \centering
    \begin{subfigure}[b]{0.32\textwidth}
          \centering
          \includegraphics[width=0.95\linewidth]{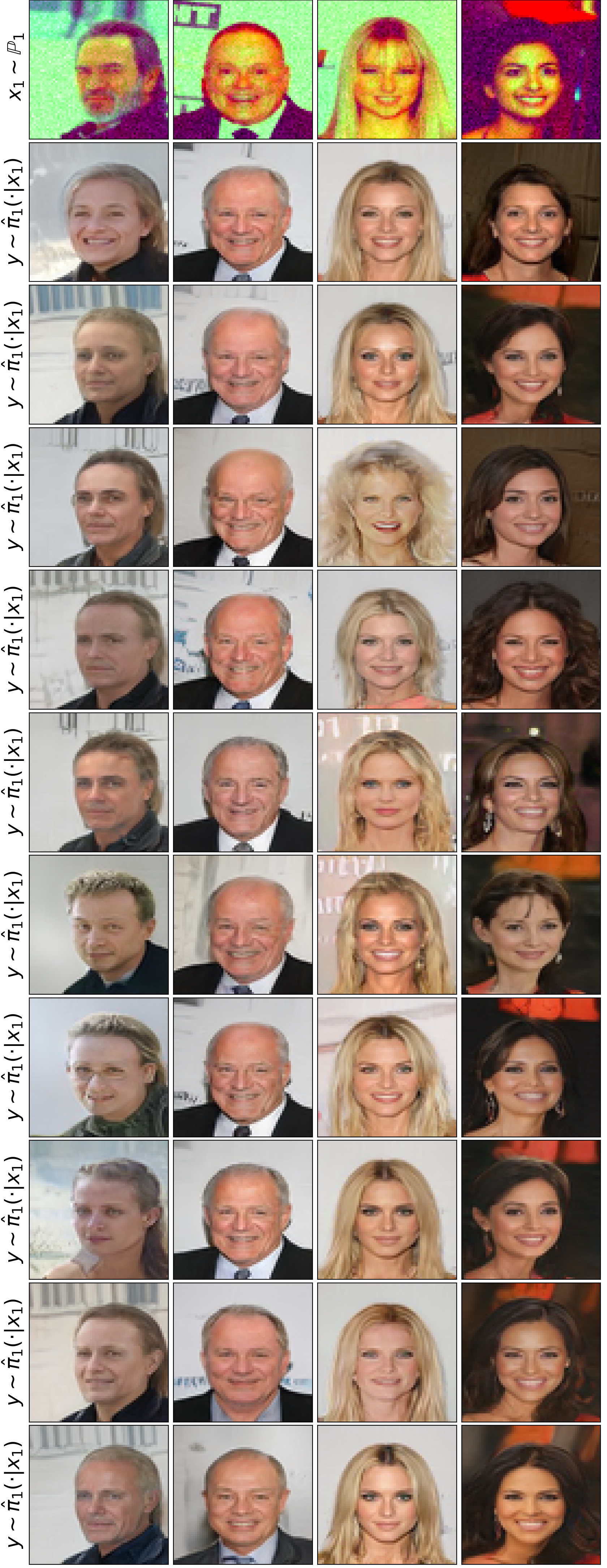}
         \caption{Maps from $\mathbb{P}_{1}$ to the barycenter.}
         \label{fig:celeba_0-win-canvas}
     \end{subfigure}\hfill
      \begin{subfigure}[b]{0.32\textwidth}
         \centering
         \includegraphics[width=0.95\linewidth]{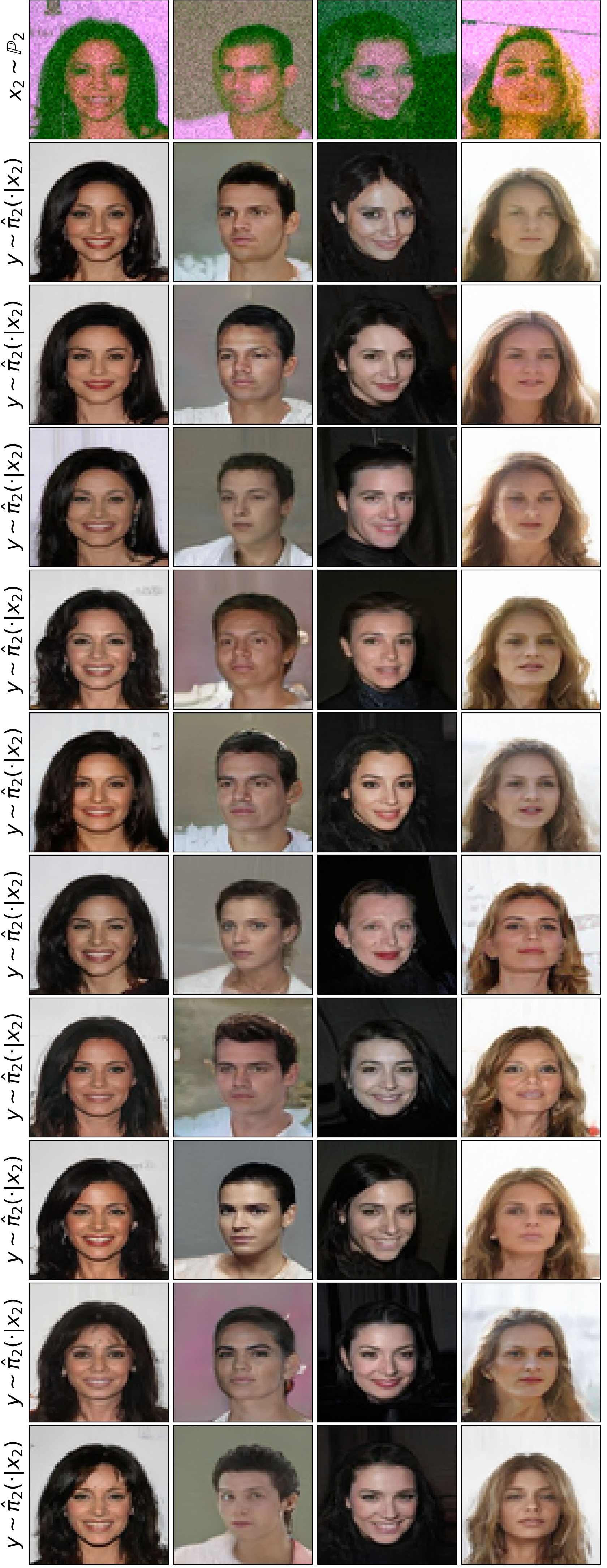}
        \caption{Maps from $\mathbb{P}_{2}$ to the barycenter.}
         \label{fig:celeba_1-win-canvas}
      \end{subfigure}
      \begin{subfigure}[b]{0.32\textwidth}
         \centering
         \includegraphics[width=0.95\linewidth]{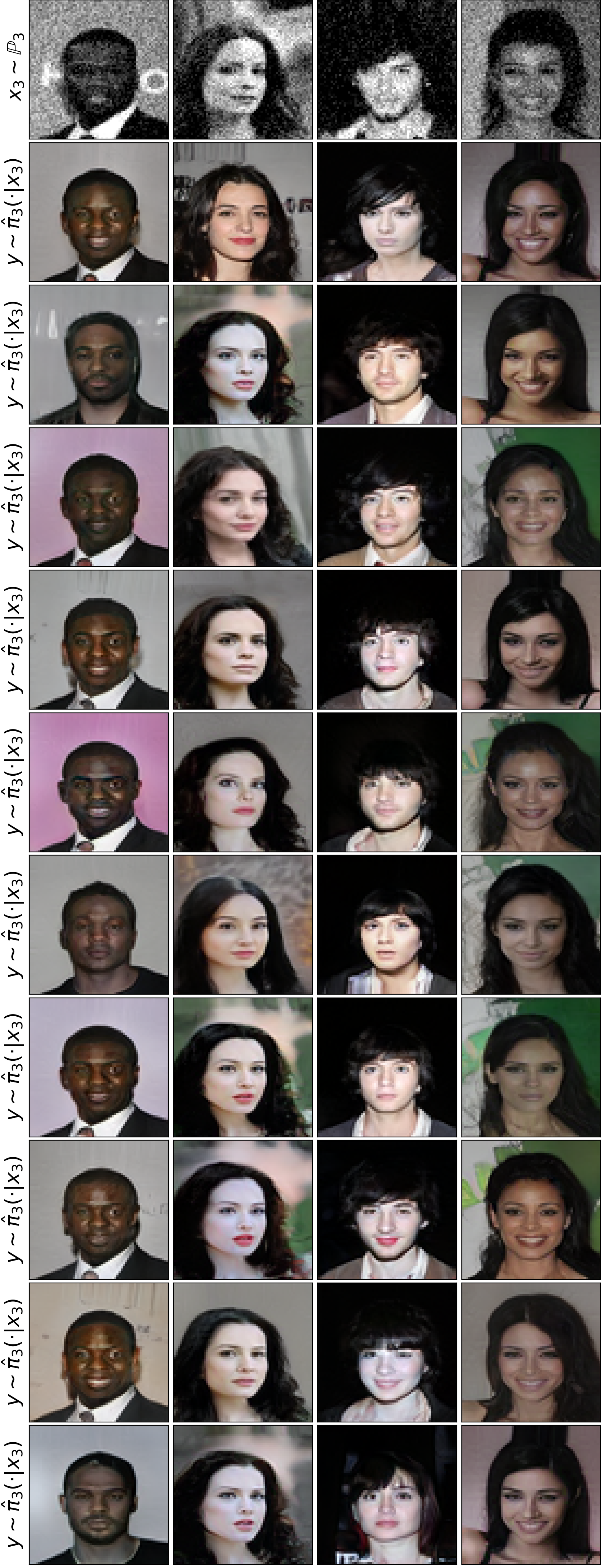}
         \caption{Maps from $\mathbb{P}_{3}$ to the barycenter.}
         \label{fig:celeba_2-win-canvas}
      \end{subfigure}
 \vspace{-1.2mm}
     \caption{\textit{Experiment on the Ave, celeba! barycenter dataset.} The plots show additional examples of samples transported with \textbf{our} solver to the barycenter.}
     \label{fig:ave-celeba-canvas}
 \end{figure*}

\textbf{Extra experiment in Data Space.}
Our method in the manifold constrained setup on MNIST dataset performs better than in the data space setup (see Figure \ref{fig:MNIST-0-1}). It generates less noised images and demonstrates better perceptual quality. For this reason, we provide only Manifold-constrained setting for experiment with Ave, Celeba! dataset in \wasyparagraph\ref{sec-exp-celeba}.

For completeness, here we also test our method directly in the data space setup for Ave, Celeba! dataset. Hyperparameters of our solver are listed in Table \ref{tab:table-hyperparams}. In Figure \ref{fig:ave-celeba-data}, we show images obtained in Data space setup. As expected, the FID scores in Data space are not that good as the images are more noised since we solve the entropy-regularized problem. But we stress one more time that our method permits StyleGAN manifold trick, which greatly improves the performance, see the images in Figure \ref{fig:ave-celeba-main} for the manifold-constrained setup. 

\textbf{Convergence at training.}
We provide the behaviour of $\mathcal{L}_2$-UVP metric (between the unregularized ground truth barycenter mapping and the entropic barycenter mapping for our learned $\pi^{f_k}(y \vert x_k)$) for Ave Celeba experiment  \S\ref{sec-exp-celeba}, see Figure \ref{fig: L2UVP-convergence} . We emphasize that $\mathcal{L}_2$-UVP directly measures (by computing pointwise MSE) how the learned mapping differs from the true mapping.

\begin{figure*}[h!]
\vspace{-4mm}
\hspace*{-13mm}\includegraphics[width=1.195\linewidth]{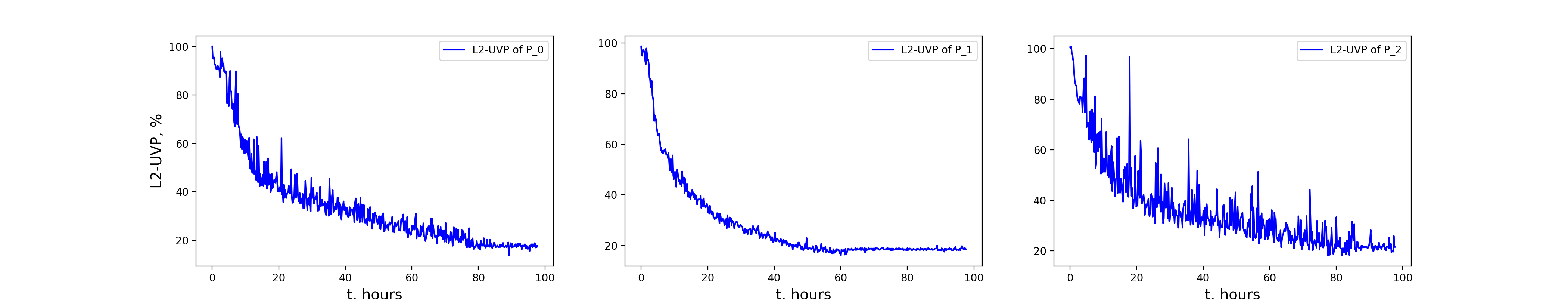}
\captionsetup{justification=centering}
\caption{Training curves of  $\mathcal{L}$2-UVP vs. time for $\textbf{OUR}$ proposed method in Manifold-constrained setup with Style-GAN on Ave, Celeba! dataset. The duration of the training is 100 h (1 GPU V100).}
\vspace{-1mm}
\label{fig: L2UVP-convergence}
\end{figure*}

\textbf{Convergence at inference.}
Since Langevin Dynamics is at the heart of our method for sampling from a conditional plan, it is important to demonstrate the dependence of the inference times/quality on the number of $L$ Langevin steps in Manifold constrained as well as in the Data space setup. We demonstrate Tables \ref{tabl: Ave_celeba_inference_time_data} and \ref{tabl: Ave_celeba_inference_time_manifold}, where we show the trade-off between number of Langevin steps $L$ and the obtained quality. To provide the comprehensive analysis, we report  FID scores as well as sampling time for different number of steps $L$. Expectedly, inference time linearly depends on  $L$ both setups. 

Our results testify that (in Manifold constrained setting) our method achieves good quality even with rather small number of Langevin steps, i.e., the computational burden of our approach could be considerably reduced with rather little trade-off in quality.

\begin{table}[h!]
\begin{minipage}{.46\linewidth}
\centering
\begin{tabular}{|lll|l|l|}
\hline
\multicolumn{3}{|l|}{\hspace{13mm}FID}                                                                                                                                                                                     &                     &                     \\ \cline{1-3}
\multicolumn{1}{|l|}{\hspace{1mm}k=0}                                                  & \multicolumn{1}{l|}{\hspace{1mm}k=1}                                                  & \hspace{1mm}k=2                                                  & \multirow{-2}{*}{L} & \multirow{-2}{*}{t, sec} \\ \hline
\multicolumn{1}{|l|}{15.8}                                                & \multicolumn{1}{l|}{15.3}                                                & 18.3                                                & 50                  & 15                   \\ \hline
\multicolumn{1}{|l|}{11.3}                                                & \multicolumn{1}{l|}{11.2}                                                & 14.3                                                & 150                 & 45                 \\ \hline
\multicolumn{1}{|l|}{8.4}                                                & \multicolumn{1}{l|}{8.7}                                                & 10.2                                                & 250                 & 75                 \\ \hline
\multicolumn{1}{|l|}{ 8.3} & \multicolumn{1}{l|}{  8.2} &  9.9 & 500                 & 150                 \\ \hline
\multicolumn{1}{|l|}{8.2} & \multicolumn{1}{l|}{8.2} &  9.8 & 1000                & 300                 \\ \hline
\end{tabular}
\vspace{2mm}
\caption{The dependence of running time of ULA on number of Langevin steps during inference mode and the trade-off between the steps and the quality of obtained images in Ave, Celeba! dataset in Manifold constrained setting.}\label{tabl: Ave_celeba_inference_time_data}
\vspace{2mm}
\end{minipage}
\hfill
\begin{minipage}{.44\linewidth}
\vspace{-5.3mm}
\centering
\begin{tabular}{|lll|l|l|}
\hline
\multicolumn{3}{|l|}{\hspace{13mm}FID}                                                                                                                                                                                     &                     &                     \\ \cline{1-3}
\multicolumn{1}{|l|}{\hspace{1mm}k=0}                                                  & \multicolumn{1}{l|}{\hspace{1mm}k=1}                                                  & \hspace{1mm}k=2                                                  & \multirow{-2}{*}{L} & \multirow{-2}{*}{t, sec} \\ \hline
\multicolumn{1}{|l|}{125.3}                                                & \multicolumn{1}{l|}{122.3}                                                & 187.6                                                & 10                  & 7                   \\ \hline
\multicolumn{1}{|l|}{119.4}                                                & \multicolumn{1}{l|}{120.0}                                                & 169.7                                                & 150                 & 105                 \\ \hline
\multicolumn{1}{|l|}{118.5}                                                & \multicolumn{1}{l|}{120.4}                                                & 168.8                                                & 250                 & 175                 \\ \hline
\multicolumn{1}{|l|}{ 118.3} & \multicolumn{1}{l|}{  120.1} &  168.9 & 500                 & 350                 \\ \hline
\multicolumn{1}{|l|}{118.8} & \multicolumn{1}{l|}{121.2} &  170.2 & 1000                & 700                 \\ \hline
 
\end{tabular}
\vspace{2mm}
\caption{The dependence of running time of ULA on number of Langevin steps during inference mode and the trade-off between the steps and the quality of obtained images in Ave, Celeba! dataset in Data space setup.}\label{tabl: Ave_celeba_inference_time_manifold}
\end{minipage}
\vspace{-3mm}
\end{table}

\textbf{Details of the baseline solvers.} For the \cite[WIN]{korotin2022wasserstein} solver, we use their pre-trained checkpoints provided by the authors in the above-mentioned repository. Note that the authors of \cite[SCWB]{fan2021scalable} do not consider such a high-dimensional setup with RGB images. Hence, to implement their approach in this setting, we follow \cite[Appendix B.4]{korotin2022wasserstein}.

\vspace{-0mm}\begin{figure*}[!t]
     \centering
    \begin{subfigure}[b]{0.3367\textwidth}
          \centering
          \includegraphics[width=0.95\linewidth]{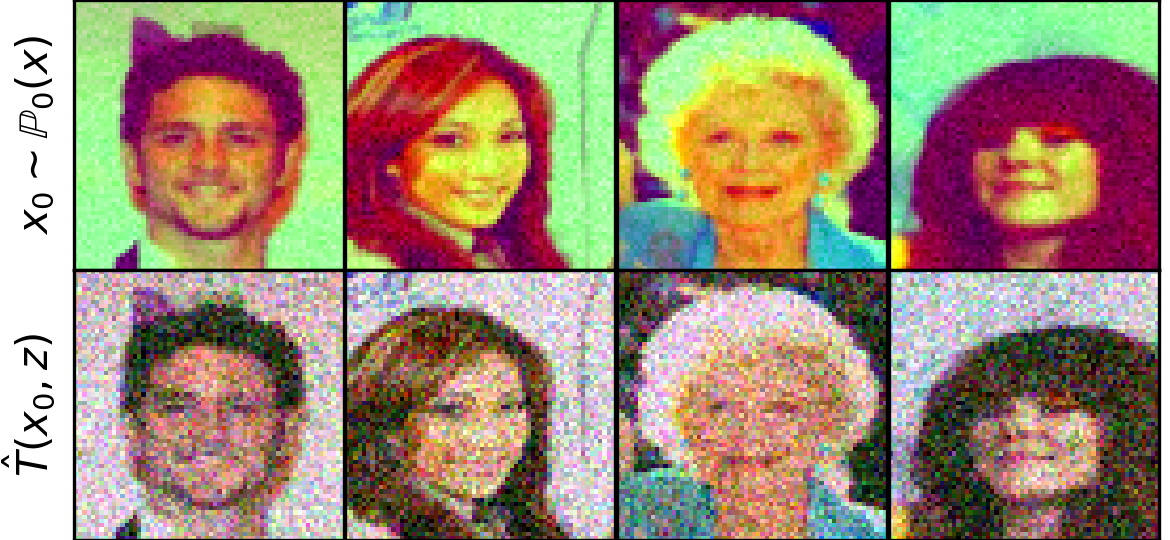}
         \caption{Maps from $\mathbb{P}_{1}$ to the barycenter.}
         \label{fig:celeba-data-p1}
     \end{subfigure}\hfill
      \begin{subfigure}[b]{0.32\textwidth}
         \centering
         \includegraphics[width=0.95\linewidth]{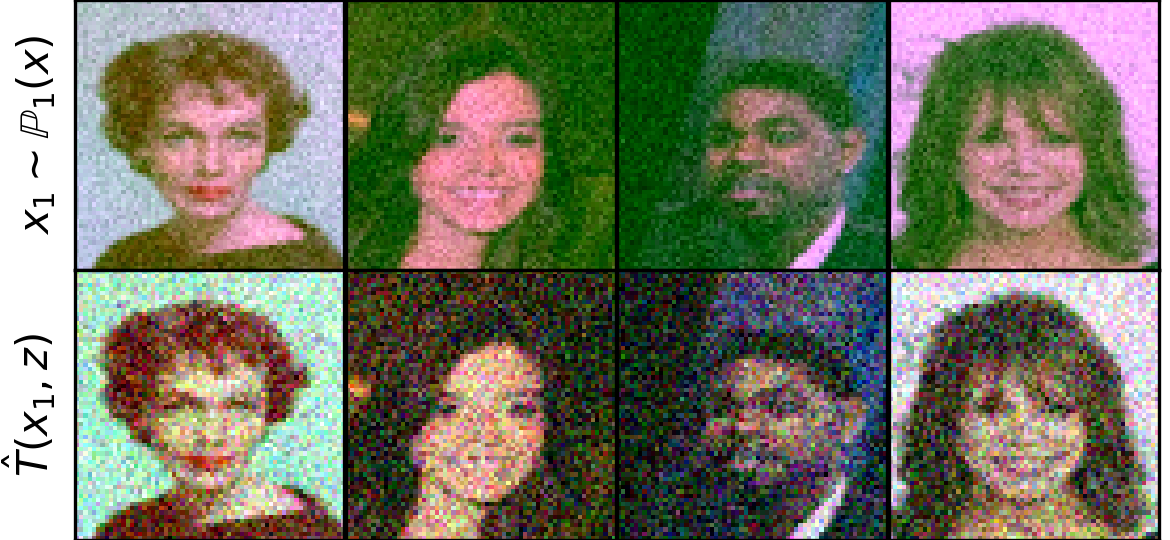}
        \caption{Maps from $\mathbb{P}_{2}$ to the barycenter.}
         \label{fig:celeba-data-p2}
      \end{subfigure}
      \begin{subfigure}[b]{0.32\textwidth}
         \centering
         \includegraphics[width=0.95\linewidth]{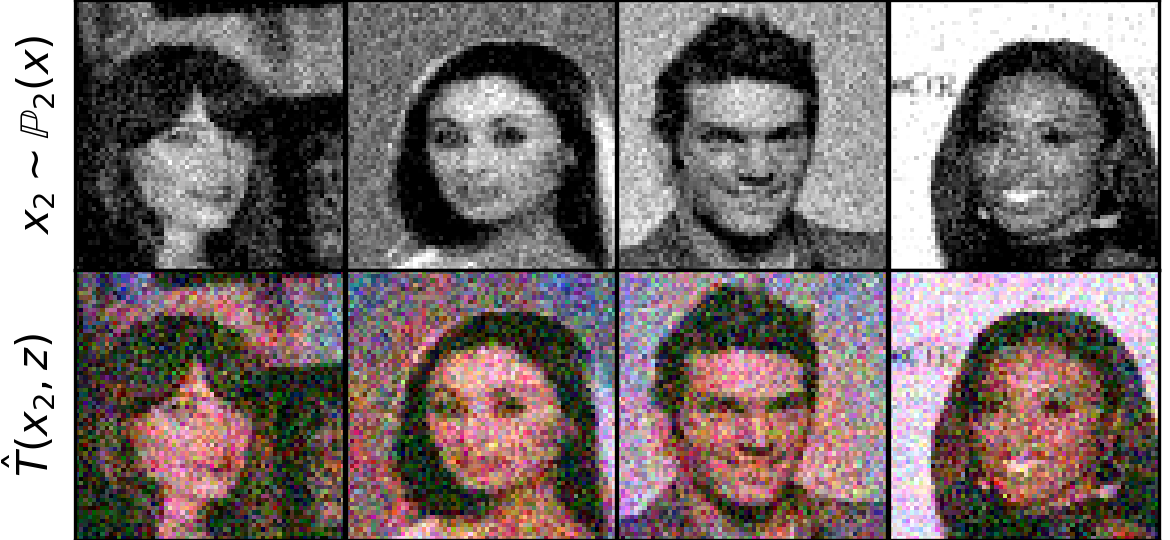}
         \caption{Maps from $\mathbb{P}_{3}$ to the barycenter.}
         \label{fig:celeba-data-p3}
      \end{subfigure}
 \vspace{-1mm}
     \caption{\textit{Experiment on the Ave, celeba! barycenter dataset.} The plots represent transported inputs ${x_{k}\sim \mathbb{P}_{k}}$ to the barycenter learned by our algorithm in Data space . The true unregularized $\ell^{2}$ barycenter of $\bbP_{1},\bbP_{2}, \bbP_3$ are situated in figure  \ref{fig:celeba-win_0},\ref{fig:celeba-win_1} and \ref{fig:celeba-win_2} correspondingly.}
     \label{fig:ave-celeba-data}
 \vspace{-0mm}
 \end{figure*}

\vspace{7mm}
\subsection{Barycenters of Gaussian Distributions}
 \label{sec-exp-bary-gauss}\vspace{-1mm}

We note that there exist many ways to incorporate the entropic regularization for barycenters \cite[Table 1]{chizat2023doubly}; these problems do not coincide and yield \textbf{different} solutions. For some of them, the ground-truth solutions are known for specific cases, such as the Gaussian case. For example, \cite[Theorem 3]{mallasto2022entropy} examines barycenters for OT regularized with KL divergence. They consider the task
\begin{eqnarray*}
    \inf_{\mathbb{Q}\in\mathcal{P(Y)}} \sum_{k=1}^{K} \lambda_k \big(\int_{\mathcal{X}\times \mathcal{Y}} \frac{\|x-y\|^2}{2} d\pi_k(x, y) + \epsilon \text{KL}(\pi_k\|\mathbb{P}_k\times \mathbb{Q})\big) =\\ 
    \inf_{\mathbb{Q}\in\mathcal{P(Y)}} \sum_{k=1}^{K} \lambda_k \big(\int_{\mathcal{X}\times \mathcal{Y}} \frac{\|x-y\|^2}{2} d\pi_k(x, y)-\epsilon\int_{\mathcal{X}} H(\pi_k(\cdot | x))d\mathbb{P}_k(x) + \epsilon H(\mathbb{Q})\big)=\\
    \inf_{\mathbb{Q}\in\mathcal{P(Y)}} \bigg\lbrace\underbrace{\sum_{k=1}^{K} \lambda_k \text{EOT}_{\ell^{2}, \epsilon}(\mathbb{P}_k, \mathbb{Q})}_{=\text{ our objective inside }\eqref{EOT_bary_primal}} + \epsilon H(\mathbb{Q})\bigg\rbrace.
\end{eqnarray*}
This problem differs from our objective \eqref{EOT_bary_primal} with $c(x,y)=\frac{1}{2}\|x-y\|^2$ by the non-constant $\mathbb{Q}$\textbf{-dependent} term $\epsilon H(\mathbb{Q})$; this problem yields a different solution. The difference with the majority of other mentioned approaches can be shown in the same way. In particular, \cite{le2022entropic} tackles the barycenter for inner product Gromov-Wasserstein problem with entropic regularization, which is not relevant for us. In turn, the authors of \cite{janati2020debiased} demonstrate significant progress towards GT results for barycenter problem with our considered regularization, but only for univariate (1D) case.
To our knowledge, the general Gaussian ground-truth solution for our problem  setup \eqref{EOT_bary_primal} is not yet known, although some of its properties seem to be established \cite{delbarrio2020statistical}.

Still, when $\epsilon\approx 0$, our entropy-regularized barycenter is expected to be close to the unregularized one ($\epsilon=0$). In the Gaussian case, it is known that the unregularized OT barycenter for $c_{k}(x, y) = \frac{1}{2}\|x-y\|^2$ is itself Gaussian and can be computed using the well-celebrated fixed point iterations of \cite[Eq. (19)]{alvarez2016fixed}. This gives us an opportunity to compare our results with the ground-truth unregularized barycenter in the Gaussian case. As the \textbf{baseline}, we include the results of \cite[WIN]{korotin2022wasserstein} solver which learns the unregularized barycenter ($\eps=0$).

We consider 3 Gaussian distributions $\bbP_{1},\bbP_{2},\bbP_{3}$ in dimensions $D=2,4,8,16,64$ and compute the approximate EOT barycenters $\pi^{\widehat{f}_k}_{k}$ for $\epsilon=0.01, 1$ w.r.t.\ weights $(\lambda_{1},\lambda_{2},\lambda_{3})=(\frac{1}{4},\frac{1}{4},\frac{1}{2})$ with our solver. To initialize these distributions, we follow the strategy of \cite[Appendix C.2]{korotin2022wasserstein}. The ground truth unregularized barycenter $\bbQ^{*}$ is estimated via the above-mentioned iterative procedure. We use the code from WIN repository available via the link mentioned in Appendix \ref{sec-bary-mnist}.
To assess the WIN solver, we use the unexplained variance percentage metrics defined as $\mathcal{L}_{2}\text{-UVP}(\hat{T})=100\cdot[\|\hat{T}-T^*\|]_{\mathbb{P}}^2$ where $T^*$ denotes the optimal transport map $T^*$, see \cite[\S 5.1]{korotin2021wasserstein}. Since our solver computes EOT plans but not maps, we evaluate the barycentric projections of the learned plans, i.e.,  $\widehat{\overline{T}}_k(x)=\int_{\mathcal{Y}} y \pi_k^{\widehat{f}_{k}} (y|x)$, and calculate $\mathcal{L}_{2}\text{-UVP}(\widehat{\overline{T}}_k, T^*_k)$. We evaluate this metric using $10^4$ samples. To estimate the barycentric projection in our solver, we use $10^3$ samples $y\sim\pi_k^{\widehat{f}_{k}}(y|x_k)$ for each $x_k$. 
To keep the table with the results simple, in each case we report the average of this metric for $k=1,2,3$ w.r.t.\ the weights $\lambda_{k}$.

\begin{table}[!h]
\centering
\footnotesize
\begin{tabular}{c|c|c|c}\hline
\textbf{Dim / Method}    & Ours ($\eps=1$) & Ours ($\eps=0.01$) & \cite[WIN]{korotin2022wasserstein} \\ \hline
2 & 1.12 & \textbf{0.02} & 0.03 \\ \hline
4 &  1.6 & \textbf{0.05} & 0.08\\ \hline
8 &  1.85 & \textbf{0.06}  & 0.13\\ \hline
16 &  1.32 & \textbf{0.09}  & 0.25\\ \hline
64 &  1.83 & 0.84 & \textbf{0.75} \\ \hline
\end{tabular}
\vspace{2mm}
\caption{$\mathcal{L}_{2}\text{-UVP}$ for our method with $\eps=0.01, 1$ and WIN, $D=2,4,8,16,64$.}
\label{table-gaussians}
\end{table}

We see that for small $\eps=0.01$ and dimension up to $D=16$, our algorithm gives the results even better than WIN solver designed specifically for the unregularized case ($\epsilon=0$). As was expected, larger $\eps=1$ leads to the increased bias in the solutions of our algorithm and $\mathcal{L}_2\text{-UVP}$ metric increases.

\textbf{Effect of the batch size.} In order to test how our method is affected by batch size, we conduct an additional empirical study with varying batch size. We follow our Gaussian experimental setup from above  and pick $(D,\epsilon) = (2,0.1)$, $(D,\epsilon) = (16,0.01)$. As the batch sizes, we consider $2, 4, 8, 16, 32, 64, 128, 256, 512, 1024$. As we can see in Table \ref{tabl: gaussian_on_batch}, the quality of the recovered plans $\pi^{f_k}$ between reference and barycenter distributions naturally grows with the batch size. In all our other experiments, we typically choose the batch size to be a reasonable number which, on the one hand, allows achieving sufficient quality and, on the other hand, provides reasonable computational burden.
In the image experiments, we use batch size $\leq 128$ as it already provides reasonable performance. Going beyond that is challenging due to the computational restrictions.

\begin{wrapfigure}{r}{0.45\textwidth}
\vspace{-4mm}
\includegraphics[width=0.99\linewidth]{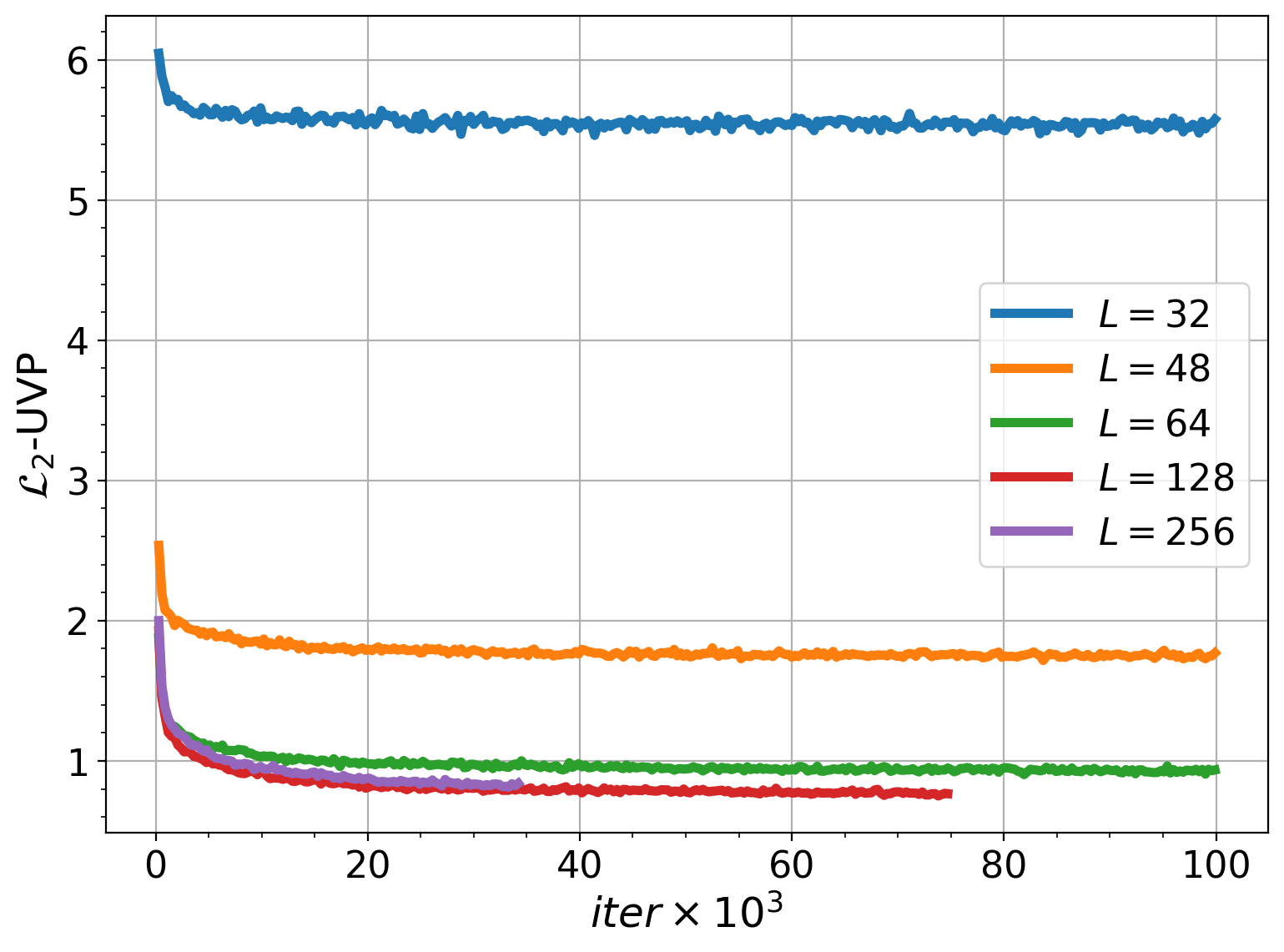} 
\caption{\centering\small The training curves of $\mathcal{L}_{2}$-UVP vs. iterations for $\textbf{OUR}$ proposed method for the barycenter of Gaussian distributions depending on number of Langevin steps $L$.}
\label{fig:gau_L_ablation}
\vspace{-9mm}
\end{wrapfigure}

\textbf{Effect of sampling steps number at training.} We conducted an ablation study testing how our method is affected by chosen number of discretized Langevin dynamic steps ($L$ from \S \ref{sec-algorithm}) \underline{at training}, the results are presented in Figure \ref{fig:gau_L_ablation}. In this experiment, we try to learn the barycenter of Gaussian distributions; we pick the dimensionality $D = 64$ (which is the highest-dimensional case among the considered); $\eps = 10^{-1}$. As we can see, performance drops when the number of steps is insufficient. Overall, in all our experiments, the number of Langevin steps is chosen to achieve reasonable qualitative/quantitative results.

\subsection{Single-cell experiment}
\label{sec-single-cell}
We consider the problem of predicting the interpolation between single cell populations at multiple timepoints from \cite{tong2024simulation}. The objective is to interpolate the distribution of cell population at any intermediate point in time, call it $t_i$, given the cell population distributions at past and future time-points $t_{i + 1}$ and $t_{i - 1}$. Since this is an interpolation problem, it is natural to expect that the intermediate population is a (entropy-regularized) barycentric average (with $\ell^{2}$ cost) of both the population distributions available at the nearest preceding and future times. We leverage the data pre-processing and metrics from paper \cite{korotinlight}, wherein the authors provide a complete notebook with the code to launch the setup similar to \cite{tong2024simulation}. There are 3 experimental settings with dimensions $D=50,100,1000$, each setting contains two setups: predicting day 3 given days 2 and 4 and predicting day 4 given days 3 and 7. The metric is MMD; see \cite{tong2024simulation} or Section 5.3 from \cite{korotinlight} for additional experimental details. We report the result in Table \ref{tabl: single-cell} where we find that in most cases, our general-purpose entropic barycenter approach nearly matches the performance of leading baselines. This underscores the scope of problems in which our barycentric optimal transport technology can act as a viable foundation model, directly out-of-the-box.
\begin{table*}[t!]
\vspace{1mm}
\centering
\color{black}
\begin{tabular}{ |c|c|c|c| } 
\hline
 Solver/DIM &  50 & 100 & 1000 \\ 
 \hline
\bf{OUR} &  $\textbf{2.32} \pm  0.12$  & $2.26 \pm 0.09$ & $1.34 \pm 0.12$\\ 
 \hline
 LightSB-M [2] & $2.33 \pm 0.09$ & $ \textbf{2.172} \pm 0.08$  &  $\textbf{1.33} \pm 0.05$ \\ 
 \hline
  SFM-sink [3] & $2.66 \pm 0.18$  &  $2.52 \pm 0.17$ &  $1.38 \pm 0.05$ \\ 
 \hline
 EGNOT [1] & $2.39 \pm 0.06$  &  $2.32 \pm 0.15$ &  $1.46 \pm 0.20$ \\ 
 \hline
 
\end{tabular}
\captionsetup{justification=centering}
 \caption{Energy distance (averaged for two setups and 5 random seeds) on the MSCI dataset along with
95\%-confidence interval ($\pm$-intervals). The best solver according to the mean value is
\textbf{bolded}. }
\label{tabl: single-cell}
\vspace{-1mm}
\end{table*}

\begin{table}[]
\small
\begin{tabular}{|l|l|l|l|l|l|l|l|l|l|l|l|}
\hline
$(D,\epsilon)$     & Batch size                   & 2    & 4    & 8    & 16   & 32   & 64   & 128  & 256  & 512  & 1024 \\ \hline
(2,0.1)   & \multirow{2}{*}{$\mathcal{L}_2\text{-UVP}\downarrow$} & 0.20 & 0.14 & 0.07 & 0.05 & 0.05 & 0.04 & 0.02 & 0.03 & 0.02 & 0.02 \\ \cline{1-1} \cline{3-12} 
(16,0.01) &                     & 0.56 & 0.40 & 0.37 & 0.30 & 0.25 & 0.22 & 0.20 & 0.17 & 0.15 & 0.13 \\ \hline
\end{tabular}
\vspace{2mm}
\caption{Influence of the batch size on the performance of our approach in the case of computing the barycenters of Gaussian distributions.}\label{tabl: gaussian_on_batch}
\end{table}

\section{Alternative EBM training procedure}
\label{sec-ebm-training-alternative}

\begin{figure}[!h]
\begin{subfigure}[b]{0.254\linewidth}
\centering
\includegraphics[width=0.995\linewidth]{pics/twister_gt_general.png}
\caption{\centering\scriptsize The true unregularized barycenter for the twisted cost.}
\vspace{-1mm}
\label{fig:twister-importance-general-gt}
\end{subfigure}
\vspace{-1mm}\hspace{0.3mm}\begin{subfigure}[b]{0.232\linewidth}
\centering
\includegraphics[width=0.995\linewidth]{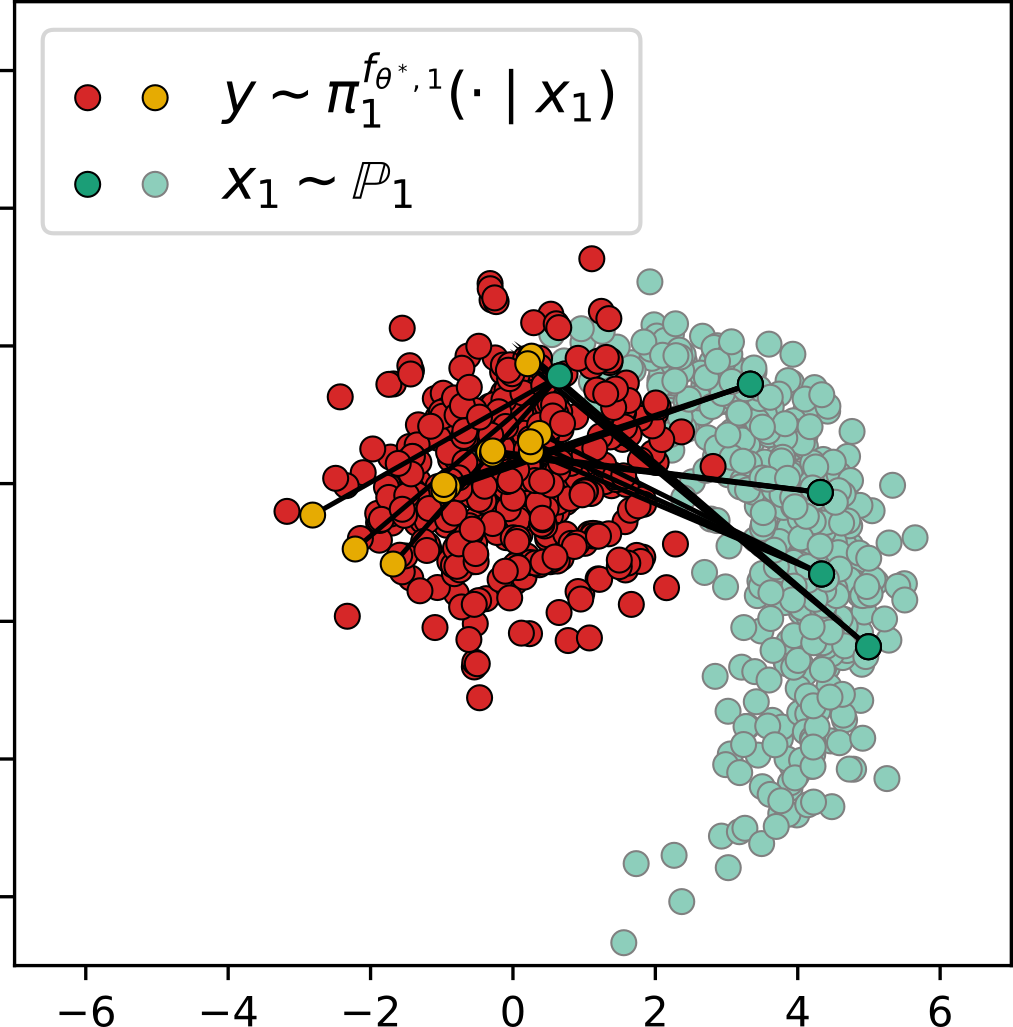}
\caption{\centering\scriptsize \textbf{Our} EOT barycenter for the twisted cost (map from $\bbP_{1}$).}
\label{fig:twister-importance-general-our}
\vspace{-1mm}
\end{subfigure}
\hspace{0mm}
\vline
\hspace{2mm}\begin{subfigure}[b]{0.232\linewidth}
\includegraphics[width=0.995\linewidth]{pics/twister_gt_l2.png}
\caption{\centering\scriptsize The true unregularized barycenter for $\ell^{2}$ cost.}
\label{fig:twister-importance-l2-gt}
\vspace{-1mm}
\end{subfigure}
\hspace{0.3mm}\begin{subfigure}[b]{0.232\linewidth}
\centering
\includegraphics[width=0.995\linewidth]{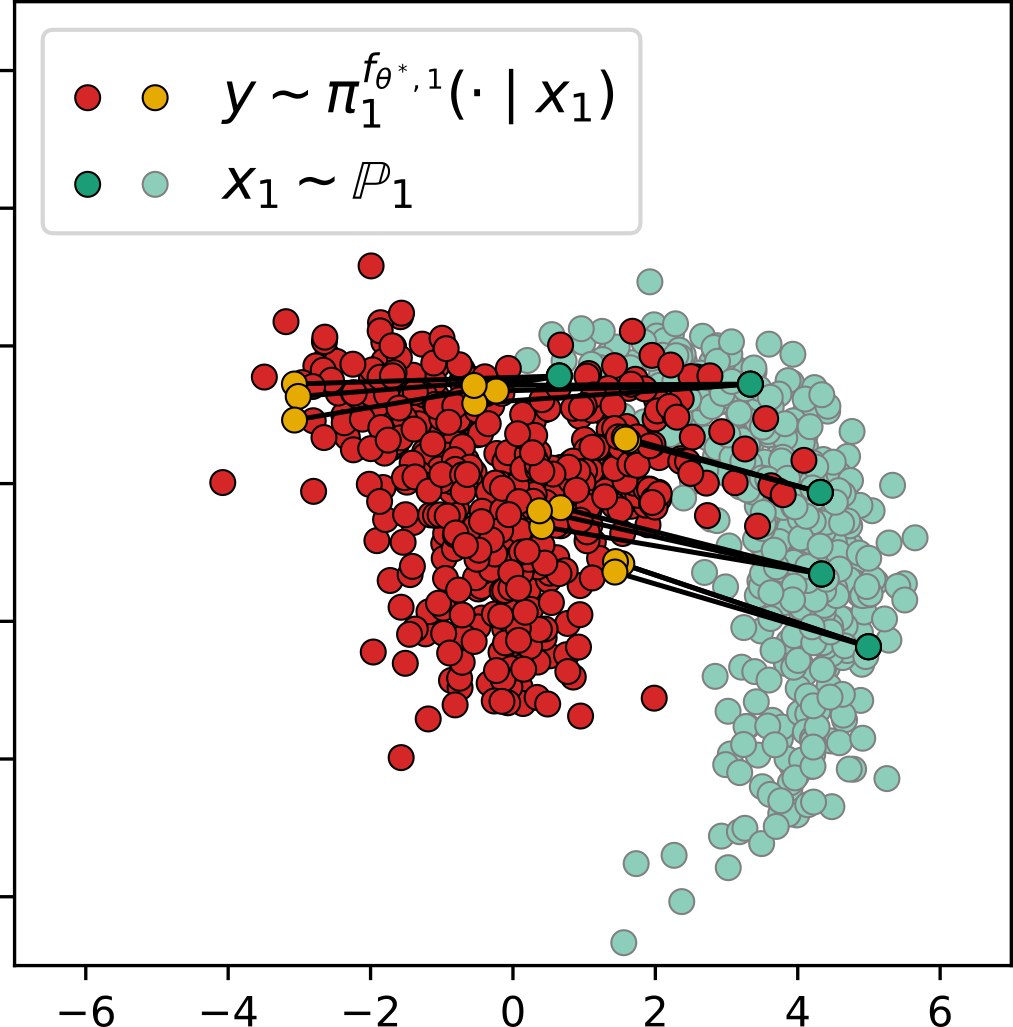}\caption{\centering\scriptsize \textbf{Our} EOT barycenter for $\ell^{2}$ cost (map from $\bbP_{1}$).}
\label{fig:twister-importance-l2-our}
\vspace{-1mm}
\end{subfigure}
\vspace{2mm}
\caption{\centering \textit{2D twister example. \textbf{Trained with importance sampling}}: The true barycenter of 3 comets vs. the one computed by our solver with $\eps=10^{-2}$. Two costs $c_{k}$ are considered: the twisted cost (\ref{fig:twister-importance-general-gt}, \ref{fig:twister-importance-general-our}) and $\ell^{2}$ (\ref{fig:twister-importance-l2-gt}, \ref{fig:twister-importance-l2-our}). We employ the \textit{simulation-free} importance sampling procedure for training.}
\vspace{-3mm}
\label{fig:twister-importance}
\end{figure}
In this section, we describe an alternative \textbf{simulation-free} training procedure for learning EOT barycenter distribution via our proposed methodology. The key challenge behind our approach is to estimate the gradient of the dual objective \eqref{prop_weak_dual_loss_func_grad}. To overcome the difficulty, in the main part of our manuscript, we utilize MCMC sampling from conditional distributions $\mu_{x_k}^{f_{\theta, k}}$ and estimate the loss with Monte-Carlo. Here we discuss a potential alternative approach based on \textbf{importance sampling} (IS) \cite{tokdar2010importance}. That is, we evaluate the internal integral over $\cY$ in \eqref{prop_weak_dual_loss_func_grad}:
\vspace{-2mm}\newline\begin{gather}
    \mathcal{I}(x_k) \defeq \int_{\cY} \left[ \pdv{\theta} f_{\theta, k}(y)\right] \dd \mu_{x_k}^{f_{\theta, k}}(y) \label{internal-y-integral}
\end{gather}\vspace{-2.5mm}\newline
with help of an auxiliary proposal (continuous) distribution accessible by samples with the known density $q(y)$. Let $Y^q = \{y_1^q, \dots, y_P^q\}$ be a sample from $q(y)$. Define the weights:
\begin{gather*}
    \omega_k(x_k, y_p^q) \defeq \exp\bigg(\frac{f_{\theta, k}(y_p^q) - c(x_k, y_p^q)}{\epsilon}\bigg) q(y_p^q).
\end{gather*}
Then \eqref{internal-y-integral} permits the following stochastic estimate:
\vspace{-2mm}\newline\begin{gather}
    \mathcal{I}(x_k) \approx \frac{\sum\nolimits_{p = 1}^{P}\left[ \pdv{\theta} f_{\theta, k}(y)\right] \omega_k(x_k, y_p^q)}{\sum\nolimits_{p = 1}^{P} \omega_k(x_k, y_p^q)}.
\end{gather}\vspace{-2.5mm}

\textbf{Experimental illustration.} To demonstrate the applicability of IS-based training procedure to our barycenter setting, we conduct the experiment following our \textbf{2D Twister} setup, see \S\ref{sec-exp-toy}. We employ zero-mean $16 I$-covariance Gaussian distribution as $q$ and pick the batch size $P = 1024$. Our results are shown in Figure \ref{fig:twister-importance}. As we can see, the alternative training procedure yields similar results as Figure\ref{fig:twister} but converges faster ($\approx 1$ min. VS $\approx 18$ min. of the original MCMC-based training).

\textbf{Concluding remarks.} We note that IS-based methods requires accurate selection of the proposal distribution $q$ to reduce the variance of the estimator \cite{tokdar2010importance}. It may be challenging in real-world scenarios. We leave the detailed study of more advanced IS approaches in the context of energy-based models and their applicability to our EOT barycentric setup to follow-up research.

\newpage
\section*{NeurIPS Paper Checklist}

\begin{enumerate}

\item {\bf Claims}
    \item[] Question: Do the main claims made in the abstract and introduction accurately reflect the paper's contributions and scope?
    \item[] Answer: \answerYes{} %
    \item[] Justification: In the Introduction section, we completely describe our contributions. For every contribution, we provide a link to the section about it.

\item {\bf Limitations}
    \item[] Question: Does the paper discuss the limitations of the work performed by the authors?
    \item[] Answer: \answerYes{}%
    \item[] Justification: We discuss limitations in Section~\ref{sec-limitations-broad-impact}.

\item {\bf Theory Assumptions and Proofs}
    \item[] Question: For each theoretical result, does the paper provide the full set of assumptions and a complete (and correct) proof?
    \item[] Answer: \answerYes{} %
    \item[] Justification: Proofs are provided in Appendix~\ref{sec-profs}.

    \item {\bf Experimental Result Reproducibility}
    \item[] Question: Does the paper fully disclose all the information needed to reproduce the main experimental results of the paper to the extent that it affects the main claims and/or conclusions of the paper (regardless of whether the code and data are provided or not)?
    \item[] Answer: \answerYes{} %
    \item[] Justification: Experimental details are discussed in Appendix~\ref{sec-exp-details}. Code for the experiments is available at publicly accessible github. All the datasets are available in public.

\item {\bf Open access to data and code}
    \item[] Question: Does the paper provide open access to the data and code, with sufficient instructions to faithfully reproduce the main experimental results, as described in supplemental material?
    \item[] Answer: \answerYes{}
    \item[] Justification:Code is publicly available at a github repository. Experimental details are provided in Appendix \ref{sec-exp-details}. All the datasets are publicly available.

\item {\bf Experimental Setting/Details}
    \item[] Question: Does the paper specify all the training and test details (e.g., data splits, hyperparameters, how they were chosen, type of optimizer, etc.) necessary to understand the results?
    \item[] Answer: \answerYes{} %
    \item[] Justification: Experimental details are discussed in Appendix~\ref{sec-exp-details}; the hyperparameters and peculiarities are hard-coded in our publicly available source code. 

\item {\bf Experiment Statistical Significance}
    \item[] Question: Does the paper report error bars suitably and correctly defined or other appropriate information about the statistical significance of the experiments?
    \item[] Answer: \answerNo{} %
    \item[] Justification: Not all of the experiments are conducted following the statistical significance protocol due to computational restrictions.

\item {\bf Experiments Compute Resources}
    \item[] Question: For each experiment, does the paper provide sufficient information on the computer resources (type of compute workers, memory, time of execution) needed to reproduce the experiments?
    \item[] Answer: \answerYes{} %
    \item[] Justification: In Appendix~\ref{sec-exp-details}, we mention time and resources. 
    
\item {\bf Code Of Ethics}
    \item[] Question: Does the research conducted in the paper conform, in every respect, with the NeurIPS Code of Ethics \url{https://neurips.cc/public/EthicsGuidelines}?
    \item[] Answer: \answerYes{} %
    \item[] Justification: Research conforms with NeurIPS Code of Ethics.

\item {\bf Broader Impacts}
    \item[] Question: Does the paper discuss both potential positive societal impacts and negative societal impacts of the work performed?
    \item[] Answer: \answerYes{} %
    \item[] Justification: We discuss broader impact in Appendix~\ref{sec-limitations-broad-impact}.
    
\item {\bf Safeguards}
    \item[] Question: Does the paper describe safeguards that have been put in place for responsible release of data or models that have a high risk for misuse (e.g., pretrained language models, image generators, or scraped datasets)?
    \item[] Answer: \answerNA{} %
    \item[] Justification: The research does not need special safeguards.

\item {\bf Licenses for existing assets}
    \item[] Question: Are the creators or original owners of assets (e.g., code, data, models), used in the paper, properly credited and are the license and terms of use explicitly mentioned and properly respected?
    \item[] Answer: \answerYes{} %
    \item[] Justification: We cite each used assets.  

\item {\bf New Assets}
    \item[] Question: Are new assets introduced in the paper well documented and is the documentation provided alongside the assets?
    \item[] Answer: \answerYes{} %
    \item[] Justification: New code is available at public github repository, the code is self-documented and follows our experimental protocol from the manuscript. The license is MIT. 

\item {\bf Crowdsourcing and Research with Human Subjects}
    \item[] Question: For crowdsourcing experiments and research with human subjects, does the paper include the full text of instructions given to participants and screenshots, if applicable, as well as details about compensation (if any)? 
    \item[] Answer: \answerNA{} %
    \item[] Justification: The research does not engage with Crowdsourcing or Human subjects.

\item {\bf Institutional Review Board (IRB) Approvals or Equivalent for Research with Human Subjects}
    \item[] Question: Does the paper describe potential risks incurred by study participants, whether such risks were disclosed to the subjects, and whether Institutional Review Board (IRB) approvals (or an equivalent approval/review based on the requirements of your country or institution) were obtained?
    \item[] Answer: \answerNA{} %
    \item[] Justification: The research does not engage with Crowdsourcing or Human subjects.

\end{enumerate}

\end{document}